\newcommand{\figref}[1]{Fig.~\ref{#1}}
\newcommand{\tabref}[1]{Tab.~\ref{#1}}
\newcommand{\secref}[1]{Sec.~\ref{#1}}
\newcommand{\AlgRef}[1]{Algo.~\ref{#1}}
\newcommand{\model}{\mbox{\textsc{Prism}}}
\title{\model: A Rich Class of Parameterized Submodular Information Measures for Guided Subset Selection}
\author{
    %Authors
    % All authors must be in the same font size and format.
    Written by AAAI Press Staff\textsuperscript{\rm 1}\thanks{With help from the AAAI Publications Committee.}\\
    AAAI Style Contributions by Pater Patel Schneider,
    Sunil Issar,\\
    J. Scott Penberthy,
    George Ferguson,
    Hans Guesgen,
    Francisco Cruz\equalcontrib,
    Marc Pujol-Gonzalez\equalcontrib
}
\title{My Publication Title --- Single Author}
\author {
    Author Name
}
\author {
    % Authors
    Suraj Kothawade\textsuperscript{\rm 1},
    Vishal Kaushal \textsuperscript{\rm 2},
    Ganesh Ramakrishnan \textsuperscript{\rm 2},
    Jeff Bilmes \textsuperscript{\rm 3},
    Rishabh Iyer \textsuperscript{\rm 1,2}
}
\begin{document}

\maketitle
\doparttoc
\faketableofcontents
\begin{abstract}
  With ever-increasing dataset sizes, subset selection techniques are becoming increasingly important for a plethora of tasks. It is often necessary to \emph{guide} the subset selection to achieve certain desiderata, which includes \emph{focusing or targeting} certain data points, while \emph{avoiding} others. Examples of such problems include: i) \emph{targeted learning}, where the goal is to find subsets with rare classes or rare attributes on which the model is underperforming, and ii) \emph{guided summarization}, where data ({\em e.g.}, image collection, text, document or video) is summarized for quicker human consumption with specific additional user intent.  
  %Often, the subsets need to possess specific characteristics to be effective for the tasks at hand. For example, a supervised classification model's performance on a particular class or a particular slice of data can be improved at a given additional labeling cost by adding a small subset of unlabeled points matching that class or slice of data (target) to the training set. We call such subsets \emph{targeted subsets}. A more nuanced example is that of summarization where data (e.g., image collections, text or videos) is summarized for quicker human consumption with specific additional user intent (query-focused summarization). The target here need not be restricted to a query but can assume different semantics. For example, update-summarization requires a summary to be different from an existing summary. The existing summary thus serves as the target and the characteristic desired of the targeted subset is that it should be \emph{different} from the target. 
  Motivated by such applications, we present \model{}, a rich class of \textbf{P}a\textbf{R}ameter\textbf{I}zed \textbf{S}ubmodular information \textbf{M}easures. %, that can be used to find such \emph{targeted} subsets. 
  %\model{} extends submodular mutual information and conditional gain functions to allow a target set to be different from the ground set from which subsets are sought. 
Through novel functions and their parameterizations, \model{} offers a variety of modeling capabilities that enable a trade-off between desired qualities of a subset like diversity or representation and similarity/dissimilarity with a set of data points. 
% such as, trading off between target relevance or the strictness of the dissimilarity from the set we want to avoid on one hand, and diversity or representation on the other. 
% This makes it possible to cater to different characteristics desired of the targeted subsets.
We demonstrate how \model{} can be applied to the two real-world problems mentioned above, which require guided subset selection. In doing so, we show that \model{} interestingly generalizes some past work, therein reinforcing its broad utility. %\model{} offers a unified approach to joint modelling of different flavors of the target and
Through extensive experiments on diverse datasets, we demonstrate the superiority of \model{} over the state-of-the-art in targeted learning and in guided image-collection summarization. \model\ is available as a part of the \textsc{Submodlib} (\url{https://github.com/decile-team/submodlib}) and \textsc{Trust} (\url{https://github.com/decile-team/trust}) toolkits.\looseness-1
\end{abstract}

\vspace{-3ex}
\section{Introduction}
\vspace{-0.5ex}

%\todo{Things that needed tending to are marked as this.} \response{Once they are tended to, respond like this and then once it is agreeably resolved, we can comment out these macros.}

%\todo{See the file comments.txt in the current directory for some comments from Jeff.}

Recent times have seen explosive growth in data across several modalities, including text, images, and videos. This has given rise to the need for finding techniques for selecting effective smaller data subsets with specific characteristics for a variety of down-stream tasks. Often, we would like to \emph{guide} the data selection to either \emph{target} or \emph{avoid} a certain set of data slices. One application is, what we call, \emph{targeted learning}, where the goal is to select data points similar to data slices on which the model is currently performing poorly. These slices are data points that either belong to rare classes or have common rare attributes ({\em e.g.}, color, background, {\em etc.}). An example of such a scenario is shown in Fig.~\ref{fig:motivating-applications}(a), where a self-driving car model struggles in detecting ``cars in a dark background`` because of a lack of such images in the training set. The targeted learning problem is to augment the training dataset with more of such rare images, with an aim to improve model performance. Another example is detecting cancers in biomedical imaging datasets, where the number of cancerous images are often a small fraction of the non-cancerous images. 
%The guided selection problem here is as follows: given a few examples of these rare classes or slices (called the target), we would like to select unlabeled data points similar to the target to augment the training set. 
%Fig.~\ref{fig:motivating-applications}(a) shows the detailed workflow. %One example is efficient cost-effective training of machine learning models, where we need to select samples that are most useful for model training. 
%In practice, there is often a distribution shift between training and testing data. A model's performance on a desired target can be improved (under given additional labeling costs) by augmenting the training data with samples best matching the target distribution from a large pool of unlabeled data. %Relative to training on all of the targeted data, training on such smaller subsets often entails significant speedups and labeling time/cost reductions without sacrificing much accuracy. 
%One way to achieve this is by granting access to a clean validation set matching the target set distribution and using it as a target. Alternatively, the target set could be a critical slice of the data ({\em e.g.}, indoor images of people in a dark background, or example images from specific classes that a user might particularly care about). In such a case, the goal can be to improve the model's performance on the target without sacrificing the overall accuracy and with minimum additional labeling costs ({\it c.f.}, Fig.~\ref{fig:motivating-applications}(a)).

%Increased from 0.43 to 0.45
\begin{figure}[h]
\centering
\begin{subfigure}[b]{0.42\textwidth}
   \includegraphics[width=1\linewidth]{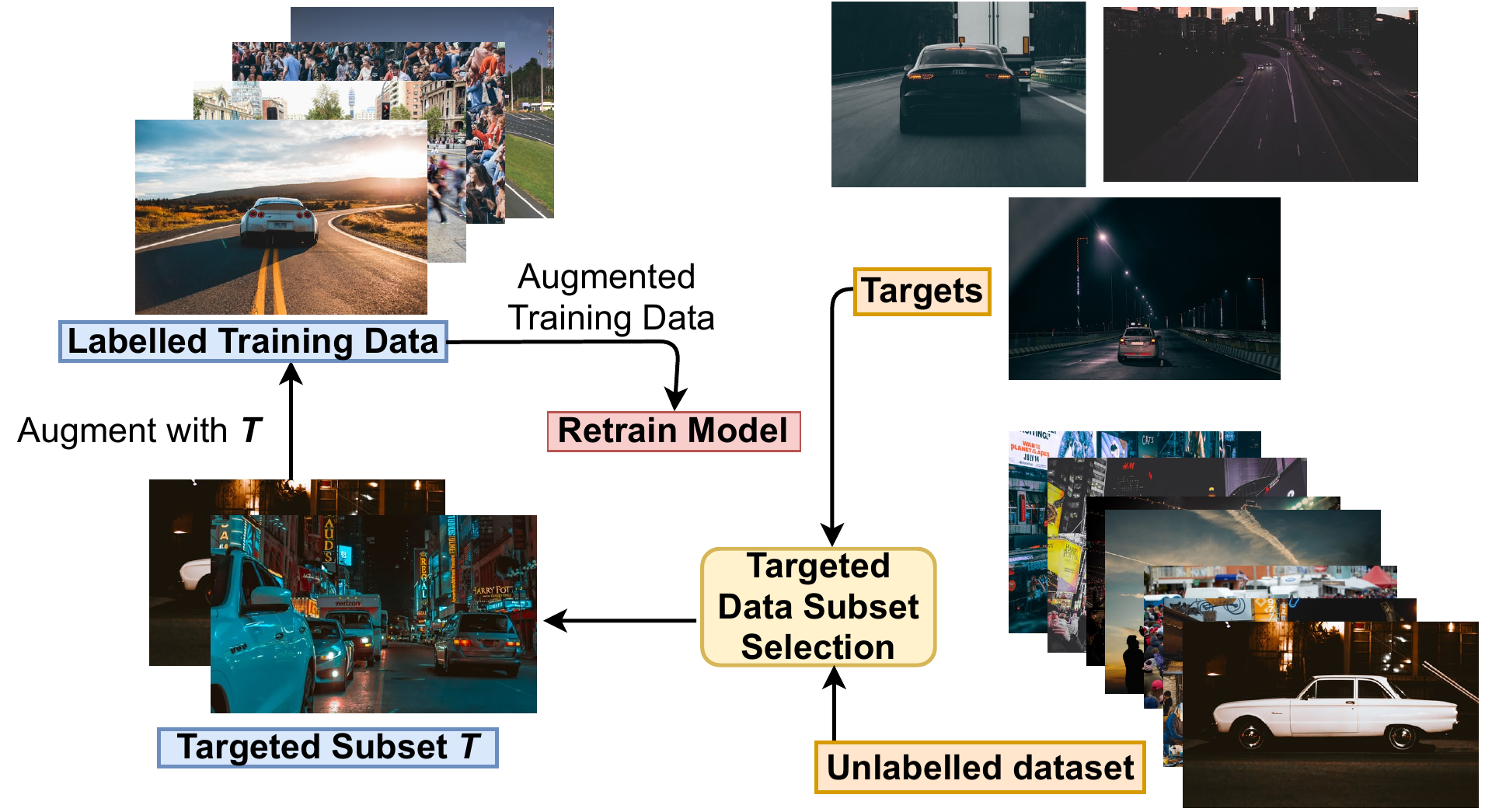}
   \caption{}
   \label{fig:mot-tss} 
\end{subfigure}
\begin{subfigure}[b]{0.42\textwidth}
   \includegraphics[width=1\linewidth]{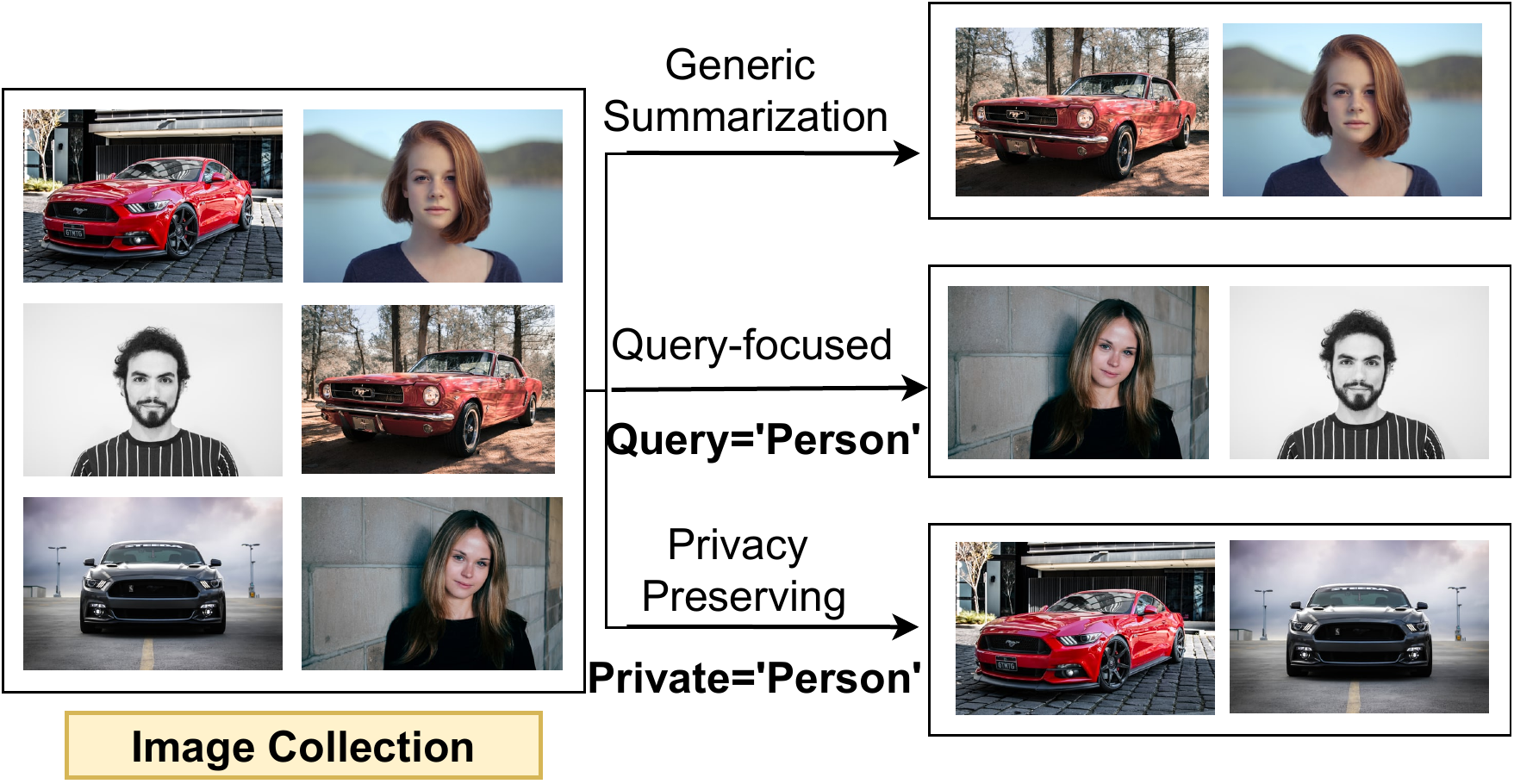}
   \caption{}
   \label{fig:mot-tsum}
\end{subfigure}
\caption{Applications of guided subset selection. (a) \emph{Targeted learning:} improving a model's performance on night images (target), which are under-represented in the training data. This is achieved by augmenting it with a subset matching the target. (b) \emph{Guided summarization:} finding a summary similar to a query set or a summary dissimilar to a private set.}
\vspace{-2ex}
\label{fig:motivating-applications}
\end{figure}

Another application comes from the summarization task, where an image collection, a video, or a text document is summarized for quicker human consumption by eliminating redundancy, while preserving the main content. While a number of applications require \emph{generic} summarization ({\em i.e.}, simply picking a representative and diverse subset of the massive dataset), it is often important to capture certain user intent in summarization. We call this \emph{guided summarization}. Examples of guided summarization include: (i) \emph{query-focused summarization}~\cite{sharghi2016query, xiao2020convolutional}, where a summary similar to a specific query is desired, %(subset matching the target)
and (ii) \emph{%query-irrelevant and 
privacy-preserving summarization}, where a summary dissimilar to %not close to a given query or completely different from 
a given private set of data points is desired (say, for privacy issues). See Fig.~\ref{fig:motivating-applications}(b) for a pictorial illustration.\looseness-1 

%In this work we propose \model{} - a novel framework based on a rich class of \textbf{P}a\textbf{R}ameter\textbf{I}zed \textbf{S}ubmodular information \textbf{M}easures that can be used for targeted subset selection tasks.\looseness-1

\subsection{Our Contributions}
\label{sec:contributions}
\textbf{\model{} Framework:} We define \model{} through different instantiations and parameterizations of various submodular information measures (\secref{sec:model}). These allow for modeling a spectrum of semantics required for guided subset selection, like relevance to a query set, irrelevance to a private set, and diversity among selected data points. We study the effect of parameter trade-off among these different semantics and present interesting insights.

\noindent \textbf{\model{} for Targeted Learning:} We present a novel algorithm (\secref{sec:tss}, \AlgRef{algo:tss}) to apply \model{} for targeted learning, which aims to improve a model's performance on
rare slices of data. Specifically, we show that submodular information measures are very effective in finding the examples from the rare classes in a large unlabeled set (akin to finding a needle in a haystack). On several image classification tasks, \model{} obtains $\approx$ 20-30\% gain in accuracy of rare classes ($\approx$ 12\% more than existing approaches) by just adding a few additional labeled points from the rare classes. Furthermore, we show that \model{} is $20\times$ to $50\times$ more label-efficient compared to random sampling, and $2\times$ to $4\times$ more label-efficient compared to existing approaches (see \secref{sec:exp-tss}). We also show that ~\AlgRef{algo:tss} generalizes some existing approaches for data subset selection, reinforcing its utility (\secref{sec:connections}). \looseness-1

%a desired class by adding a targeted subset to the training data (see \figref{fig:motivating-applications}(a)). Our method significantly outperforms other techniques for image classification tasks on CIFAR-10~\cite{krizhevsky2009learning}, MNIST~\cite{lecun1998gradient}, SVHN~\cite{netzer2011reading} and Pneumonia-MNIST~\cite{yang2021medmnist, kermany2018identifying} on classes of interest at a given additional labeling cost. Without compromising on the overall accuracy across classes, \model{} obtains $\approx$ 20-30\% gain over the model's performance before re-training with the additional targeted subset; $\approx$ 12\% more than other methods (see \secref{sec:exp-tss}). We show that ~\AlgRef{algo:tss} generalizes some existing approaches for data subset selection~\cite{killamsetty2020glister, mirzasoleiman2020coresets, killamsetty2021grad} and is thus widely applicable and useful (\secref{sec:tss-connections}).

\noindent \textbf{\model{} for Guided Summarization.} We propose a learning framework for guided summarization using \model{} (\secref{sec:tsum}). We show that \model{} offers a \emph{unified} treatment to the different flavors of guided summarization (query-focused and privacy-preserving) and generalizes some existing approaches to summarization,
%~\cite{sharghi2016query, sharghi2017query, vasudevan2017query, lin2012submodularity, li2012multi, lin2011class, lin2004rouge}, 
again reinforcing its utility. We show that %demonstrate its effectiveness therein. %(\secref{sec:modelsum}) and show that \model{} offers a \textbf{unified approach to joint modelling of different flavors of the target}. %and show that \model{} offers a unified approach to the different semantics of \emph{target} in query-focused summarization, topic-irrelevant or privacy-preserving summarization and update summarization and 
%We demonstrate in \secref{subsec:sum-exp} that  %, and has interesting connections to a number of existing approaches for targeted data selection (\secref{sec:gen}).
it outperforms other existing approaches on a real-world image collections dataset (\secref{sec:exp-tsum}). \looseness-1

\subsection{Related Work}
\textbf{Submodularity and Submodular Information Measures: } Submodularity~\cite{fujishige2005submodular} is a rich yet tractable subfield of non-linear combinatorial optimization~\cite{krause2014submodular}.% and nice connections to convexity and concavity~\cite{bach2011learning,lovasz1983submodular,iyer2015polyhedral}. %Proposed \model{} 
We provide novel formulations of the recently introduced class of submodular information measures~\cite{levin2020online,iyer2021submodular} for guided data subset selection.%, which are generalizations of information theoretic measures (like entropy, information gain, and mutual information) to submodular functions. 

\noindent \textbf{Data Subset Selection, Coresets, and Active Learning:} 
A number of papers have studied data subset selection in different applications and settings. Several recent papers have studied data subset selection for speeding up training. These include approaches involving submodularity~\cite{wei2015submodularity, kaushal2019learning}, gradient coresets~\cite{mirzasoleiman2020coresets,killamsetty2021grad} and bi-level based coresets~\cite{killamsetty2020glister}. Another application is active learning, where the goal is to select and label a subset of unlabeled data points to improve model performance~\cite{settles2009active}. Several recent approaches which combine notions of diversity and uncertainty have become popular~\cite{wei2015submodularity,sener2018active,ash2020deep}. One such state-of-the-art approach is \textsc{Badge}~\cite{ash2020deep}, which samples points that have diverse hypothesized gradients. Most of these paradigms have been studied in the setting of generic data subset selection, and are ineffective when it comes to guided subsets. Some recent works like \textsc{Grad-match}~\cite{killamsetty2021grad} and \textsc{Glister}~\cite{killamsetty2020glister} select subsets based on a held out validation set, which can be a rare slice of data. Similarly, \cite{kirchhoff2014-submodmt} compute a targeted subset of training data in the spirit of transductive learning for machine translation. %We apply \model{} to find targeted subsets of unlabeled data for improving a model's performance on the targeted slice, and in fact show that it generalizes many existing targeted data selection approaches.

\noindent \textbf{Summarization: } A number of instances of summarization have been studied in the past, including image collection summarization~\cite{celis2020implicit, ozkose2019diverse, singh2019image, tschiatschek2014learning}, text/document summarization~\cite{lin2012learning,chali2017towards,yao2017recent}, and video summarization~\cite{Kaushal2019DemystifyingMV,Kaushal2019AFT,Gygli2015VideoSB,ji2019video}. While most of these works have focused on generic summarization, some have also studied query-focused video summarization~\cite{sharghi2016query,sharghi2017query,vasudevan2017query,xiao2020convolutional,jiang2019hierarchical}, and query-focused document summarization~\cite{lin2011class,li2012multi}.
% and update-summarization of documents ~\cite{dang2008overview,delort2012dualsum,li2015improving}.
To the best of our knowledge, \model{} is the first attempt to offer a unified treatment to the different flavors of summarization.

%\textbf{Properties of CG, MI and CMI:} CG, MI, and CMI are non-negative and monotone in one argument with the other fixed~\cite{levin2020online,iyer2021submodular}. CMI and MI are not necessarily submodular in one argument (with the others fixed)~\cite{krause2008near, iyer2021submodular}. However, several of the instantiations we define below turn out to be submodular.

\vspace{-1ex}
\section{The \model{} Framework}
\label{sec:model}
% \vspace{-1ex}
\subsection{Preliminaries}
\textbf{Submodular functions:} Let $\Vcal = \{1, 2, 3,...,n \}$ denote the \emph{ground-set} and $f$ denote a set function $f:2^{\Vcal} \xrightarrow{} \Re$. %We say that $f$ is {normalized} if $f(\emptyset) = 0$ and $f$ is {sub-additive} if $f(S) + f(\Tcal) \geq f(S \cup T)$, holds for all $S,T \subseteq \Omega$.  
The function $f$ is \emph{submodular} if it satisfies the diminishing marginal returns property; namely $f(j | \Xcal) \geq f(j | \Ycal), \forall \Xcal \subseteq \Ycal \subseteq \Vcal, j \notin \Ycal$~\cite{fujishige2005submodular}. %for all $S, T \subseteq \Vcal$, it holds that $f(S) + f(\Tcal) \geq f(S \cup T) + f(S \cap \Tcal)$. An identical characterization of submodularity is  Also, $f$ is supermodular if $-f$ is submodular, and $f$ is said to be {monotone} if $f(X) \leq f(\Ycal)$, $\forall X\subseteq Y\subseteq \Vcal$ (equivalently, $f(j|\Acal) \geq 0$ for all $j \notin \Acal$ and $\Acal \subseteq \Vcal$). 
Submodularity (along with monotonicity) ensures that a greedy algorithm achieves a $1 - 1/e$ approximation when $f$ is maximized~\cite{nemhauser1978analysis}. 
%Facility location (FL), set cover (SC), probabilistic set cover (PSC), graph cut (GC), log determinants (LogDet), {\em etc.} are some examples of submodular functions. %Due to close connections between submodularity and entropy, submodular functions can also be viewed as \emph{information functions}~\cite{zhang1998characterization}.
\\
\textbf{Submodular Conditional Gain (CG):} Given sets $\Acal, \Pcal \subseteq \Vcal$, the CG, $f(\Acal | \Pcal)$, is the gain in function value by adding $\Acal$ to $\Pcal$. Thus $f(\Acal | \Pcal) = f(\Acal \cup \Pcal) - f(\Pcal)$. Intuitively, $f(\Acal|\Pcal)$ measures how different $\Acal$ is from $\Pcal$, where $\Pcal$ is the \emph{conditioning set} or the \emph{private set}.\\
\textbf{Submodular Mutual Information (MI):} Given sets $\Acal, \Qcal \subseteq \Vcal$, the MI~\cite{levin2020online,iyer2021submodular} is defined as $I_f(\Acal; \Qcal) = f(\Acal) + f(\Qcal) - f(\Acal \cup \Qcal)$. Intuitively, this measures the similarity between $\Qcal$ and $\Acal$ where $\Qcal$ is the query set.\\
\textbf{Submodular Conditional Mutual Information (CMI):}  CMI is defined using CG and MI as $I_f(\Acal; \Qcal | \Pcal) %= f(\Acal | \Ccal ) + f(\Bcal | \Ccal) - f(\Acal \cup \Bcal | \Ccal)$.
= f(\Acal \cup \Pcal) + f(\Qcal \cup \Pcal) - f(\Acal \cup \Qcal \cup \Pcal) - f(\Pcal)$. 
Intuitively, CMI jointly models the mutual similarity between $\Acal$ and $\Qcal$ and their collective dissimilarity from $\Pcal$. 

\noindent \textbf{Properties: } CG, MI, and CMI are non-negative and monotone in one argument with the other fixed~\cite{levin2020online,iyer2021submodular}. CMI and MI are not necessarily submodular in one argument (with the others fixed)~\cite{iyer2021submodular}. However, several of the instantiations we define below turn out to be submodular. With this background, we present our unique and novel formulations, leading to \model{}. 

\subsection{Guidance from an Auxiliary Set}
We formulate the above submodular information measures to handle the case when the \emph{guidance} can come from an auxiliary set $\Vcal^{\prime}$ different from the ground set $\Vcal$ -- a requirement common in several guided subset selection tasks. %For targeted data subset selection, $\Vcal$ is the source set of data instances and the target is a subset of data points (validation set or the specific set of examples of interest). In the case of targeted summarization, $\Vcal$ is the set of data points that the user wants to summarize (say images or video frames or video shots or sentences) and target is the query set (for query-focused summarization), private set (for topic-irrelevant or privacy-preserving summarization) or conditioning set (for update-summarization). 
Let $\Omega  = \Vcal \cup \Vcal^{\prime}$. We define a set function $f: 2^{\Omega} \rightarrow \Re$. Although $f$ is defined on $\Omega$, the discrete optimization problem will only be defined on subsets $\Acal \subseteq \Vcal$. To find an optimal subset (i) given a query set $\Qcal \subseteq \Vcal^{\prime}$, we can define $g_{\Qcal}(\Acal) = I_f(\Acal; \Qcal)$, $\Acal \subseteq \Vcal$ and maximize the same; (ii) given a private set $\Pcal \subseteq \Vcal^{\prime}$, we can define $h_{\Pcal}(\Acal) = f(\Acal | \Pcal )$, $\Acal \subseteq \Vcal$, as the function to be maximized. %As we shall see below, these offer a rich class of models for both motivating applications. 

%Comment our restricted submodularity, MOVE TO APPENDIX!!

\subsection{Restricted Submodularity to Enable a Richer Class of MI and CG Functions} \label{sec:gmi} While submodular functions are expressive, many natural choices are not submodular everywhere. We do not need $f$ to be submodular everywhere on $\Omega$, since the sets we are optimizing on, are subsets of $\Vcal$. Instead of requiring the submodular inequality to hold for all pairs of sets $(\Xcal, \Ycal) \in 2^{\Omega}\times2^{\Omega}$, we can consider only subsets  of this power set pivoting on $\Vcal \subseteq \Omega$. 
In particular, define a subset $\Ccal \subseteq 2^{\Omega}$. Then \emph{restricted submodularity} on $\Ccal$ satisfies $f(\Xcal) + f(\Ycal) \geq f(\Xcal \cup \Ycal) + f(\Xcal \cap \Ycal), \forall (\Xcal, \Ycal) \in \Ccal$. Instances of restricted submodularity in the form of intersecting and crossing submodular functions have been considered in the past~\cite{fujishige2005submodular}. We consider the following form of restricted submodularity. % to define Generalized Submodular Mutual Information functions (GMI). %Examples of restricted submodularity include (i) intersecting submodular functions, where $\mathcal C$ consists of sets $\Acal, B$ such that $A \cap \Bcal \neq \emptyset$ and (ii) crossing submodularity, where $\mathcal C$ consists of sets $(\Acal, \Bcal) \in 2^{2\Omega}: A \cap \Bcal \neq \emptyset$ and $\Acal \cup \Bcal \neq \Omega$. To define GMI, we consider restricted submodularity as follows. 
Given sets $\Vcal$ and $\Vcal^{\prime}$ as above, define $\Ccal(\Vcal, \Vcal^{\prime}) \subseteq 2^{\Omega}$ to be such that the sets $(\Xcal, \Ycal) \in \Ccal(\Vcal, \Vcal^{\prime})$ satisfy either of the following conditions: i) $\Xcal \subseteq \Vcal$ or $\Xcal \subseteq \Vcal^{\prime}$ and $\Ycal$ is \emph{any} set, or ii) $\Xcal$ is \emph{any} set and $\Ycal \subseteq \Vcal$ or $\Ycal \subseteq \Vcal^{\prime}$. We call the MI of a restricted submodular function as Generalized Mutual Information function (GMI). We use this notion of GMI %allows us to instantiate submodular mutual information with a function $f$, which is not submodular everywhere on $\Omega$, but only on sub-lattices. For example, we use this 
to define Concave Over Modular (COM).
% and Query-Saturation (Q-SAT) functions.% which have interesting connections with past work (Section~\ref{sec:tsum}). 
The GMI function is non-negative and monotone (see Appendix~\ref{app:gsmi}). %We state the properties of GMI in the following Lemma and defer the proof to Appendix~\ref{app:gsmi}. %The generalized submodular mutual information on $\mathcal C(\Vcal, \Vcal^{\prime})$, which we denote as $I_f^{\mathcal C}(\Acal; \Bcal)$ satisfies non-negativity and monotonicity properties.

% \vspace{-1ex}
% \begin{lemma} \label{lemma:GMI}
% Given a restricted submodular function $f$ on $\Ccal(\Vcal, \Vcal^{\prime})$, $I_f(\Acal; \Bcal) \geq 0$ for $\Acal \subseteq \Vcal, \Bcal \subseteq \Vcal^{\prime}$. Also, $I_f(\Acal; \Bcal)$ is monotone in $\Acal \subseteq \Vcal$ for fixed $\Bcal \subseteq \Vcal^{\prime}$ (equivalently, $I_f(\Acal; \Bcal)$ is monotone in $\Bcal \subseteq \Vcal^{\prime}$ for fixed $\Acal \subseteq \Vcal$).
% \end{lemma}

\subsection{Instantiations \& Parameterizations in \model{}} In this section, we discuss the expressions for different instantiations of the above measures using different functions. We refer to them as \textbullet MI or \textbullet CG or \textbullet CMI where \textbullet\ is the submodular function using which the respective MI, CG or CMI measure is instantiated. 
%We propose new instantiations for LogDet, COM and Q-SAT. We formulate instantiations for Graph Cut (GC) and Facility Location (FL) for our setting of distinct summary space ($\Vcal$) and auxiliary space ($\Vcal^{\prime}$). We derive a new mutual information variant for FL which has interesting characteristics. 
While different submodular functions naturally model different characteristics such as representation, coverage, {\em etc.}~\cite{Kaushal2019DemystifyingMV,Kaushal2019AFT}, the instantiations presented here additionally model similarity and dissimilarity to query and private sets respectively. These instantiations have parameters $\lambda, \eta$ and/or $\nu$, that govern the interplay among different characteristics. In several instantiations, we invoke a similarity matrix $S$ where $S_{ij}$ measures the similarity between elements $i$ and $j$ of sets that will be correspondingly specified. \emph{The rich class of functions in \model{} thus helps model a broad spectrum of semantics.} The mathematical expressions for each function are summarized in \tabref{tab:SIM_inst}. Below, we provide further notations and intuitions for using these functions.
% The abbreviation for each such class, its expansion and its mathematical expression are summarized in Table~\ref{tab:instantiations} in the Appendix.\looseness-1

\begin{table*}[!htb]
    \caption{Instantiations of \model{}. Note that the functions formulate similarity with query set $Q$ and dissimilarity with private set $P$ which are the building blocks for targeted data subset selection.
    \vspace{-2ex}}
    % The parameter $\eta$ can be used to tune the degree of relevance of $\Acal$ to $\Qcal$ and $\nu$ to tune the degree of irrelevance of $\Acal$ to $\Pcal$.
    \label{tab:SIM_inst}
    \begin{subtable}{.45\linewidth}
      \centering
        \caption{Instantiations of MI functions}
        \label{tab:smi_inst}
        \begin{tabular}{|c|c|c|}
        \hline
        \textbf{MI} & \textbf{$I_f(\Acal;\Qcal)$} \\ \hline
        \scriptsize{\textsc{Flvmi}}             & \scriptsize{$\sum\limits_{i \in \Vcal}\min(\max\limits_{j \in \Acal}S_{ij}, \eta  \max\limits_{j \in \Qcal}S_{ij})$}                \\
        \scriptsize{\textsc{Flqmi}}             & \scriptsize{$\sum\limits_{i \in \Qcal} \max\limits_{j \in \Acal} S_{ij} + $ $ \eta \sum\limits_{i \in \Acal} \max\limits_{j \in \Qcal} S_{ij}$}                \\
        \scriptsize{\textsc{Gcmi}}              & \scriptsize{$2 \lambda \sum\limits_{i \in \Acal} \sum\limits_{j \in \Qcal} S_{ij}$}                \\
        \scriptsize{\textsc{Logdetmi}}          & \scriptsize{$\log\det(S_{\Acal}) -\log\det(S_{\Acal} - \eta^2 S_{\Acal,\Qcal}S_{\Qcal}^{-1}S_{\Acal,\Qcal}^T)$}           \\
        \scriptsize{COM}               & \scriptsize{$\eta \sum_{i \in \Acal} \psi(\sum_{j \in \Qcal}S_{ij}) + \sum_{j \in \Qcal} \psi(\sum_{i \in \Acal} S_{ij})$}
        % \scriptsize{$ \sum_{i \in \Acal} \psi(\sum_{j \in \Qcal}S_{ij}) + $} \\ & \scriptsize{$\sum_{j \in \Qcal} \psi(\sum_{i \in \Acal} S_{ij})$} 
        \\ \hline              
        \end{tabular}
    \end{subtable}%
    \begin{subtable}{.6\linewidth}
      \centering
        \caption{Instantiations of CG and CMI functions}
        \label{tab:scg_scmi_inst}
        \begin{tabular}{|c|c|}
        \hline
        \textbf{CG} & \textbf{$f(\Acal|\Pcal)$} \\ \hline
        \scriptsize{\textsc{Flcg}}       & \scriptsize{$\sum\limits_{i \in \Vcal} \max(\max\limits_{j \in \Acal} S_{ij}-$ $ \max\limits_{j \in \Pcal} S_{ij}, 0)$}                \\
        \scriptsize{\textsc{Logdetcg}}       & \scriptsize{$\log\det(S_{\Acal} - \nu^2 S_{\Acal, \Pcal}S_{\Pcal}^{-1}S_{\Acal, \Pcal}^T)$} \\
        \scriptsize{\textsc{Gccg}}   & \scriptsize{$f(\Acal) - 2 \lambda \nu \sum\limits_{i \in \Acal, j \in \Pcal} S_{ij}$}                \\ \hline
        \end{tabular}
        \vspace{0.5ex}
        
        \begin{tabular}{|c|c|}
        \hline
        \textbf{CMI} & \textbf{$I_f(\Acal;\Qcal|\Pcal)$} \\ \hline
        \scriptsize{\textsc{Flcmi}}       & \scriptsize{$\sum\limits_{i \in \Vcal} \max(\min(\max\limits_{j \in \Acal} S_{ij},$ $ \max\limits_{j \in \Qcal} S_{ij})$} \scriptsize{$-  \max\limits_{j \in \Pcal} S_{ij}, 0)$}                \\\scriptsize{\textsc{Logdetcmi}}   & \scriptsize{$\log \frac{\det(I - S_{\Pcal}^{-1}S_{\Pcal, \Qcal} S_{\Qcal}^{-1}S_{\Pcal, \Qcal}^T)}{\det(I - S_{\Acal \cup \Pcal}^{-1} S_{\Acal \cup \Pcal, Q} S_{\Qcal}^{-1} S_{\Acal \cup \Pcal, Q}^T)}$} \\ \hline               
        \end{tabular}
    \end{subtable} 
\end{table*}

\noindent \textbf{Log Determinant (LogDet):} %We state the following lemma and defer its proof to the Appendix~\ref{app:logdet}).
%We refer to MI, CG and CMI applied to the LogDet function as \textsc{Logdetmi}, \textsc{Logdetcg} and \textsc{Logdetcmi} respectively. % and present their expressions in the fourth row of Table~\ref{tab:instantiations}. 
Let $S_{\Acal, \Qcal}$ be the cross-similarity matrix between the items in sets $\Acal$ and $\Qcal$. 
% Also, denote $S_{\Acal\Bcal} = S_{\Acal \cup \Bcal}$. %\todo{Discussed with Vishal the ambiguity in reading resuling from  A, B  (which is $A X B$ matrix) and
%AB (which is $A \cup B X A \cup B$ matrix) Have suggested we use $S_x$ for one and $S^x$ for the other} 
%Note the parameters $\eta$ and $\nu$ which we explain next. 
%\todo{Some dangling sentences such as this. I have tried addressing few, but I have no idea what to do with few such cases. Still I have tried rephrasing this} 
We construct a similarity matrix $S^{\eta,\nu}$ (on a base matrix $S$) in such a way that the cross-similarity between $\Acal$ and $\Qcal$ is multiplied by $\eta$ ({\em i.e.}, $S^{\eta,\nu}_{\Acal,\Qcal} = \eta S_{\Acal,\Qcal}$) to control the trade-off between query-relevance and diversity. Similarly, the cross-similarity between $\Acal$ and $\Pcal$ by $\nu$ ({\em i.e.}, $S^{\eta,\nu}_{\Acal,\Pcal} = \nu S_{\Acal,\Pcal}$) to control the strictness of privacy constraints. Higher values of $\nu$ ensure stricter privacy constraints, such as  in the context of privacy-preserving summarization, by tightening the extent of dissimilarity of the subset from the private set. Given the standard form of LogDet as $f(\Acal) = \log\det(S^{\eta,\nu}_{\Acal})$, we provide the \model{} expressions in \tabref{tab:SIM_inst}. For simplicity of notation, CMI is presented with $\nu = \eta = 1$. We defer the derivation and the general expression for CMI to the Appendix~\ref{app:logdet}.\looseness-1  %in the Lemma below, and defer the general expression and proof to 
%, and $S_{\Acal \Pcal\nu}$ as the similarity matrix obtained by multiplying $\nu$ to the cross similarity entries. Similarly, denote $S_{\Acal \Pcal\nu, Q\eta}$ as the cross similarity obtained by multiplying $\nu$ to the cross similarity between $\Acal$ and $\Pcal$ and 

%COMMENTED BECAUSE EXPRESSION IS NOW IN TABLE
% \vspace{-1ex}
% \begin{lemma} \label{lemma:logdet}
% Using a similarity matrix defined above and with $f(\Acal) = \log\det(S^{\eta,\nu}_{\Acal})$, we have: $I_f(\Acal; \Qcal) = \log\det(S_{\Acal}) -\log\det(S_{\Acal} - \eta^2 S_{\Acal,\Qcal}S_{\Qcal}^{-1}S_{\Acal,\Qcal}^T)$, and $f(\Acal | \Pcal ) = \log\det(S_{\Acal} - \nu^2 S_{\Acal, \Pcal}S_{\Pcal}^{-1}S_{\Acal, \Pcal}^T)$. Similarly, $I_f(\Acal; \Qcal |\Pcal ) = \log \frac{\det(I - S_{\Pcal}^{-1} S_{\Pcal, \Qcal} S_{\Qcal}^{-1} S_{\Pcal, \Qcal}^T)}{\det(I - S_{\Acal \Pcal}^{-1} S_{\Acal \Pcal, \Qcal} S_{\Qcal}^{-1} S_{\Acal \Pcal, \Qcal}^T)}$
% \end{lemma}

%\vspace{-0.5ex}
\noindent \textbf{Facility Location (FL):} We introduce two variants of the MI functions for the FL function which is defined as: $f(\Acal) = \sum_{i \in \Omega} \max_{j \in \Acal} S_{ij}$. The first variant is defined over $\Vcal$ (FL\textbf{V}MI) ~\cite{iyer2021submodular}, in \tabref{tab:SIM_inst}(a). % and presented in the fifth row of Table~\ref{tab:instantiations}. 
We derive another variant defined over $\Qcal$ (FL\textbf{Q}MI) which considers only cross-similarities between data points and the target. This MI expression has interesting characteristics different from those of \textsc{Flvmi}. In particular, whereas \textsc{Flvmi} gets saturated ({\em i.e.}, once the query is satisfied, there is no gain in picking another query-relevant data point), \textsc{Flqmi} just models the pairwise similarities of target to data points and vice versa. Moreover, \textsc{Flqmi} only requires a $\Qcal \times \Vcal$ kernel, which makes it very efficient to optimize. We provide the expressions in \tabref{tab:SIM_inst} and defer the derivations to Appendix~\ref{app:fl}. %In this case, the expression for CG and CMI don't make sense, since they require computing terms over $\Vcal^{\prime}$, which we do not have access to. 
We multiply the similarity kernel $S$ used in MI and CG expressions of FL by $\eta$ and $\nu$ as done in the case of LogDet.\looseness-1
%We get these by appropriately multiplying the cross-similarities between $\Acal$ and $\Qcal$ by them.

% \vspace{-1ex}
% \begin{lemma} \label{lemma:fl}
% Let $S$ be the similarity kernel such that $S_{ij} = I(i == j)$, if both $i, j \in \Vcal$ or both $i, j \in \Vcal^{\prime}$ and the facility location function $f(\Acal) = \sum_{i \in \Omega} \max_{j \in \Acal} S_{ij}, \Acal \subseteq \Omega$. The expression for MI (\textsc{Flqmi}) is obtained to be $I_f(\Acal; \Qcal) = \sum_{i \in \Qcal} \max_{j \in \Acal} S_{ij} + \eta \sum_{i \in \Acal} \max_{j \in \Qcal} S_{ij}$. The CG and CMI expressions are not particularly useful in this case.
% \end{lemma}

%\vspace{-1ex}

%\vspace{-0.5ex}
\noindent \textbf{Concave Over Modular (COM): } The notion of GMI functions (\secref{sec:gmi}) allows us to characterize a rich class of concave over modular functions as GMI functions. Define a set function $f_{\eta}(\Acal)$ as: $f_{\eta}(\Acal) = \eta \sum_{i \in \Vcal^{\prime}} \max(\psi(\sum_{j \in \Acal \cap \Vcal} S_{ij}), \psi(\sqrt{n}\sum_{j \in \Acal \cap \Vcal^{\prime}} S_{ij})) \nonumber +  \sum_{i \in \Vcal} \max(\psi
    (\sum_{j \in \Acal \cap \Vcal^{\prime}} S_{ij}), \psi(\sqrt{n}\sum_{j \in \Acal \cap \Vcal} S_{ij}))$, 
%\vspace{-1ex}
where $\psi$ is a concave function and $f_{\eta}(\Acal)$ is restricted submodular. We state the expression for its GMI function in \tabref{tab:SIM_inst}(a) and provide the derivation in Appendix~\ref{app:com}.

\noindent \textbf{Graph Cut (GC): } The GC function is defined as $f(\Acal) = \sum \limits_{i \in \Acal, j \in \Vcal} S_{ij} - \lambda \sum\limits_{i, j \in \Acal} S_{ij}$. The parameter $\lambda$ captures the trade-off between diversity and representativeness. The \model{} expressions of GC are presented in \tabref{tab:SIM_inst}. Note that the CMI expression for GC is not useful as it does not involve the private set and is exactly the same as the MI version (proof in Appendix~\ref{app:gccsmi}). Like in the LogDet case, we introduce an additional parameter $\nu$ in \textsc{Gccg} to control the strictness of privacy constraints. Again, this is easily modeled in the GC objective by multiplying the cross-similarity between data points and the private instances by $\nu$. 

\noindent \textbf{Computational Complexity: } In terms of compute complexity, \textsc{Gcmi} and \textsc{Flqmi} are linear in $|\Vcal|$ (since $\Qcal$ is typically small). However, \textsc{Flvmi} and \textsc{Logdetmi} are quadratic in the size of the unlabeled set due to requiring the kernel. Hence, for massive datasets, \textsc{Gcmi} and \textsc{Flqmi} are preferable (more details in Appendix~\ref{app:scalability}). We also discuss (in Appendix~\ref{app:scalability}) a partitioning based approach where we divide the datasets into smaller partitions and run the $\Vcal \times \Vcal$ kernel based functions (\textsc{Flvmi} and \textsc{Logdetmi}) on the individual partitions, thereby making them more scalable.

\begin{figure}
%\vspace{-4ex}
    \centering 
  \includegraphics[width=0.47\textwidth]{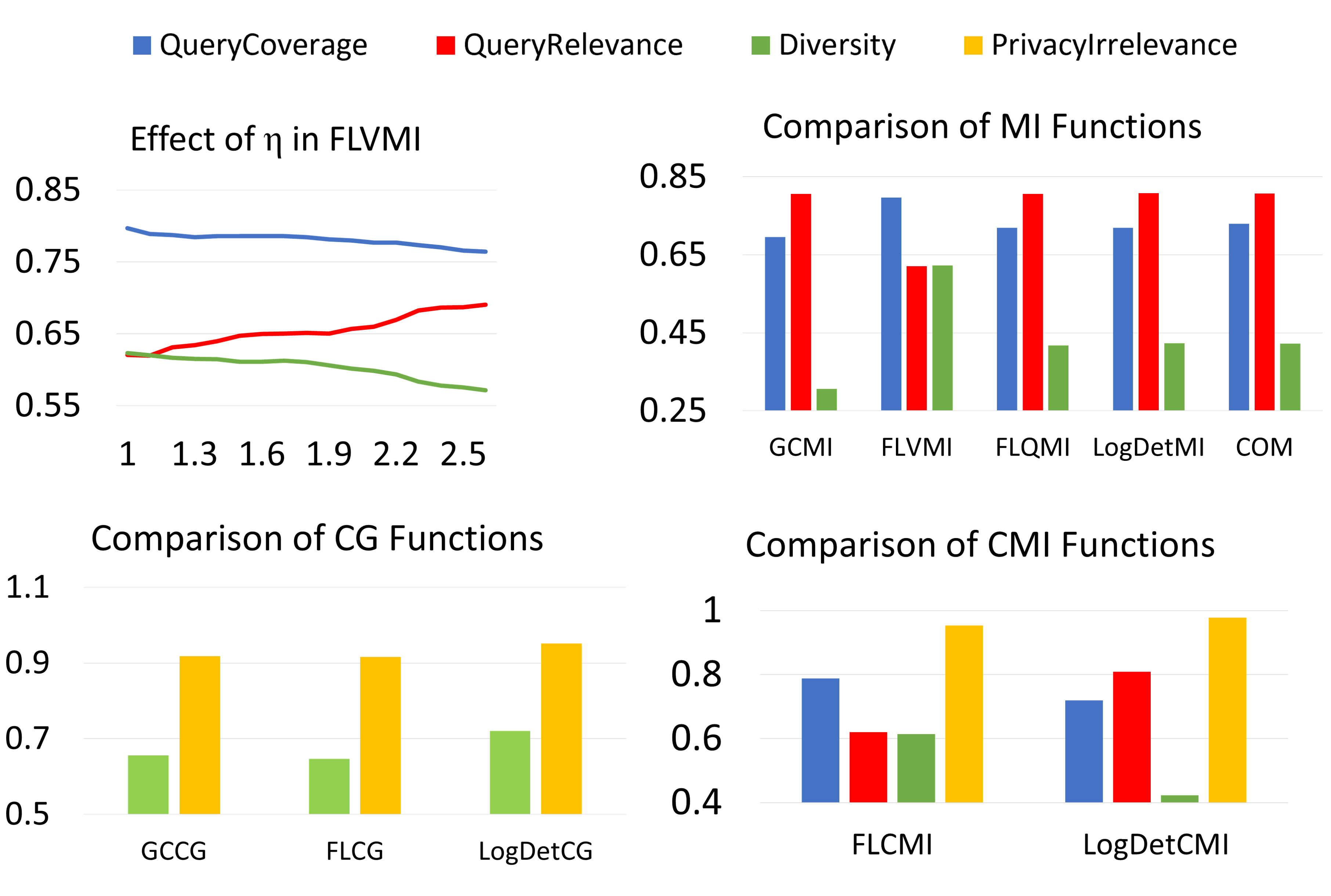}
%\vspace{-2ex}
\caption{Behavior of different functions in \model{} and effect of parameters. All plots share the legend. }
\vspace{-2ex}
\label{fig:syntheticcombined}
\end{figure}

\subsection{Modeling Semantics of \model} 
To empirically verify the intuitive understanding of the expressions, we maximize the different functions in \model{} individually on a synthetically created dataset with different parameters, and study the characteristics of the subsets qualitatively and quantitatively. For evaluation, we define \emph{query-coverage} to be the fraction of queries covered by the subset, \emph{query-relevance} to be the fraction of the subset pertaining to the queries, \emph{diversity} to be the measure of how diverse are the points within the selected subset, and \emph{privacy-irrelevance} to be the fraction of the subset \emph{not} matching the private set. We present representative results in Fig.~\ref{fig:syntheticcombined} and defer detailed results to Appendix~\ref{app:synthetic}. For MI functions, we verify that increasing $\eta$ tends to increase query-relevance while reducing query-coverage and diversity (top-left, Fig.~\ref{fig:syntheticcombined}). Also, while \textsc{Gcmi} lies at one end of the spectrum favoring query-relevance, \textsc{Flvmi} lies at the other end favoring diversity and query coverage. \textsc{Flqmi}, \textsc{Logdetmi} and COM lie somewhere in between (top-right, Fig.~\ref{fig:syntheticcombined}). As expected, increasing $\nu$ increases privacy-irrelevance for CG functions. We also observe that \textsc{Logdetcg} outperforms \textsc{Flcg} and \textsc{Gccg} both in terms of diversity and privacy-irrelevance (bottom-left, Fig.~\ref{fig:syntheticcombined}). For CMI functions, we see that \textsc{Flcmi} tends to favor query-coverage and diversity in contrast to query-relevance and privacy-irrelevance, while \textsc{Logdetcmi} favors query-relevance and privacy-irrelevance over query-coverage and diversity (bottom-right, Fig.~\ref{fig:syntheticcombined}).

\vspace{-1ex}
\section{\model{} for Guided Subset Selection}
\label{sec:tasks}
In this section, we discuss the use of \model{} for guided data subset selection and illustrate its utility for targeted learning and guided summarization. 

\subsection{Targeted Learning}
\label{sec:tss}

We first apply \model{} to \emph{targeted learning}, where the goal is to improve a model's accuracy on some target classes at a given additional labeling cost and without compromising on the overall accuracy (see \figref{fig:motivating-applications}(a)). Let $\Ecal$ be an initial training set of labeled instances, $\Tcal$ be the target set of examples on which the user desires good performance, $\Pcal$ be the private set of examples that the user wants to avoid, and $\Ucal$ be a large unlabeled dataset. 
We maximize a CMI function $I_f(\Acal; \Tcal | \Pcal)$ towards computing an optimal subset $\hat{\Acal} \subseteq \Ucal$ of size $k$ \textit{similar} to $\Tcal$ and \textit{dissimilar} to $\Pcal$. Note that when $\Pcal= \emptyset$, CMI is equivalent to MI. We then augment $\Ecal$ with labeled $\hat{\Acal}$ and re-train the model. % to achieve better accuracy without compromising on the accuracy of other classes. 
Through the aforementioned (\secref{sec:model}), the diverse class of MI functions in \model{} offers a natural and effective approach for targeted subset selection by using the query set $\Qcal$ as the target set $\Tcal$. The approach is outlined in \AlgRef{algo:tss}. 
% It may be noted that Step 2 could involve using any feature vectors for the sake of computing the kernels in Step 3. 
Similar to~\cite{ash2020deep, killamsetty2020glister,killamsetty2021grad,mirzasoleiman2020coresets}, we use last-layer gradients of the model to represent the data points and use them to compute the similarity kernel $S$. Specifically, we define pairwise similarities $S_{ij} = \langle \nabla_{\theta} \Lcal_i(\theta), \nabla_{\theta} \Lcal_j(\theta) \rangle$, where $\Lcal_i(\theta) = \Lcal(x_i, y_i, \theta)$ is the loss on the $i$th data point and $\theta$ denotes model parameters.
% owing to their recent success~\cite{ash2020deep, killamsetty2020glister,killamsetty2021grad,mirzasoleiman2020coresets, kothawade2021similar}. 
%The CMI function may also be combined with a function ($\gamma g(\Acal)$) explicitly modeling diversity (say, Disparity-Sum or total pairwise sum), which is useful for functions such as \textsc{Gcmi} that do not model diversity by themselves. 
Note that the target need not correspond to class(es) but could be any attribute of data that the user is interested in. For instance, the target could be mining images with people at night (here \textit{night} is an attribute). \looseness-1

\begin{algorithm}%[H]
\begin{algorithmic}[1]
\REQUIRE $\Ecal$: initial labeled set, $\Ucal$: large unlabeled set, $\Tcal$: a target subset/slice, $\Pcal$: a set to be avoided, $k$: selection budget, $\mathcal L$: loss function
%\STATE $Train\gets$ initial training set of labeled instances
%\STATE $\Ecal\gets$ large data lake containing unlabeled instances
%\STATE $Target\gets$ examples of types on which better accuracy is desired
\STATE Train model with loss $\mathcal L$ on labeled set $\Ecal$ to obtain model parameters $\theta_\Ecal$ \textcolor{blue}{\{Obtain initial accuracy\}}
\STATE Compute the gradients $\{\nabla_{\theta_\Ecal} \mathcal L(x_i, y_i), i \in \Ucal\}$ (using hypothesized labels) and $\{\nabla_{\theta_\Ecal} \mathcal L(x_i, y_i), i \in \Tcal\}$  \textcolor{blue}{\{Obtain vectors for computing kernel in Step 3\}}
\STATE Compute the similarity kernels $S$ 
% (this includes kernel of the elements within $\Ucal$, within $\Tcal$ and between $\Ucal$ and $\Tcal$) 
and define a CMI function $I_f(\Acal;\Tcal | \Pcal)$ 
% and a diversity function $g$ 
\textcolor{blue}{\{Instantiate Functions\}}
\STATE $\hat{\Acal} \gets \max_{\Acal \subseteq \Ucal, |\Acal|\leq k} (I_f(\Acal;\Tcal|\Pcal)$ 
% + \gamma g(\Acal))$ 
\textcolor{blue}{\{Obtain the subset  optimally matching the target\}}
\STATE Obtain the labels of the elements in $\hat{\Acal}$ as $L(\hat{\Acal})$ \textcolor{blue}{\{Procure labels on the selected subset\}}
\STATE Train a model on the combined labeled set $\Ecal \cup L(\hat{\Acal})$ \textcolor{blue}{\{Augment training data\}} % for improved accuracy on target
\end{algorithmic}
\caption{Application of \model{} for\textbf{ Targeted Learning}}
\vspace{-1ex}
\label{algo:tss}
\end{algorithm}
\vspace{1ex}

\subsection{Guided Summarization}
\label{sec:tsum} 

Next, we apply \model{} for \emph{guided summarization}. In this task, we are given a set $\Vcal$ of data points (images, sentences of a document, or frames/shots of a video), and the goal is to find a summary $\Acal \subseteq \Vcal$ with certain desired characteristics. In query-focused summarization, we find a summary that is semantically similar to the query set, while in privacy-preserving summarization, the obtained summary should \textit{not} contain data points that are similar to the private set. \looseness-1 
% The \emph{target} here assumes different semantics for different flavors of summarization - query set ($\Qcal$) for query-focused summarization and private set ($\Pcal$) for privacy-preserving summarization. 
% In the context of update-summarization, the \emph{target} is $\Acal_0$, the summary that the user has already seen, and the goal is to find a summary different from $\Acal_0$. 

%\vspace{-2ex}
\noindent \textbf{\model's Unified Framework for Guided Summarization:}
%\label{subsec:unified-summ}
%\vspace{-1ex}
Given sets $\Qcal$ and $\Pcal$, and a submodular function $f$, consider the following {\it master optimization problem} involving CMI:
%\begin{align}\label{eqn:master-summ-prob}
    $\max_{\Acal: |\Acal| \leq k}  I_f(\Acal; \Qcal | \Pcal)$. 
%\end{align}
We discuss how the different flavors of summarization can be seen as special cases of this master optimization problem. Setting $\Qcal \leftarrow \Vcal$ and $\Pcal \leftarrow \emptyset$ yields generic summarization. Similarly, setting $\Qcal \leftarrow \Qcal$ and $\Pcal \leftarrow \emptyset$ yields query-focused summarization with a query-set $\Qcal$. Setting $\Qcal \leftarrow \Vcal$ and $\Pcal \leftarrow \Pcal$ gives privacy-preserving summarization. 
% and for update-summarization we set $\Tcal = \Acal_0, \Bcal = \Vcal$. 
This framework allows us to address yet another flavor: \emph{joint query-focused and privacy preserving summarization} where we set $\Qcal \leftarrow \Qcal$ and $\Pcal \leftarrow \Pcal$.

\noindent \textbf{Parameter Learning in \model{} for Guided Summarization:} As discussed in \secref{sec:model}, different instantiations of \model{} along with their parameters offer a wide spectrum of modeling characteristics. Hence, when used individually, each imparts certain characteristics to the summaries. For summarization, we thus propose learning a mixture model supervised by summaries generated by humans. We learn a mixture of \model{} functions~\cite{lin2012learning,Kaushal2019DemystifyingMV,Kaushal2019AFT,tschiatschek2014learning} where the weights and the internal parameters $\lambda, \nu, \eta$ of the functions are jointly learned. We denote our parameter vector by $\Theta = (w, \eta, \lambda, \nu)$, and our \model{} mixture model by $F(\Theta)=\sum_i w_i f_i(\Acal, \gamma, \eta, \nu)$, with each $f_i$ being either one of the functions in \model{} or one of pure diversity and representation functions such as Disparity-Sum and FL. %respectively. 
Then, given $N$ training examples, $(\Vcal^{(n)}, \Ycal^{(n)})_{n = 1}^N$ 
%where $V^{(n)}$ is $n^{th}$ video or image collection or text document, $\Ycal^{(n)}$ is the human summary and 
we apply gradient descent to learn the parameters $\Theta$ by optimizing the following large-margin formulation: $\min\limits_{\Theta \geq 0} \frac{1}{N} \sum_{n=1}^{N} \Lcal_n(\Theta) + \frac{\lambda}{2}||\Theta||^2$, where $\Lcal_n(\Theta)$ is the generalized hinge loss associated with training example $n$: $\Lcal_n(\Theta) = \max\limits_{\Ycal \subset \Vcal^{(n)}, |\Ycal| \leq k} (F(\Ycal, x^{(n)}, \Theta) + l_n(\Ycal)) - F(\Ycal^{(n)}, x^{(n)}, \Theta)$. Here, $\Ycal^{(n)}$ is a human summary for the $n^{th}$ ground set $\Vcal^{(n)}$  (video, image collection, or text document), with corresponding features $x^{(n)}$. %This objective is chosen so that each human reference summary scores higher than any other summary by some margin $l_n(\Ycal)$. 
%The parameters  are learnt using gradient descent. 
The specific objective functions and gradient computations in case of query-focused, privacy-preserving, and joint query-focused and privacy-preserving summarizations are presented in Appendix~\ref{app:learning}. For generic summarization, we add the standard submodular functions modeling representation, diversity, coverage, {\em etc.} in the mixture. For query-focused summarization and privacy-preserving summarization, we instead use the MI and CG versions of the functions respectively as defined above\footnote{For the query-focused and the privacy-preserving case, CMI degenerates to MI and CG, respectively.}. Once the parameters are learned, we instantiate the mixture model with the parameters and maximize it to get the desired summaries. \looseness-1

\iffalse
\textbf{Connections to past work:} We note that some of the past work in targeted summarization has inadvertently used submodular information measures in their modeling.
%This is similar in spirit with~\cite{lin2011class, tschiatschek2014learning}, where they showed that several past works on generic document and image collection summarization were in fact submodular functions. 
Here we mention such works and defer details to Appendix~\ref{app:sum-gen}. The query-DPP considered in~\cite{sharghi2016query,sharghi2017query} is a special case of \textsc{Logdetmi}. Similarly, the graph-cut based query-relevance term in~\cite{vasudevan2017query}, ~\cite{lin2012submodularity}, and in ~\cite{li2012multi} is actually \textsc{Gcmi}, while the submodular function used by~\cite{li2012multi} in update-summarization is \textsc{Gccg}. Furthermore, the joint diversity and query relevance term in~\cite{lin2011class} is an instance of COM (with the square-root as the concave function). 
%which as showed in Section~\ref{subsec:instantiations}, is a GSMI function. 
Finally, Query-specific ROUGE~\cite{lin2004rouge}, a common evaluation metric in document and image summarization~\cite{lin2011class,tschiatschek2014learning}, is an example of the query-saturation (Q-SAT) function.% and COM has been used in state-of-the-art query-focused extractive document summarization~\cite{lin2011class,lin2012submodularity} with $\delta_1 = 0$ and square root as the concave function. 
\fi
%These connections demonstrate that \textbf{\model{} is a rich and effective} framework for several instances of targeted data subset selection. %including generic, query-focused, topic-irrelevant/privacy-preserving and update summarization.

\subsection{Connections to Past Work} \label{sec:connections}
\model{} generalizes past work in both targeted learning and guided summarization. We summarize the connections below; for more details, see Appendix~\ref{app:connections}.

\noindent \textbf{Targeted Learning: } A number of approaches like \textsc{Glister}~\cite{killamsetty2020glister} and \textsc{Grad-Match}~\cite{killamsetty2021grad} can be considered with a validation set, and hence used in the targeted setting. Similarly, \textsc{Craig}~\cite{mirzasoleiman2020coresets} can be extended to consider a validation set. In Appendix~\ref{app:connectionsTL}, we show that \AlgRef{algo:tss} in fact generalizes \textsc{Craig} (using \textsc{Flqmi}), \textsc{Glister} (using \textsc{Com}), and \textsc{Grad-Match} (using \textsc{Gcmi} + Diversity).% when considered with the targeted set as the validation set. 

\noindent \textbf{Guided Summarization: } A number of past works for summarization have inadvertently used instances of \model. The query-DPP considered in~\cite{sharghi2016query,sharghi2017query} is a special case of \textsc{Logdetmi}. Similarly, the graph-cut based query-relevance term in~\cite{vasudevan2017query,lin2012submodularity}, and in ~\cite{li2012multi} is actually \textsc{Gcmi}. %, while the submodular function used by~\cite{li2012multi} in update-summarization is \textsc{Gccg}. 
Furthermore, the joint diversity and query-relevance term in~\cite{lin2011class} is an instance of COM (with the square-root as the concave function, see Appendix~\ref{app:sum-gen}).\looseness-1

\begin{figure*}[ht!]
\centering
\includegraphics[width = 16cm, height=1cm]{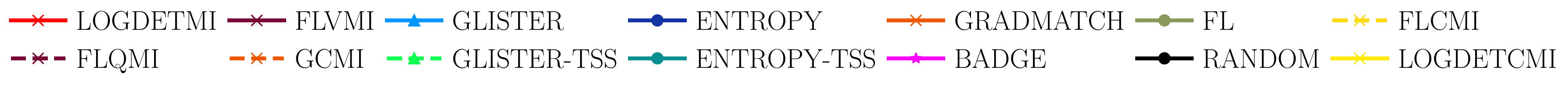}
\centering
% \begin{center}
\hspace*{-0.6cm}
\begin{subfigure}[]{0.33\textwidth}
\includegraphics[width = \textwidth]{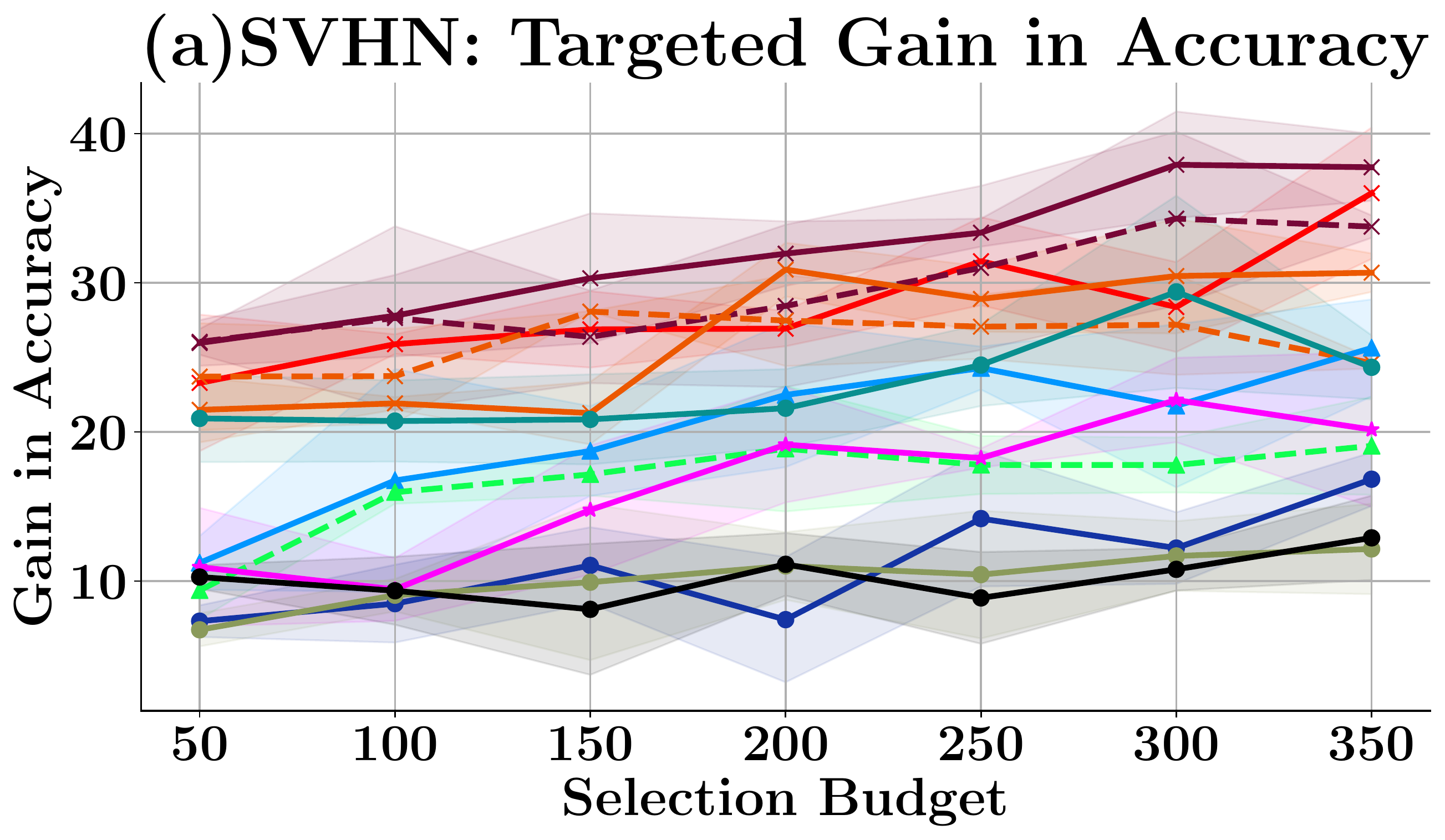}
\end{subfigure} 
\begin{subfigure}[]{0.33\textwidth}
\includegraphics[width = \textwidth]{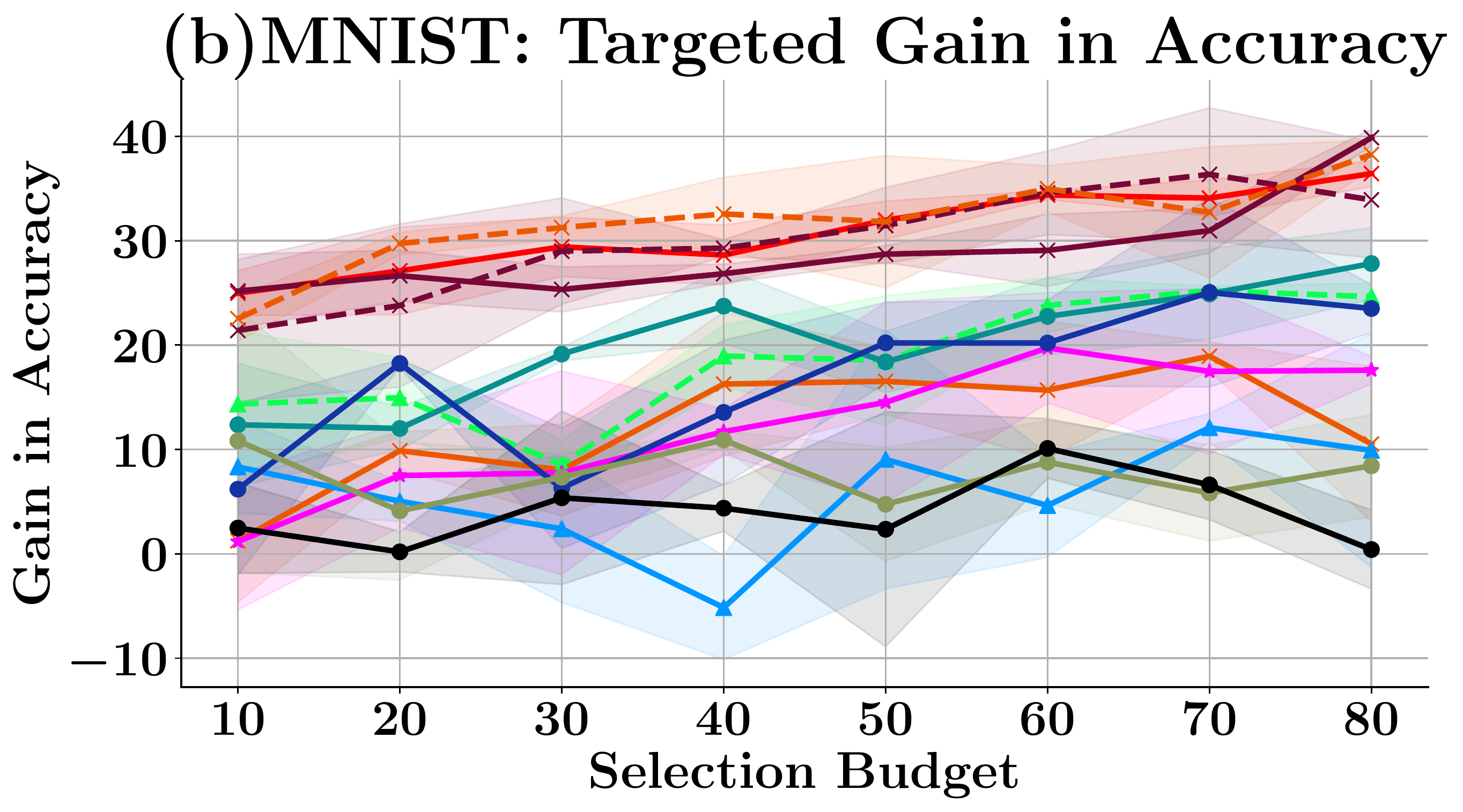}
\end{subfigure}
\begin{subfigure}[]{0.33\textwidth}
\includegraphics[width = \textwidth]{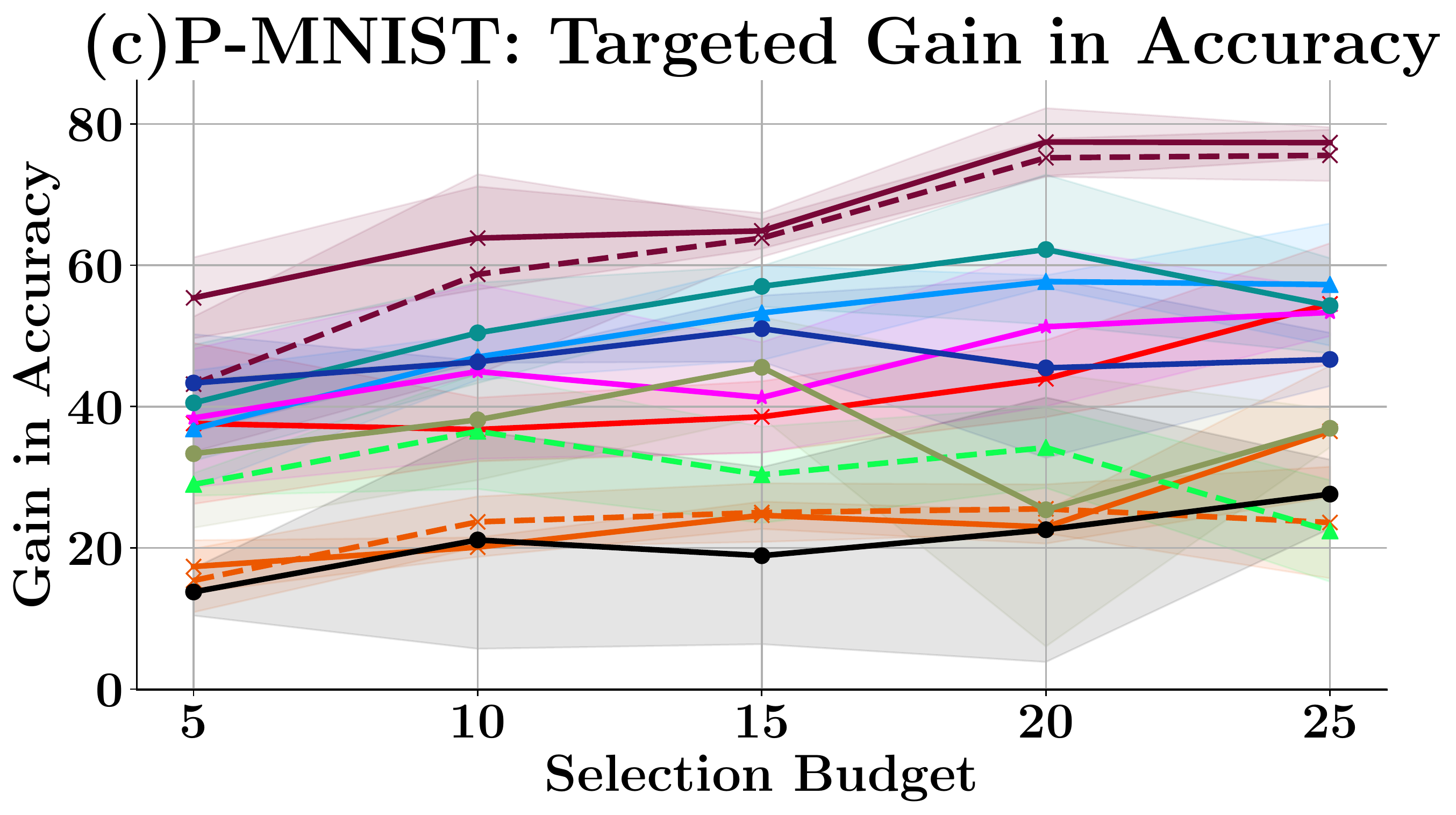}
\end{subfigure}
\begin{subfigure}[]{0.33\textwidth}
\includegraphics[width = \textwidth]{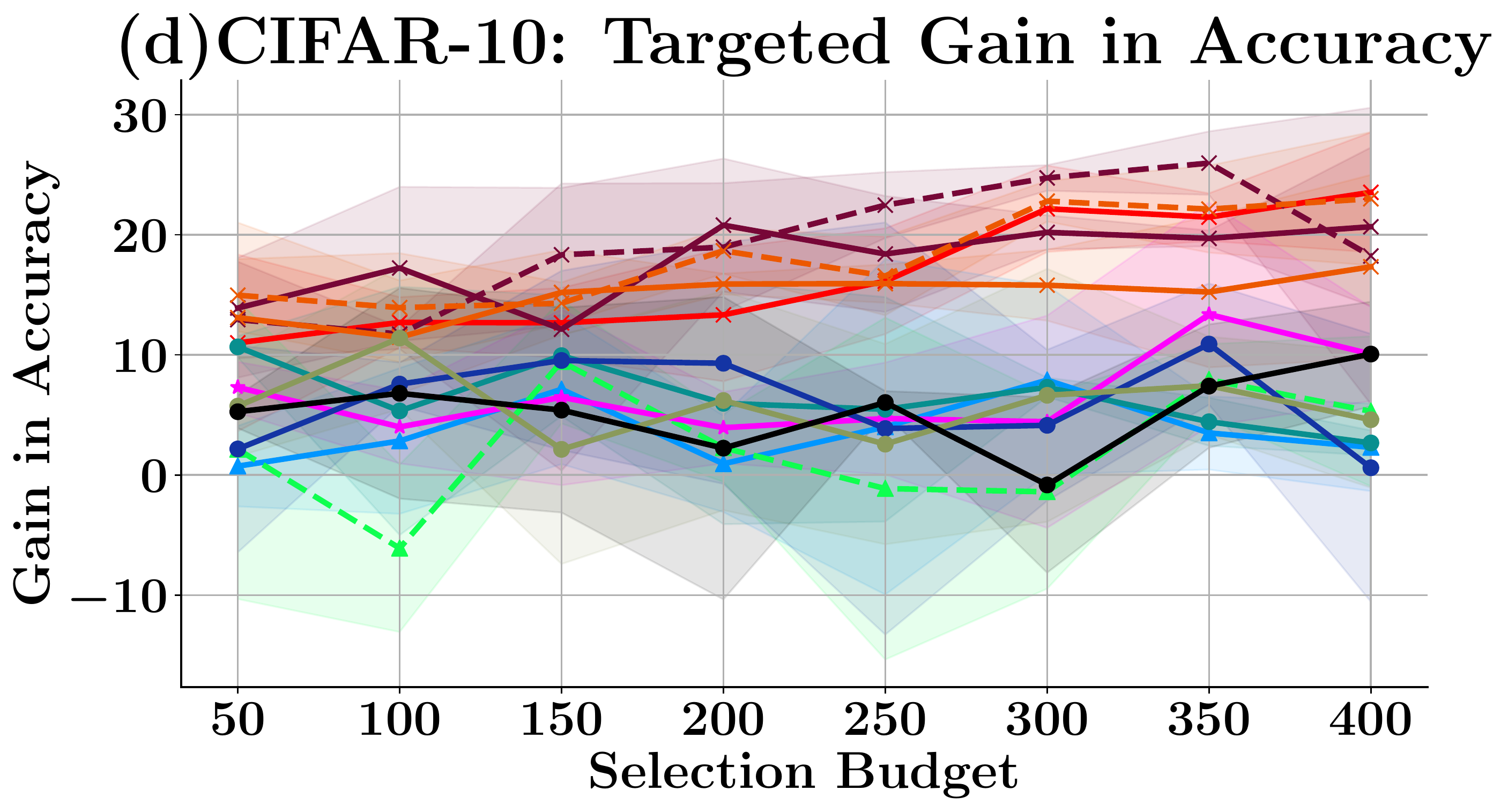}
\end{subfigure}
\begin{subfigure}[]{0.33\textwidth}
\includegraphics[width = \textwidth]{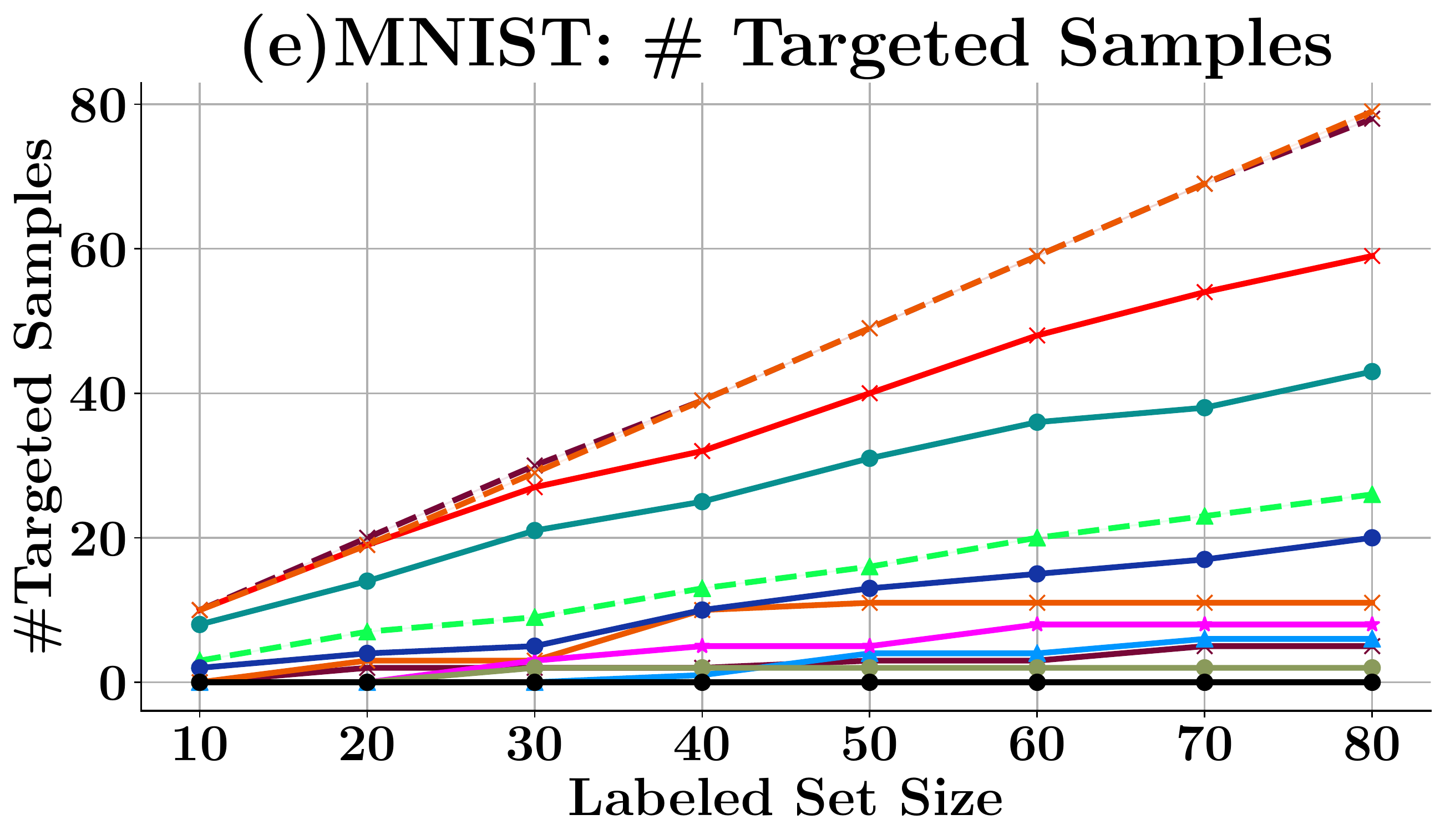}
\end{subfigure} 
\begin{subfigure}[]{0.325\textwidth}
\includegraphics[width = \textwidth]{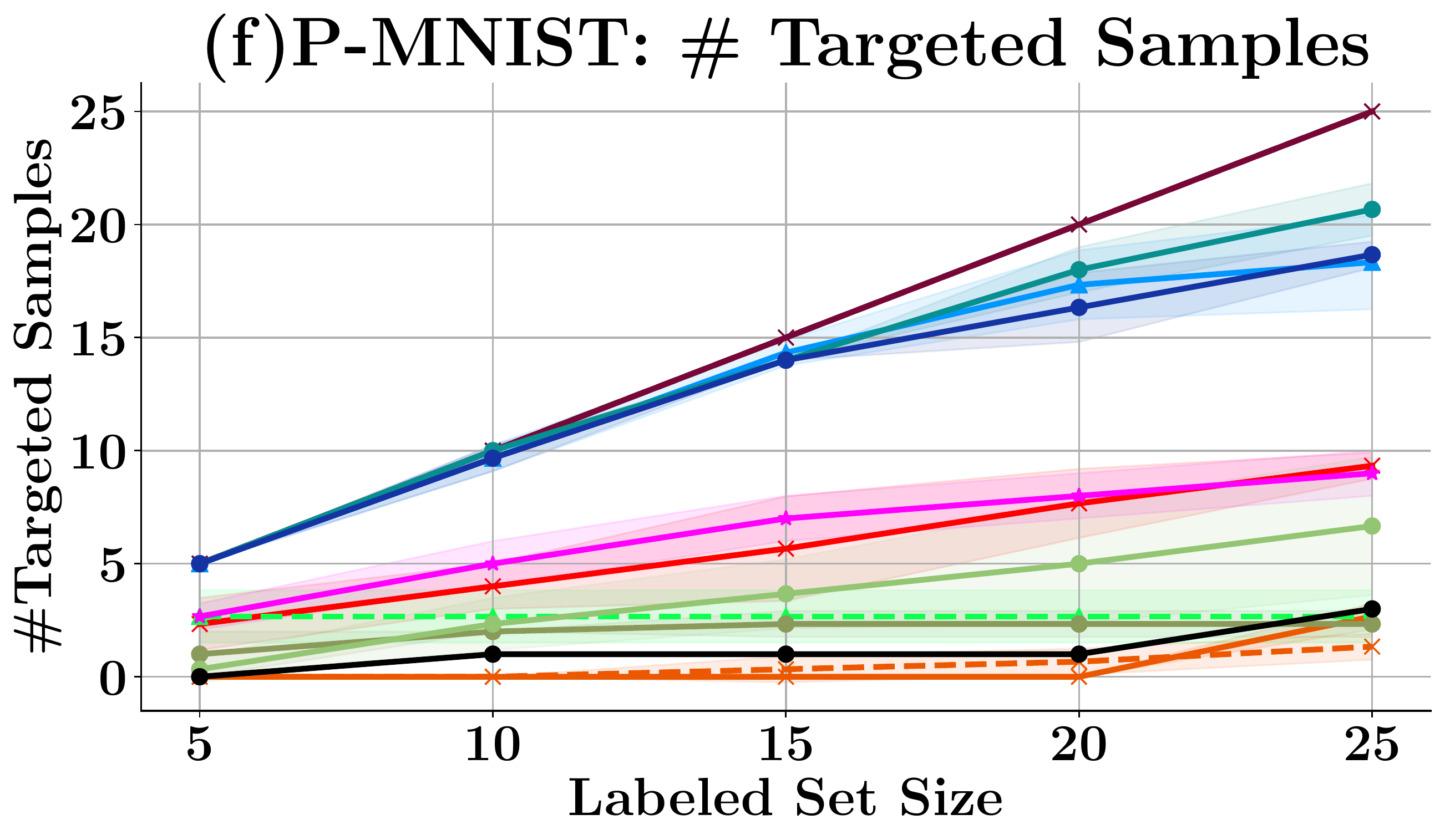}
\end{subfigure}
\begin{subfigure}[]{0.33\textwidth}
\includegraphics[width = \textwidth]{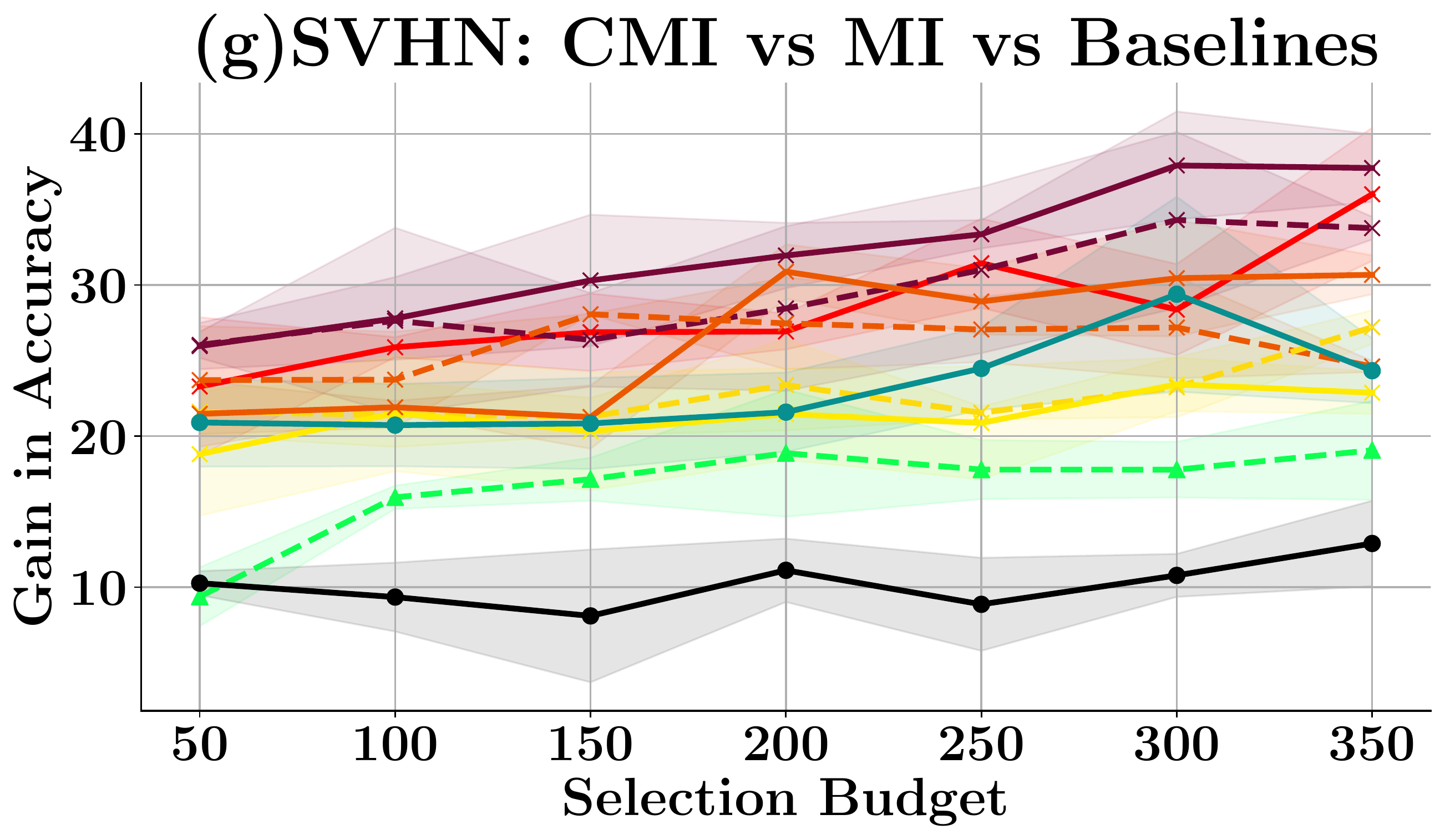}
\end{subfigure} 
\begin{subfigure}[]{0.33\textwidth}
\includegraphics[width = \textwidth]{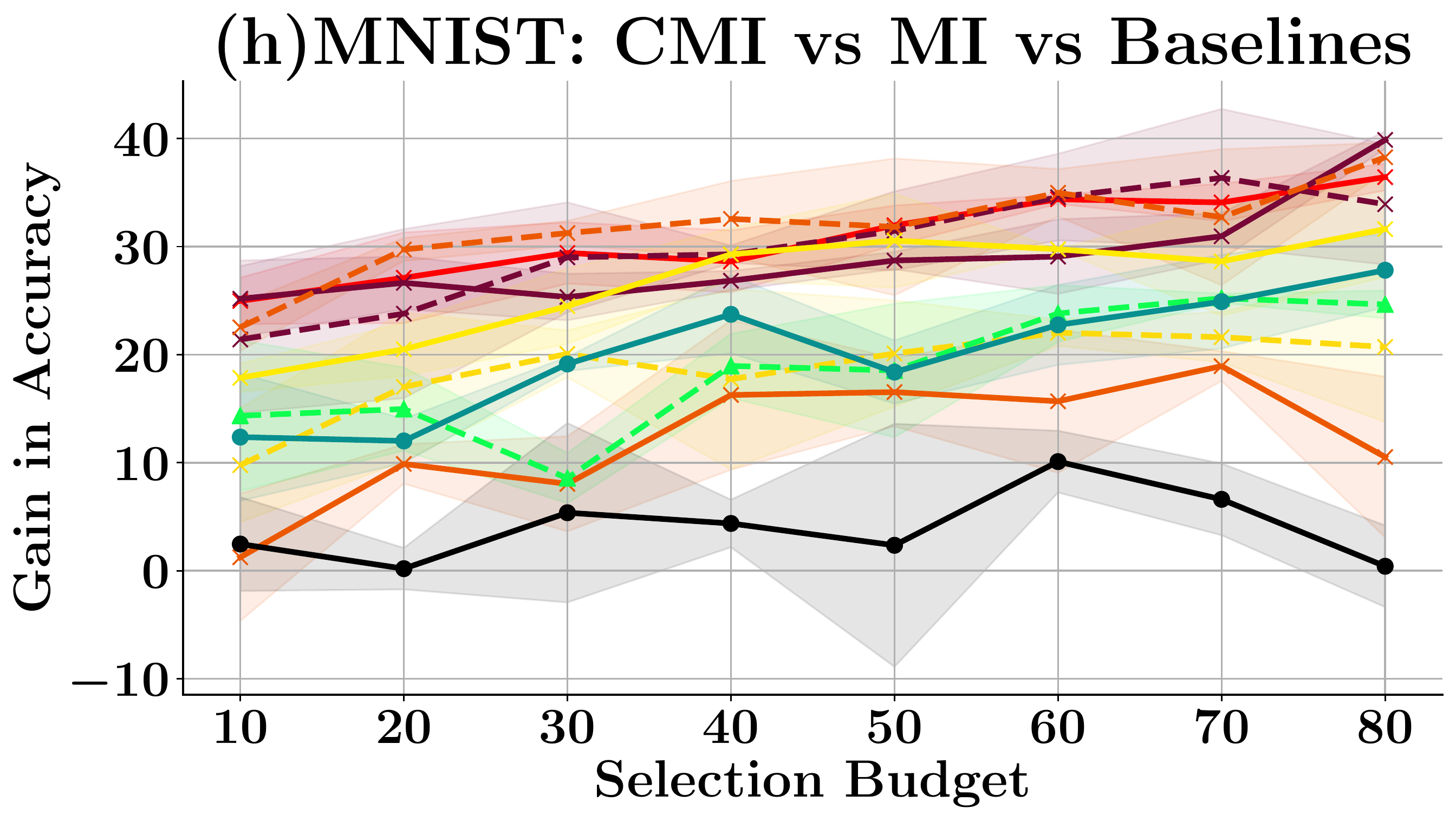}
\end{subfigure}
\begin{subfigure}[]{0.33\textwidth}
\includegraphics[width = \textwidth]{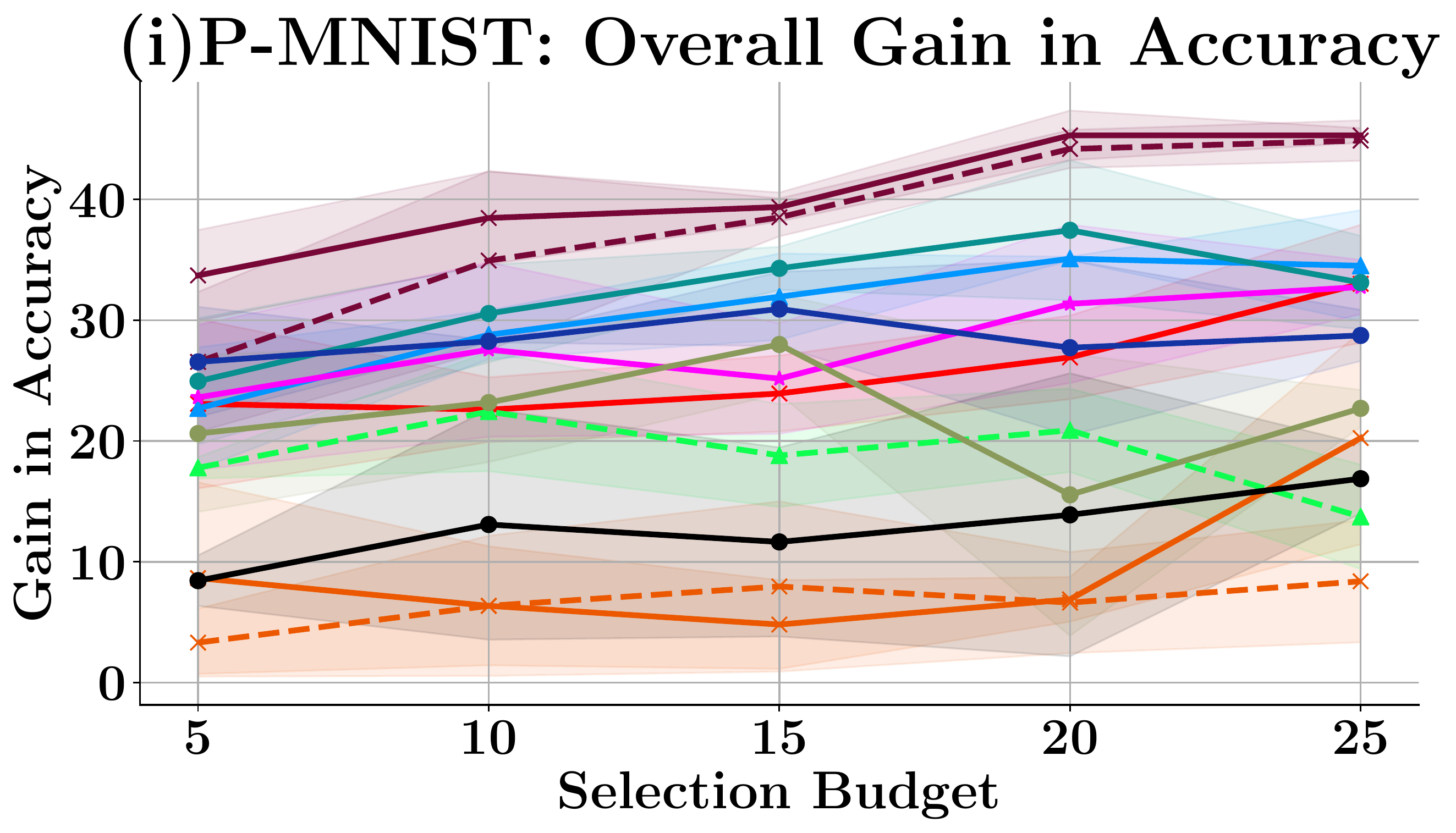}
\end{subfigure}
% \end{center}
\caption{Targeted learning with \model{} on SVHN, MNIST, P-MNIST (Pneumonia-MNIST) and CIFAR-10. Plots (a-d) compare average gain in accuracy on targeted classes. Plots (e-f) compare the number of data points selected from the targeted classes. Plots (g-h) show an ablation study to compare the performance of MI and CMI functions. Plot (i) shows the overall gain in accuracy on P-MNIST. The MI and CMI functions (specifically \textsc{Flqmi} and \textsc{Flvmi}) obtain larger gains in targeted (plots a-d) and overall accuracy (plot i) than other baselines by selecting larger and more diverse examples from targeted classes (plots e-f).
\vspace{-3ex}}
\label{fig:TL}
\end{figure*}

\vspace{-1ex}
\section{Experiments and Results}
\label{sec:exp}
\subsection{Targeted Learning}\label{sec:exp-tss}
In this section, we demonstrate the effectiveness of \model{} for targeted learning on the CIFAR-10~\cite{krizhevsky2009learning}, MNIST~\cite{lecun1998gradient}, SVHN~\cite{netzer2011reading}, and P-MNIST (Pneumonia-MNIST)~\cite{yang2021medmnist, kermany2018identifying} image classification datasets.

\noindent\textbf{Custom dataset: }  To simulate a real-world setting, we randomly select some {\em target} classes and split the train set into {\em labeled}, {\em target}, and an {\em unlabeled} set such that (i) the {\em labeled} set has \emph{class imbalance} and poorly represents the target classes, 
%two randomly picked classes (target), 
%(ii) the poorly represented classes do not perform well on the validation set and 
(ii) the {\em target} set has a small number of data points from the target classes, and (iii) the {\em unlabeled} set is a large set whose labels we do not use (resembling a large pool of unlabeled data in real-world scenarios). For CMI functions, we additionally use a {\em private set}, which has a small number of data points from the non-target classes. The performance is measured on the standard test set from the respective datasets. Let the set $\Ccal$ consist of data points from the target classes and the set $\Dcal$ consist of data points from the non-target classes. We create an initial labeled set $\Ecal$, such that $|\Dcal_\Ecal| = \rho|\Ccal_\Ecal|$ and an unlabeled set which follows the same distribution $|\Dcal_\Ucal| = \rho|\Ccal_\Ucal|$, where $\rho$ is the imbalance ratio. We use $\rho=20$ and $|\Tcal|=10$ (total number of samples from target classes) for all experiments. For CIFAR-10, MNIST and SVHN, we randomly select 2 classes as targets, while for the binary classification task in P-MNIST, we select the \emph{pneumonia} class as the target. For MNIST and SVHN, $|\Ccal_\Ecal| + |\Dcal_\Ecal| = 1620$, $|\Ccal_\Ucal| + |\Dcal_\Ucal| = 24.3K$. For CIFAR-10, $|\Ccal_\Ecal| + |\Dcal_\Ecal| = 8400$, $|\Ccal_\Ucal| + |\Dcal_\Ucal| = 24.3K$. For P-MNIST, $|\Ccal_\Ecal| + |\Dcal_\Ecal| = 105$, $|\Ccal_\Ucal| + |\Dcal_\Ucal| = 1100$. 
These data splits were chosen to simulate low accuracy on target classes and at the same time to maintain the imbalance ratio in labeled and unlabeled datasets.
%For computing the kernels in Algorithm~\ref{algo:tss} we use the gradients $\{\nabla_{\theta_E} \mathcal L(x_i, y_i), i \in \Ucal\}$ (using hypothesized labels) and $\{\nabla_{\theta_E} \mathcal L(x_i, y_i), i \in \Tcal\}$.

% \begin{figure}
%         \centering 
%   \includegraphics[width=0.49\textwidth]{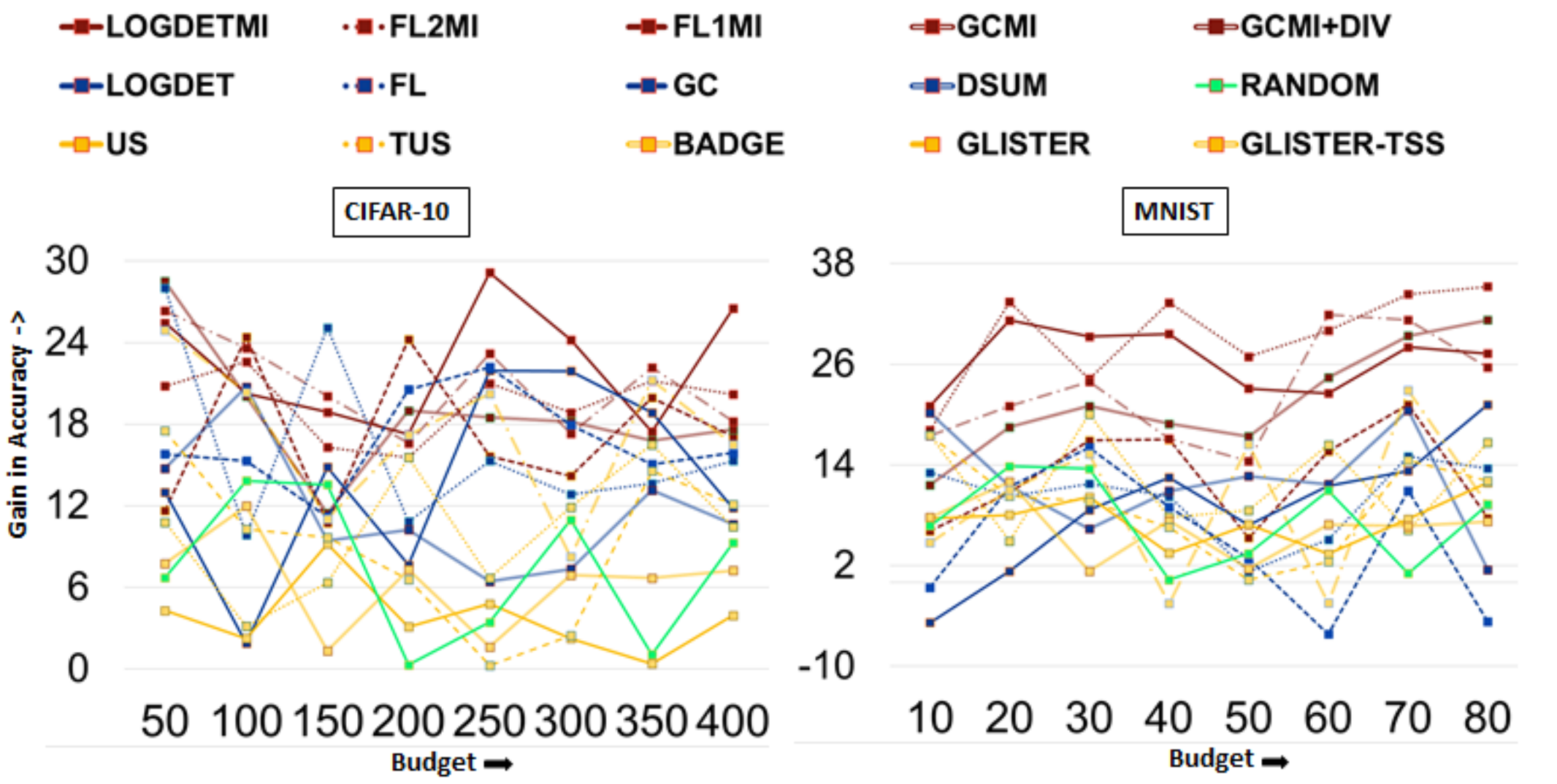}
% %\vspace{-2ex}
% \caption{Comparison of different methods for targeted subset selection for different budgets on CIFAR-10 and MNIST. X-axis: budgets, Y-axis: gain in model accuracy for target classes. MI based approaches (lines in {\color{BrickRed}{red})} significantly outperform others across all subset sizes. Better resolution image available in Appendix~\ref{app:tss-exp}.}
% \vspace{-2ex}
% \label{fig:gain-size}
% \end{figure}

\begin{figure*}[ht!]
    \centering 
% \begin{subfigure}{0.24\textwidth}
%   \includegraphics[width=\linewidth]{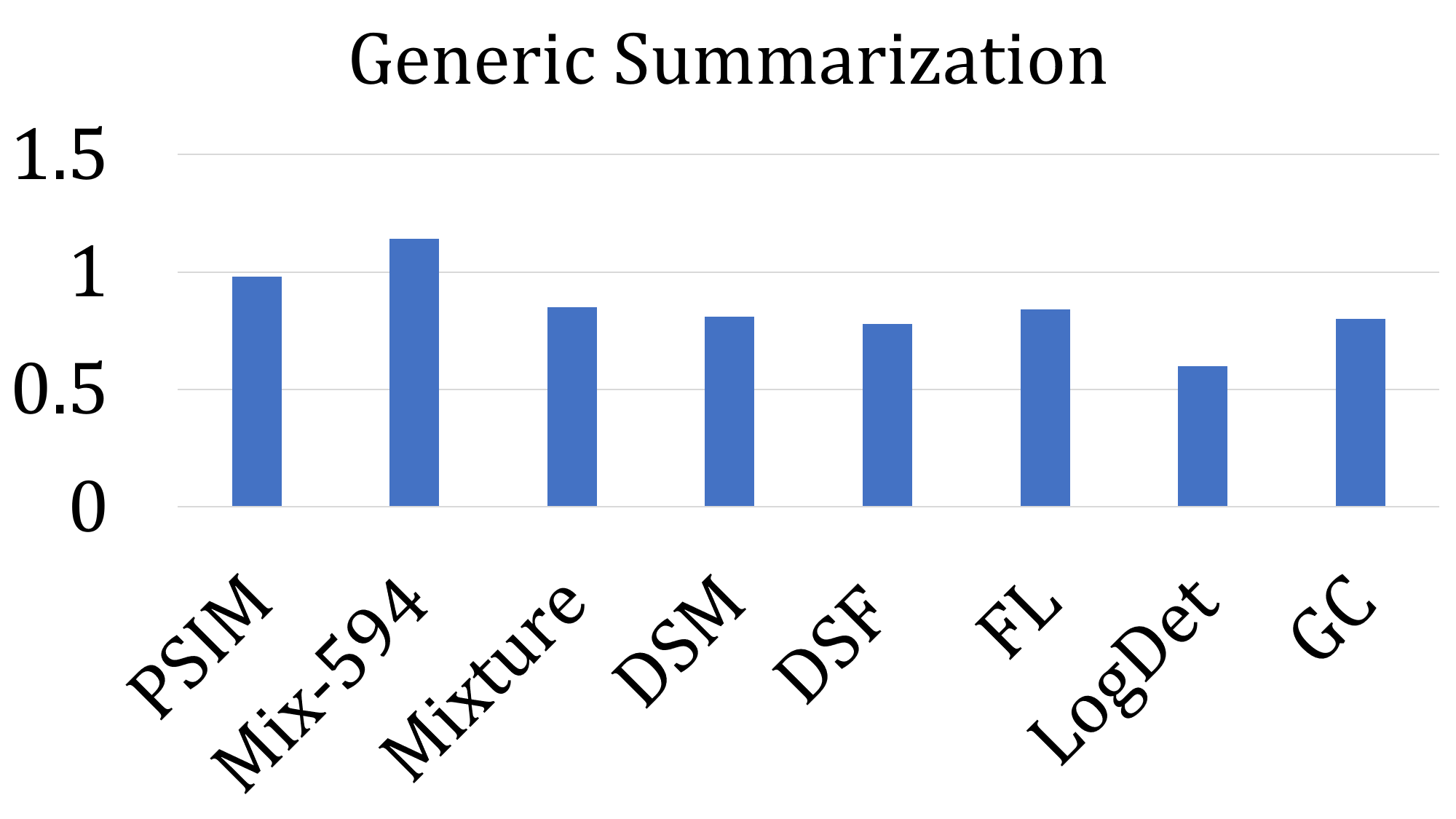}
%   \caption{}
%   \label{fig:generic-summ}
% \end{subfigure}\hfil % <-- added
\begin{subfigure}{0.3\textwidth}
  \includegraphics[width=\linewidth]{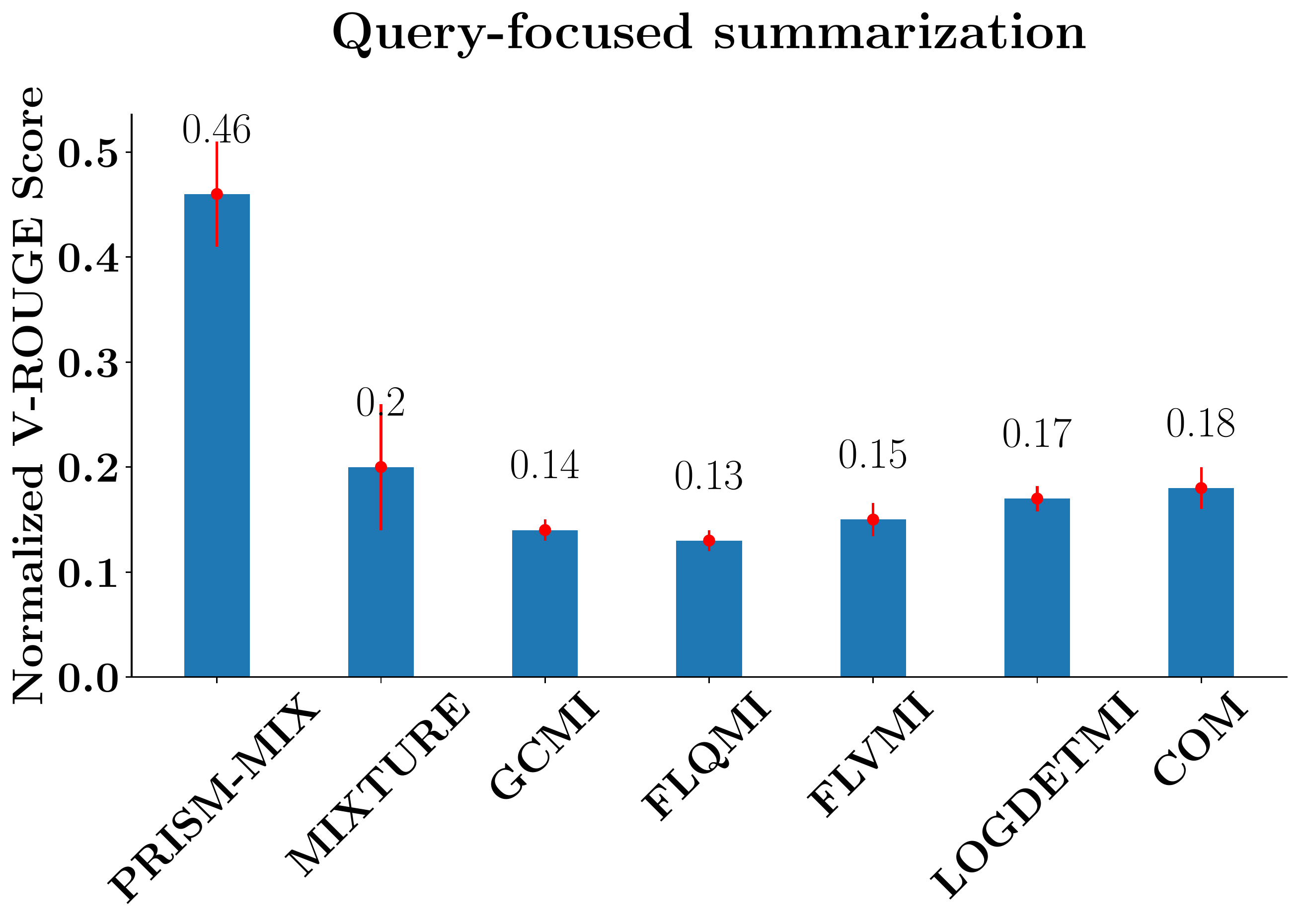}
%   \caption{}
  \label{fig:query-summ}
\end{subfigure}
\begin{subfigure}{0.3\textwidth}
  \includegraphics[width=\linewidth]{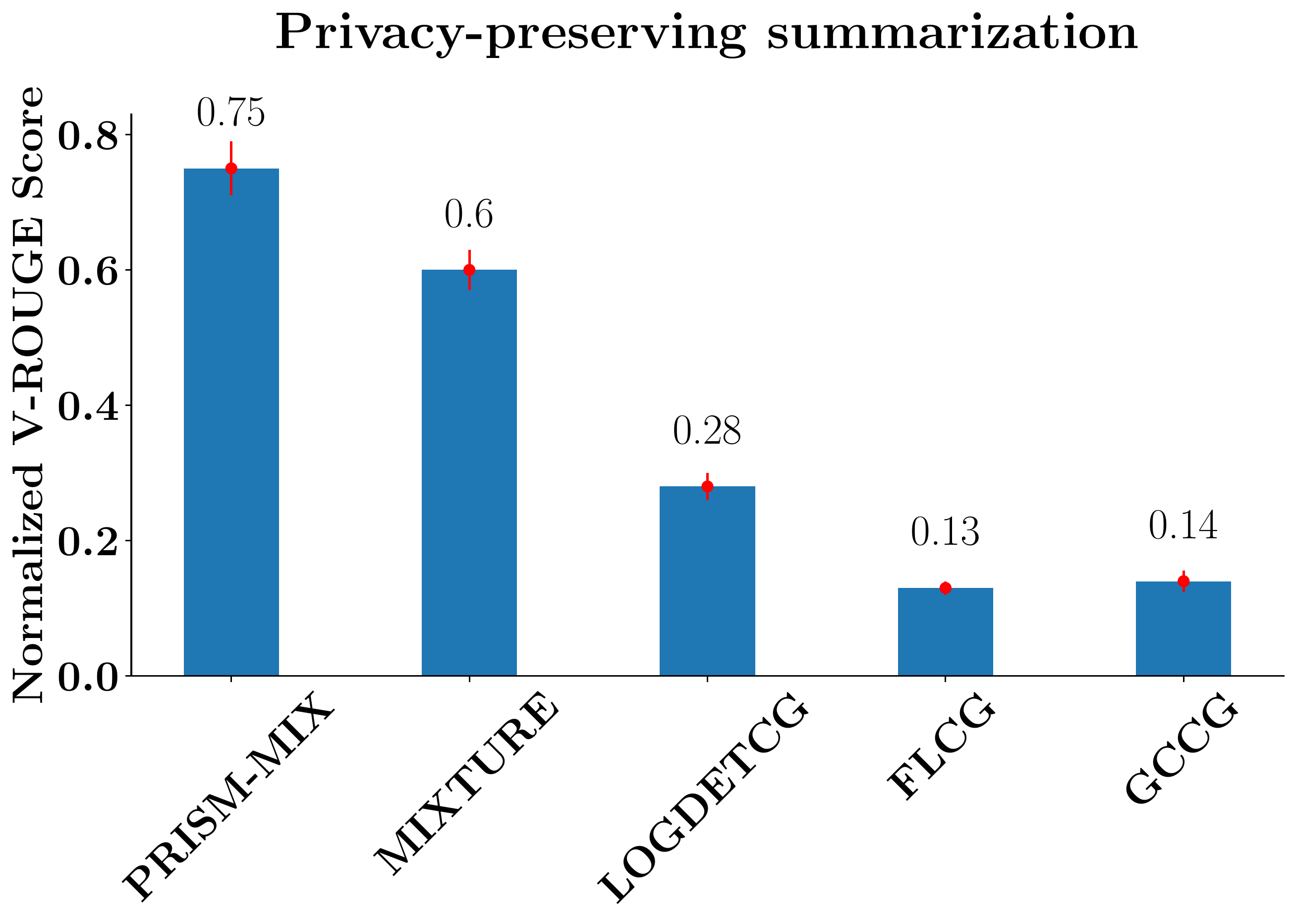}
%   \caption{}
  \label{fig:privacy-summ}
\end{subfigure}\hfil % <-- added
\begin{subfigure}{0.3\textwidth}
  \includegraphics[width=\linewidth]{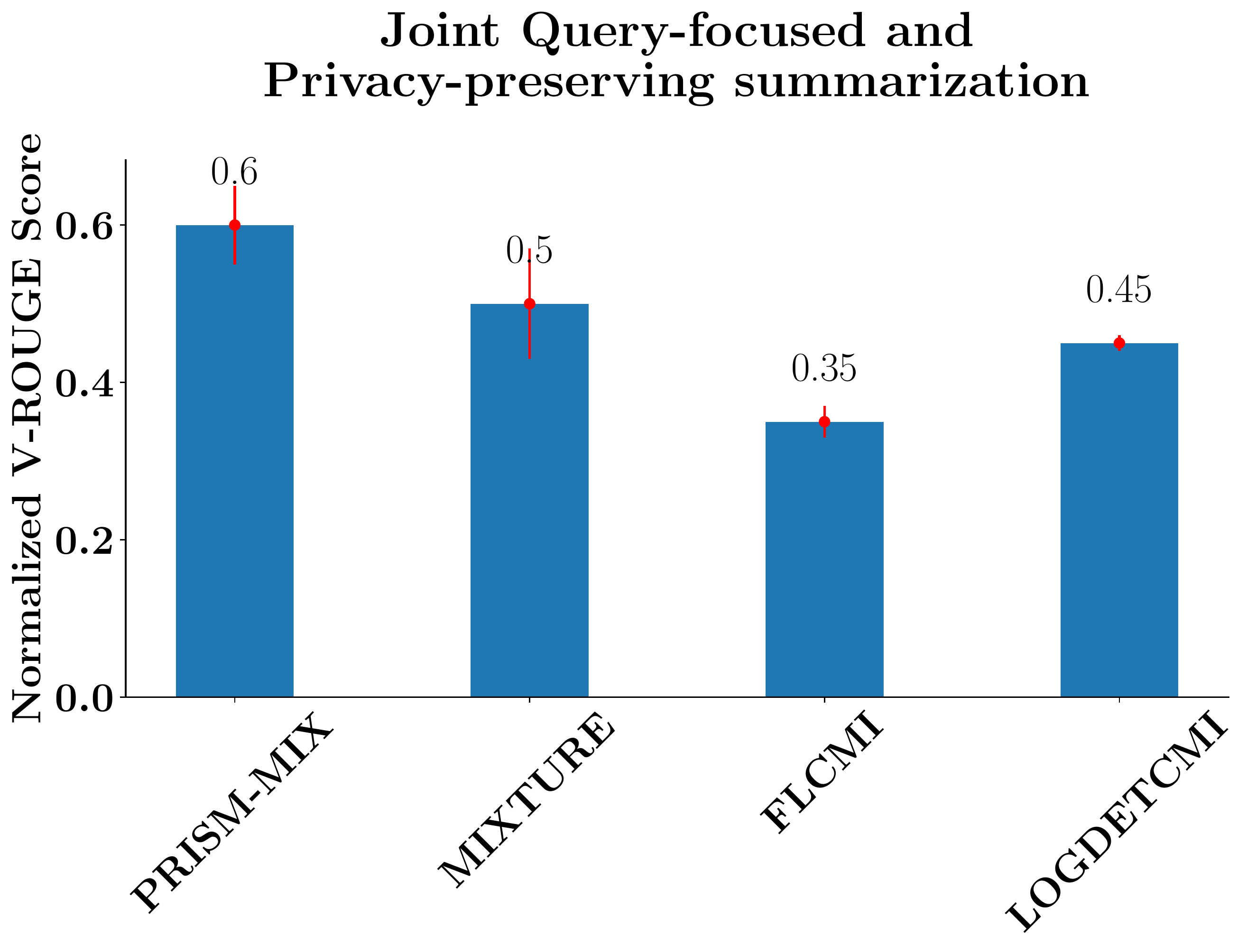}
%   \caption{}
  \label{fig:joint-summ}
\end{subfigure}\hfil % <-- added
\caption{Guided summarization results on a real-world image collections dataset: because of the joint learning of the parameters, the proposed model (\textsc{Prism-Mix}) outperforms others in all flavors of summarization.}
\vspace{-2ex}
\label{fig:image_results}
\end{figure*}

\vspace{-1ex}
\noindent\textbf{Baselines and Implementation details:} We compare use of MI and CMI functions in \AlgRef{algo:tss} with other existing approaches. Specifically, for MI functions we use \textsc{Logdetmi}, \textsc{Gcmi}, \textsc{Flvmi} and, \textsc{Flqmi}.
% and \textsc{Gcmi} + Diversity (equivalent to an intuitive approach of minimizing average gradient difference with the target, see %Eq.~\ref{eqn:tdss-grad-diff} and 
% Lemma~\ref{lemma:grad-error}). %In each case, the similarity kernel matrix is computed using gradients of the unlabaled dataset (using hypothesized labels) and the target. 
As baselines, we consider acquisition functions from the active learning literature; {\em viz.}, entropy sampling (\textsc{Entropy}), \textsc{Badge} \cite{ash2020deep}, and \textsc{Glister-Active} \cite{killamsetty2020glister}. We run the active learning baselines only for one iteration to be consistent with our targeted learning setting ({\em i.e.}, we select from the unlabeled set only once). Since these active learning baselines do not explicitly have information of the target set, to further strengthen the comparison, we also compare against two variants that are target-aware. The first is `targeted entropy sampling' (\textsc{Entropy-Tss}), where a product of the uncertainty and the similarity with the target is used to identify the subset, and the second is \textsc{Glister} for targeted subset selection (\textsc{Glister-Tss}), %(Lemma~\ref{lemma:glister-tss-spcase}) 
where the target set is used in the bi-level optimization. We also compare against \textsc{Grad-Match} \cite{killamsetty2021grad}, which mines for a subset such that the weighted difference in the gradients with the target set is minimized. 
% For these approaches, we use the code published by their authors~\cite{distil-online}. 
Lastly, we also include vanilla FL and random sampling as baselines. 
% Finally, we also compare with pure diversity/representation functions (FL, GC, LogDet, Disparity-Sum (DSUM)) and random sampling. 
For all datasets except MNIST, we train a ResNet-18 model~\cite{he2016deep}. For MNIST, we train a LeNet model~\cite{lecun1989backpropagation}. We use the cross-entropy loss and the SGD optimizer until training accuracy exceeds 99\%. After augmenting the train set with the labeled version of the selected subset and re-training the model, we report the average gain in accuracy for the target classes and the overall gain in accuracy across all classes. The numbers reported are averaged over 10 runs of randomly picking any two classes for the target. We run \AlgRef{algo:tss} for different budgets and also study the effect of budget on the performance. We set the internal parameters to default values of 1. All experiments were run on an NVIDIA RTX 2080Ti GPU. \looseness-1

\noindent \textbf{Results: } 
% In Table 1, % Table~\ref{tab:cifar-mnist-results}, 
% we report the results for a budget of 400 for CIFAR-10 and 70 for MNIST.
% To match the real world setting, we set the target set to be much smaller than the budget; {\em viz.}, around 10\% of the budget.
% % amounting to 44 for CIFAR-10 and 6 for MNIST.
We report the effect of budget on the average gain in accuracy of the target classes in Fig.~\ref{fig:TL}(a-d). On all datasets, MI functions yield the best improvement in terms of accuracy on the target classes, {\em viz.}, $\approx$ 20-30\% gain over the model's performance before re-training with the added targeted subset. While this gain is $\approx$ 12\% higher than that of other methods, this also simultaneously improves the overall accuracy by $\approx$ 2-10\% over other methods. Owing to their richer modeling, the MI functions consistently outperform all baselines across all budgets. This is also evident by the fact that MI functions select the most number of data points from the targeted classes (see \figref{fig:TL}(e-f)). Further, recall the discussion on the behavior of different MI functions in Section~\ref{sec:model}. As expected, \textsc{Flvmi}, \textsc{Flqmi} and \textsc{Logdetmi} functions that model both query-relevance and diversity, perform better than a) functions which tend to prefer relevance ({\em viz.}, \textsc{Gcmi}, \textsc{Entropy-Tss}) and b) functions which tend to prefer diversity/representation ({\em viz.}, \textsc{Badge} and FL).
% , GC, DSUM, LogDet). 
%\todo{Should this not point have come out earlier in the paper, even if somewhat abstractly? It was an insight of this form that I had mentioned for the contribution}
Also, we observe that as the budget is increased, the MI functions outperform other methods by greater margins on the target class accuracy (Fig.~\ref{fig:TL}). We run targeted learning for higher budgets on all datasets, and we observe that the MI functions achieve $20 \times$ to $50 \times$ labeling efficiency in obtaining the same accuracy on rare classes when compared to random and $2\times$ to $4\times$ compared to the best performing baseline (see Appendix~\ref{app:tss-exp} for more details). Additionally, we perform an ablation study to compare the performance of MI functions with the CMI functions and observe that they are at par with each other (see \figref{fig:TL}(g-i)). Finally, we do a pairwise t-test to compare the performance of all methods and observe that the MI functions (particularly \textsc{Flvmi} and \textsc{Flqmi}) statistically significantly outperform all baselines (see Appendix~\ref{app:tss-exp}). From a computational perspective, \textsc{Flqmi} and \textsc{Gcmi} are the fastest in terms of running time and scalability and hence \textsc{Flqmi} is the preferred MI function given its scalability and consistent performance.\looseness-1
% This is expected, as the other methods are not effective in factoring in the target.
%We provide more details on the experimental setup and further discussion on these results in Appendix~\ref{app:tss-exp}.

% \begin{figure*}
% \centering
% \includegraphics[width = 14cm, height=1cm]{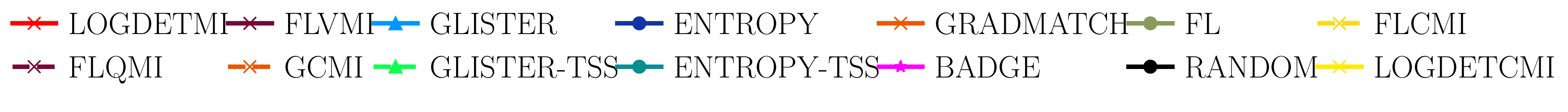}
% \centering
% % \begin{center}
% \begin{subfigure}[]{0.33\textwidth}
% \includegraphics[width = \textwidth]{plots/svhn_scmi_smi.pdf}
% \end{subfigure} 
% \begin{subfigure}[]{0.33\textwidth}
% \includegraphics[width = \textwidth]{plots/mnist_smi_scmi.pdf}
% \end{subfigure}
% \begin{subfigure}[]{0.33\textwidth}
% \includegraphics[width = \textwidth]{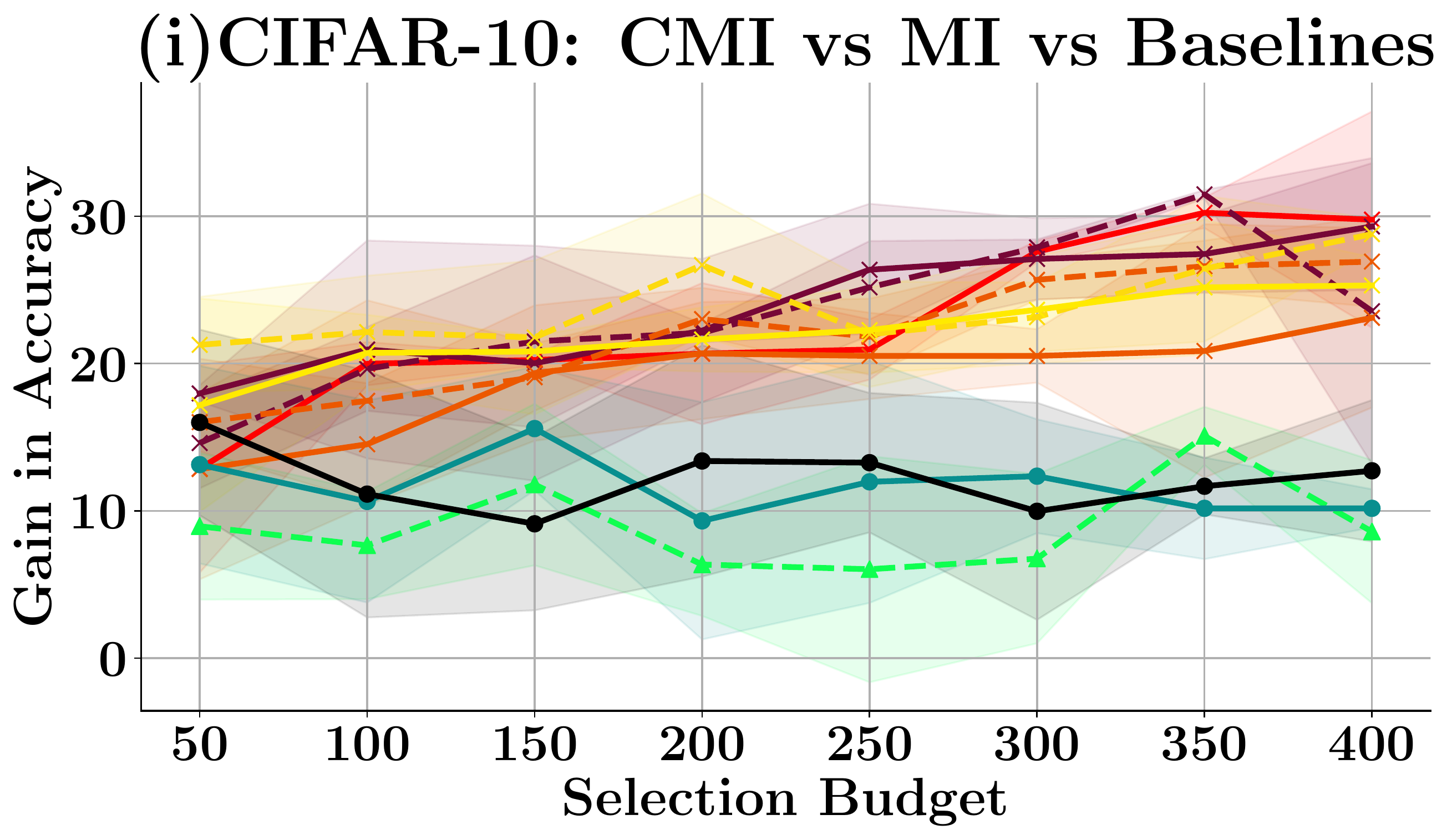}
% \end{subfigure}
% % \end{center}
% \caption{Ablation study.
% \vspace{-3ex}}
% \label{fig:TL}
% \end{figure*}

\subsection{Guided Summarization}\label{sec:exp-tsum}

\noindent \textbf{Dataset and Implementation Details: } We use the image-collections dataset of~\cite{tschiatschek2014learning}. The dataset has 14 image collections with 100 images each and provides ~50-250 human summaries per collection. We extend it by acquiring dense noun concept annotations (objects and scenes) for every image by pseudo-labelling using pre-trained off-the-shelf networks (Yolov3 pre-trained on OpenImagesv6 and ResNet50 pre-trained on Place365) followed by human correction. We designed query and private sets in a spirit similar to~\cite{sharghi2017query} and acquired query-focused, privacy-preserving, and joint query-focused and privacy-preserving human summaries for every image collection. To instantiate the mixture model components, we extract concepts from images using the aforementioned pre-trained off-the-shelf networks and represent them, as well as the concept queries, by a $|\Ccal|$-dimensional vector, where $\Ccal$ is the universe of concepts. %While more complex queries and methods of learning joint embedding between text and images could be employed, we chose simpler alternatives to stick to the main focus area of this work. 
Our mixture model (\textsc{Prism-Mix}) has four components which are the appropriate instantiations (MI/CG/CMI) of functions - GC, LogDet, FL and COM. The mixture weights as well as the internal parameters ($\lambda, \nu, \eta$) are learned using the {\em train} set. Following~\cite{tschiatschek2014learning}, we perform leave-one-out cross validation and report average V-ROUGE across 14 runs. We also normalize V-ROUGE {\em s.t.} the human average is 1 and the random average is 0. %All experiments were run on an NVIDIA RTX 2080Ti GPU. 
We provide further details of the dataset and implementation in Appendix~\ref{app:real-sum}. \looseness-1

\noindent \textbf{Results: } We present the guided summarization results in Fig.~\ref{fig:image_results}. As discussed in Section~\ref{sec:connections}, the individual components of our mixture model have been used as models in previous works on document and video summarization. Hence, we compare the performance of \textsc{Prism-Mix} against the performance of the individual components as our baselines. 
% Thus, in the absence of any explicit past work on targeted summarization for image collections, we compare the performance of \textsc{Prism-Mix} against the performance of the individual components. 
Also, to confirm the positive effect of jointly learning the parameters of \model{} along with the mixture weights, we compare \textsc{Prism-Mix} against a mixture model (\textsc{Mixture}) of exactly the same components, but with only the model weights $w$ being learned. Other internal parameters ($\lambda, \eta, \nu$)
%internal parameters 
are set to fixed values of 1. We observe that \textsc{Prism-Mix} outperforms other techniques, including \textsc{Mixture} on all flavors of summarization (see \figref{fig:image_results}). This is expected, as the joint learning of parameters offered by \model{} (\secref{sec:tsum}) enables producing summaries that can better imitate the complexities of the ground-truth summaries. \looseness-1 %, hence confirming the effectiveness of proposed learning framework based on \model{}. %, especially of the joint learning of the parameters. 

\vspace{-1ex}
\section{Conclusion}
\label{sec:conclusion}
We presented \model{}, %, a novel framework using parameterized submodular information measures for targeted subset selection. The 
a rich class of functions for guided subset selection. \model{} allows to model a broad spectrum of semantics across query-relevance, diversity, query-coverage and privacy-irrelevance. We demonstrated its effectiveness in targeted learning as well as in guided summarization. We showed that \model{} has interesting connections to several past work, further reinforcing its utility. Through several experiments on targeted learning and guided summarization for diverse datasets, we empirically verified the superiority of \model{} over existing methods.
% A potential limitation of \model{} is that it assumes a) availability of an explicit target set; b) its representation in the ground set; and c) the availability of appropriate features of data points and target so as to be able to compute the similarity kernels required by the different functions. 

% \section*{References}

\section*{Acknowledgments and Disclosure of Funding}
We thank anonymous reviewers and Nathan Beck for providing constructive feedback. This work is supported by the National Science Foundation under Grant No. IIS-2106937, a startup grant from UT Dallas, and by a Google and Adobe research award. Any opinions, findings, and conclusions or recommendations expressed in this material are those of the authors and do not necessarily reflect the views of the National Science Foundation, Google or Adobe.

\bibliography{main.bib}

\newpage
\clearpage
%%%%%%%%%%%%%%%%%%%%%%%%%%%%%%%%%%%%%%%%%%%%%%%%%%%%%%%%%%%%

%%%%%%%%%%%%%%%%%%%%%%%%%%%%%%%%%%%%%%%%%%%%%%%%%%%%%%%%%%%%

\onecolumn
\appendix

\setcounter{page}{1}

\begin{center}
    \Huge{Appendix}
\end{center}
% Start the appendix part
\parttoc % Insert the appendix TOC

\section{Summary of Notations}\label{app:notation-summary}

We present a summary of notations used throughout this paper in Table~\ref{tab:main-notations}

%BIG TABLE STARTS HERE
 \begin{table*}[!h]
 \centering
 \arrayrulecolor[rgb]{0.192,0.192,0.192}
 %\begin{adjustbox}{max width=0.48\textwidth}
 \begin{tabular}{|l|l|p{0.5\textwidth}|} 
 \toprule
 %\hline 
 %\hline 
 \multicolumn{1}{|l|}{Topic} & Notation & Explanation \\ \hline 
 \toprule
 &  $\Vcal$ & Ground set of $n$ instances\\ 
 \multicolumn{1}{|p{0.20\textwidth}|}{\model{} }
 & $\Vcal^{\prime}$ & Auxiliary set containing private set or query set\\ 
 & $\Omega$ & $\Vcal \cup \Vcal^{\prime}$\\ 
 & $\Acal$ & A subset of $\Vcal$\\ 
 & $S_{\Acal, \Bcal}$ & Cross-similarity matrix between the items in sets $\Acal$ and $\Bcal$\\ 
 & $S_{\Acal\Bcal}$ & Similarity matrix for items in $\Acal \cup \Bcal$\\ 
 & $\lambda$ & Parameter governing trade-off between representation and diversity in GC\\ 
 & $\eta$ & Parameter governing trade-off between query-relevance and diversity in MI and CMI functions\\ 
 & $\nu$ & Parameter governing hardness of privacy constraints in CG and CMI functions\\ 
 \hline\hline
 & $\Ecal$ & Initial set of $|\Ecal|$ labeled instances\\ 
 \multicolumn{1}{|p{0.20\textwidth}|}{\model{} for Task 1: Improving a model's accuracy (\AlgRef{algo:tss})}
  
 & $\Ucal$ & Set of $|\Ucal|$ instances in unlabeled data set \\ 
 &  $\Tcal$ & Set of $|\Tcal|$ instances in the target/query set \\ 
 &  $\gamma g(\Acal)$ & Diversity function that can be added to MI function in Algorithm~\ref{algo:tss} with weight $\gamma$ \\ 
%  & $\Lcal(.,\theta)$ & Loss function for the machine learning model with associated parameter vector $\theta$ \\
 \hline\hline
 & $\Pcal$ (as $\Tcal$) & Private set or conditioning set for targeted summarization\\ 
 \multicolumn{1}{|p{0.20\textwidth}|} {\model{} for Task 2: Targeted summarization}
 & $\Qcal$ (as $\Tcal$) & Query set for targeted summarization\\ 
 & $\Fcal(\Theta)$ & Mixture model using \model{} with parameters $\Theta$\\
 & $\Lcal_n(\Theta)$ & Generalized hinge loss of training example $n$, with parameter $\Theta = (\wb, \eta, \lambda, \nu)$\\
 
%  \multicolumn{1}{|c|}{Loss Functions}
% & $\Lcal(.,\theta)$ & Loss function for the \modelmachine learning model with associated parameter vector $\theta$ \\
% & $\Lcal_n(\Theta)$ & Generalized hinge loss of training example $n$, with parameter $\Theta = (\wb, \eta, \lambda, \nu)$, This used in \modelsum\\
% & $L$ & Generic reference to the loss function which when evaluated on $x_i$ is referred to as $L^i$.   \\ 
 \bottomrule
 \end{tabular}
 %\end{adjustbox}
 \arrayrulecolor{black}
 \caption{Summary of notations used throughout this paper}
 \label{tab:main-notations}
 \end{table*}

\section{Proofs for Generalized Submodular Mutual Information (GMI) Functions - Restricted Submodularity}

\subsection{Properties of Generalized Submodular Mutual Information Functions}
\label{app:gsmi}

\begin{lemma} 
Given a restricted submodular function $f$ on $\mathcal C(\Vcal, \Vcal^{\prime})$, $I_f(\Acal; \Bcal) \geq 0$ for $\Acal \subseteq \Vcal, \Bcal \subseteq \Vcal^{\prime}$. Also, $I_f(\Acal; \Bcal)$ is monotone in $\Acal \subseteq \Vcal$ for fixed $\Bcal \subseteq \Vcal^{\prime}$ (equivalently, $I_f(\Acal; \Bcal)$ is monotone in $\Bcal \subseteq \Vcal^{\prime}$ for fixed $\Acal \subseteq \Vcal$).
\end{lemma}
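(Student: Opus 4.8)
The plan is to prove non-negativity and monotonicity directly from the definition $I_f(\Acal;\Bcal) = f(\Acal) + f(\Bcal) - f(\Acal \cup \Bcal)$ together with restricted submodularity on $\Ccal(\Vcal,\Vcal^\prime)$. First I would establish non-negativity. The key observation is that for $\Acal \subseteq \Vcal$ and $\Bcal \subseteq \Vcal^\prime$, the pair $(\Acal,\Bcal)$ lies in $\Ccal(\Vcal,\Vcal^\prime)$ (indeed $\Acal \subseteq \Vcal$ and $\Bcal$ is any set, so condition (i) is met), and moreover $\Acal \cap \Bcal = \emptyset$ since $\Vcal$ and $\Vcal^\prime$ may be taken disjoint in the relevant instantiations (or more carefully, we only need the submodular inequality applied to this pair). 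Applying restricted submodularity to $(\Acal,\Bcal)$ gives $f(\Acal) + f(\Bcal) \geq f(\Acal \cup \Bcal) + f(\Acal \cap \Bcal) \geq f(\Acal \cup \Bcal) + f(\emptyset)$, and since $f(\emptyset) \geq 0$ can be assumed (or normalized), we get $I_f(\Acal;\Bcal) \geq 0$. If $\Vcal$ and $\Vcal^\prime$ overlap, I would instead note $f(\Acal \cap \Bcal) \geq f(\emptyset)$ still holds by monotonicity restricted appropriately, but the cleanest route is to invoke the standard fact that MI of a (restricted) submodular normalized monotone function is non-negative on the relevant domain.

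Next I would prove monotonicity in $\Acal$ for fixed $\Bcal \subseteq \Vcal^\prime$. Fix $\Acal_1 \subseteq \Acal_2 \subseteq \Vcal$ and a new element $j \in \Vcal \setminus \Acal_2$; it suffices to show $I_f(\Acal \cup j; \Bcal) \geq I_f(\Acal;\Bcal)$, i.e.\ the marginal $I_f(j \mid \Acal; \Bcal) \geq 0$ where $\Acal \subseteq \Vcal$. Expanding, $I_f(\Acal \cup j;\Bcal) - I_f(\Acal;\Bcal) = \big(f(\Acal \cup j) - f(\Acal)\big) - \big(f(\Acal \cup \Bcal \cup j) - f(\Acal \cup \Bcal)\big) = f(j \mid \Acal) - f(j \mid \Acal \cup \Bcal)$. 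So monotonicity of $I_f$ in $\Acal$ reduces to the statement $f(j \mid \Acal) \geq f(j \mid \Acal \cup \Bcal)$, which is precisely a diminishing-returns inequality for the pair of sets $\Acal \cup j$ and $\Acal \cup \Bcal$ (whose union is $\Acal \cup \Bcal \cup j$ and whose intersection is $\Acal$). I must check this pair lies in $\Ccal(\Vcal,\Vcal^\prime)$: take $\Xcal = \Acal \cup j \subseteq \Vcal$ (condition (i) applies since $\Acal \subseteq \Vcal$ and $j \in \Vcal$), and $\Ycal = \Acal \cup \Bcal$ is an arbitrary set — so the pair is admissible. Restricted submodularity then yields $f(\Acal \cup j) + f(\Acal \cup \Bcal) \geq f(\Acal \cup \Bcal \cup j) + f(\Acal)$, which rearranges exactly to $f(j\mid\Acal) \geq f(j \mid \Acal \cup \Bcal)$, completing the step. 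The symmetric claim — monotonicity in $\Bcal \subseteq \Vcal^\prime$ for fixed $\Acal$ — follows by the identical argument with the roles of $\Vcal$ and $\Vcal^\prime$ swapped, using condition (ii) (or (i) with $\Xcal \subseteq \Vcal^\prime$) of the definition of $\Ccal(\Vcal,\Vcal^\prime)$.

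The main obstacle, and the step requiring the most care, is verifying at each application that the particular pairs of sets I feed into the restricted submodular inequality genuinely lie in $\Ccal(\Vcal,\Vcal^\prime)$ — unlike ordinary submodularity, I cannot apply the inequality to arbitrary pairs, so every use must be justified by exhibiting one of the two sets as a subset of $\Vcal$ or of $\Vcal^\prime$. Fortunately in both the non-negativity step and the monotonicity step, one of the two sets in play is always a subset of $\Vcal$ (for the $\Acal$-monotonicity) or of $\Vcal^\prime$ (for the $\Bcal$-monotonicity and in non-negativity via $\Bcal$), so the admissibility conditions are met. A secondary technical point is the normalization $f(\emptyset) \geq 0$ (or $= 0$), which I would state as a standing assumption consistent with the rest of the paper; the inequality $f(\Acal \cap \Bcal) \geq f(\emptyset)$ used in non-negativity then follows from monotonicity of $f$ restricted to chains inside $\Vcal$, which is itself a consequence of restricted submodularity plus non-negativity of singletons.
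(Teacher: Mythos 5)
Your proposal is correct and follows essentially the same route as the paper's proof: non-negativity from applying the restricted submodular inequality to the admissible pair $(\Acal,\Bcal)$, and monotonicity by expanding the marginal $I_f(\Acal\cup j;\Bcal)-I_f(\Acal;\Bcal)=f(j\mid\Acal)-f(j\mid\Acal\cup\Bcal)$ and applying the inequality to the pair $(\Acal\cup\{j\},\Acal\cup\Bcal)$, which is admissible since $\Acal\cup\{j\}\subseteq\Vcal$. You are in fact slightly more careful than the paper, which silently drops the $f(\Acal\cap\Bcal)$ term in the non-negativity step that you correctly handle via disjointness of $\Vcal$ and $\Vcal^{\prime}$ and normalization $f(\emptyset)\geq 0$.
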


\begin{proof}
The non-negativity of the generalized submodular mutual information follows from the definition. In particular, since $\Acal \subseteq \Vcal$ and $\Bcal \subseteq \Vcal^{\prime}$, it holds that $$I_f(\Acal; \Bcal) = f(\Acal) + f(\Bcal) - f(\Acal \cup \Bcal) \geq 0$$ because $f$ is restricted submodular on $C(\Vcal, \Vcal^{\prime})$. Next, we prove the monotonicity. We have, $$I_f(\Acal \cup j; \Bcal) - I_f(\Acal; \Bcal) = [f(j | \Acal) - f(j | \Acal \cup \Bcal)], \forall j \in \Vcal \backslash A$$ Given that $f$ is restricted submodular on $\mathcal C(\Vcal, \Vcal^{\prime})$, it holds that $$[f(j | \Acal) - f(j | \Acal \cup \Bcal)] \geq 0$$ since $$f(\Acal \cup \{j\}) + f(\Acal \cup \Bcal)\geq f(\Acal) + f(\Acal \cup \Bcal \cup j)$$ which follows since the submodularity inequality holds as long as one of the sets is a subset of either $\Vcal$ or $\Vcal^{\prime}$ (i.e. both subsets do not non-empty intersection with $\Vcal$ and $\Vcal^{\prime}$. Thus $$I_f(\Acal \cup \{j\}; \Bcal) - I_f(\Acal; \Bcal) \geq 0$$ and hence monotone. %The non-negativity follows from the monotonicity of GMI and the fact that $I_f(\emptyset, \Bcal) = 0$.
\end{proof}

\subsection{Concave Over Modular as GMI}
\label{app:com}
%Define the following query based submodular function:
% \begin{align}
%   F_{\eta}(\Acal; \Qcal) = \eta \sum_{i \in \Acal} \psi(\sum_{j \in \Qcal}S_{ij}) + \sum_{j \in \Qcal} \psi(\sum_{i \in \Acal} S_{ij})
% \end{align}
% This is a very general class of functions ~\cite{bilmes2017deep} and does very well in query-focused extractive document summarization~\cite{lin2011class,lin2012submodularity} with $\eta = 0$ and the concave function as square root. Next, define the following restricted submodular function:
% \begin{align}
%     f_{\eta}(S) = \eta \sum_{i \in \Vcal^{\prime}} \max(\psi(\sum_{j \in S \cap \Vcal} S_{ij}), \psi(\sqrt{n}\sum_{j \in S \cap \Vcal^{\prime}} S_{ij})) +  \sum_{i \in \Vcal} \max(\psi(\sum_{j \in S \cap \Vcal^{\prime}} S_{ij}), \psi(\sqrt{n}\sum_{j \in S \cap \Vcal} S_{ij}))
% \end{align}
% The following result connects GSMI with $f_{\eta}$ with $F_{\eta}(\Acal; \Qcal)$.
% \begin{lemma}
% The function $f_{\eta}(S)$ is a restricted submodular function on $\mathcal C(\Vcal, \Vcal^{\prime})$. Furthermore the GSMI with $f_{\eta}$ is exactly $F_{\eta}(\Acal; \Qcal)$, given a kernel matrix which satisfies $S_{ij} = 1(i == j)$ for $i, j \in \Vcal$ or $i, j \in \Vcal^{\prime}$.
% \end{lemma}
\begin{numlem} 
The function $f_{\eta}(\Acal)$ is a restricted submodular function on $\Ccal(\Vcal, \Vcal^{\prime})$. Furthermore, the GMI with $f_{\eta}$ is exactly $I_{f_{\eta}}(\Acal; \Qcal) = \eta \sum_{i \in \Acal} \psi(\sum_{j \in \Qcal}S_{ij}) + \sum_{j \in \Qcal} \psi(\sum_{i \in \Acal} S_{ij})$, given a kernel matrix which satisfies $S_{ij} = 1(i == j)$ for $i, j \in \Vcal$ or $i, j \in \Vcal^{\prime}$. The CG and CMI expressions are not particularly useful in this case.
\end{numlem}
\begin{proof}
Assume that the kernel matrix $S_{ij} \leq 1, \forall i, j \in \Omega$. Also, we are given that $S_{ij} = 1(i == j)$ for $i, j \in \Vcal$ or $i, j \in \Vcal^{\prime}$. Next, notice that:
\begin{align}
    f(\Acal) = \eta \sum_{i \in \Vcal^{\prime}} \psi(\sum_{j \in \Acal} S_{ij}) + \sum_{i \in \Acal} \psi(\sqrt{n})
\end{align}
This holds because the $S$ kernel is an identity kernel within $\Vcal$ and $\Vcal^{\prime}$ and only has terms in the cross between the two sets. Similarly, \begin{align}
    f(\Qcal) =  \sum_{i \in \Vcal} \psi(\sum_{j \in \Qcal} S_{ij}) + \eta \sum_{i \in \Qcal} \psi(\sqrt{n})
\end{align}
Finally, we obtain $f(\Acal \cup \Qcal)$:
\begin{align}
    f(\Acal \cup \Qcal) =  \eta \sum_{i \in \Vcal^{\prime} \setminus \Qcal} \psi(\sum_{j \in \Acal} S_{ij}) + \eta \sum_{i \in \Qcal} \psi(\sqrt{n}) + \sum_{i \in \Vcal \setminus \Acal} \psi(\sum_{j \in \Qcal} S_{ij}) + \sum_{i \in \Acal} \psi(\sqrt{n})
\end{align}
Combining all the three terms together, we obtain $f(\Acal) + f(\Qcal) - f(\Acal \cup \Qcal) = \eta \sum_{i \in \Acal} \psi(\sum_{j \in \Qcal}S_{ij}) + \sum_{j \in \Qcal} \psi(\sum_{i \in \Acal} S_{ij}) = F_{\eta}(\Acal; \Qcal)$.

Finally, to show that $f_{\eta}(S)$ is restricted submodular, notice that $f(\Acal)$ is submodular if $\Acal$ is restricted to either $\Vcal$ or $\Vcal^{\prime}$. Similarly, given sets $\Acal \subseteq \Vcal, \Bcal \subseteq \Vcal^{\prime}$, it holds that $f(\Acal) + f(\Bcal) - f(\Acal \cup \Bcal) = I_f(\Acal; \Bcal) \geq 0$ which implies the restricted submodularity of $f_{\eta}(S)$.

Like FL (v2), the expressions for CG and CMI don't make sense in COM since they require computing terms over $\Vcal^{\prime}$, which we do not have access to.
\end{proof}

\section{Proofs for instantiations used in \model{}}

\subsection{Log-Determinant Based Information Measures: \textsc{Logdetmi}, \textsc{Logdetcg} and \textsc{Logdetcmi}}
\label{app:logdet}
% \begin{numlem}
% Denote $S_{\Acal\Qcal}$ as the pairwise similarity matrix between the items in $\Acal$ and $\Qcal$. Then, $I_f(\Acal; \Qcal) = -\log \det(I - S_{\Acal}^{-1}S_{\Acal\Qcal}S_{\Qcal}^{-1}S_{\Acal\Qcal}^T) = \log\det(S_{\Acal}) - \log\det(S_{\Acal} - S_{\Acal\Qcal}S_{\Qcal}^{-1}S_{\Acal\Qcal}^T)$. The Conditional Gain takes the form $f(\Acal | \Pcal ) = \log\det(S_{\Acal} - S_{\Acal\Pcal}S_{\Pcal}^{-1}S_{\Acal\Pcal}^T)$ and the Conditional Submodular Mutual Information can be written as $I_f(\Acal; \Qcal |\Pcal ) = \log \frac{\det(I - S_{\Pcal}^{-1} S_{\Pcal, \Qcal} S_{\Qcal}^{-1} S_{\Pcal, Q}^T)}{\det(I - S_{\Acal \cup \Pcal}^{-1} S_{\Acal \cup \Pcal, \Qcal} S_{\Qcal}^{-1} S_{\Acal \cup \Pcal, \Qcal}^T)}$
% \end{numlem}
\begin{lemma}
Setting $f(\Acal) = \log\det(S_{\Acal})$, we have: $I_f(\Acal; \Qcal) = \log\det(S_{\Acal}) - \log\det(S_{\Acal} - \eta^2 S_{\Acal, \Qcal}S_{\Qcal}^{-1}S_{\Acal, \Qcal}^T)$, and $f(\Acal | \Pcal ) = \log\det(S_{\Acal} - \nu^2 S_{\Acal, \Pcal}S_{\Pcal}^{-1}S_{\Acal, \Pcal}^T)$. Similarly, $I_f(\Acal; \Qcal |\Pcal ) = \log \frac{\det(I - S_{\Pcal}^{-1} S_{\Pcal, \Qcal} S_{\Qcal}^{-1} S_{\Pcal, \Qcal}^T)}{\det(I - S_{\Acal \Pcal}^{-1} S_{\Acal \Pcal, \Qcal} S_{\Qcal}^{-1} S_{\Acal \Pcal, \Qcal}^T)}$
\end{lemma}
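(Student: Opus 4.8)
The plan is to reduce all three identities to repeated use of the Schur-complement determinant formula: for a symmetric positive definite matrix partitioned into a block $A$, a block $C$, and off-diagonal block $B$ (so the matrix is $\left[\begin{array}{cc}A & B\\ B^T & C\end{array}\right]$), one has $\det = \det(C)\det(A - B C^{-1}B^T) = \det(A)\det(C - B^T A^{-1}B)$. Throughout I would assume $S$ (hence also the parameterized matrix $S^{\eta,\nu}$) is positive definite, so that every principal submatrix is invertible and the identities make sense. The parameter factors $\eta^2$ and $\nu^2$ then enter purely because $S^{\eta,\nu}$ leaves the within-block entries unchanged, $S^{\eta,\nu}_{\Acal} = S_{\Acal}$, $S^{\eta,\nu}_{\Qcal}=S_{\Qcal}$, $S^{\eta,\nu}_{\Pcal}=S_{\Pcal}$, while $S^{\eta,\nu}_{\Acal,\Qcal} = \eta S_{\Acal,\Qcal}$ and $S^{\eta,\nu}_{\Acal,\Pcal}=\nu S_{\Acal,\Pcal}$.

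For the MI identity I would write $f(\Acal\cup\Qcal) = \log\det S^{\eta,\nu}_{\Acal\cup\Qcal}$ and view $S^{\eta,\nu}_{\Acal\cup\Qcal}$ as the block matrix with diagonal blocks $S_{\Acal}$, $S_{\Qcal}$ and off-diagonal block $\eta S_{\Acal,\Qcal}$. Factoring out the $\Qcal$-block via the Schur complement gives $\det S^{\eta,\nu}_{\Acal\cup\Qcal} = \det(S_{\Qcal})\det(S_{\Acal} - \eta^2 S_{\Acal,\Qcal}S_{\Qcal}^{-1}S_{\Acal,\Qcal}^T)$. Substituting this into $I_f(\Acal;\Qcal) = \log\det S_{\Acal} + \log\det S_{\Qcal} - \log\det S^{\eta,\nu}_{\Acal\cup\Qcal}$ makes the $\log\det S_{\Qcal}$ terms cancel and leaves the claimed formula. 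The CG identity is the same computation with $\Pcal$ in place of $\Qcal$: $f(\Acal\mid\Pcal) = \log\det S^{\eta,\nu}_{\Acal\cup\Pcal} - \log\det S_{\Pcal}$, and factoring out the $\Pcal$-block cancels $\log\det S_{\Pcal}$, leaving $\log\det(S_{\Acal} - \nu^2 S_{\Acal,\Pcal}S_{\Pcal}^{-1}S_{\Acal,\Pcal}^T)$.

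The CMI identity (stated with $\eta=\nu=1$) is the one requiring care. I would begin from the elementary decomposition $I_f(\Acal;\Qcal\mid\Pcal) = f(\Qcal\mid\Pcal) - f(\Qcal\mid\Acal\cup\Pcal)$, which follows directly from the definition of CMI. For each conditional-gain term I would apply the Schur complement in the direction that extracts $\det S_{\Qcal}$ rather than the conditioning block: e.g. $\det S_{\Qcal\cup\Pcal} = \det(S_{\Qcal})\det(S_{\Pcal} - S_{\Pcal,\Qcal}S_{\Qcal}^{-1}S_{\Pcal,\Qcal}^T) = \det(S_{\Qcal})\det(S_{\Pcal})\det(I - S_{\Pcal}^{-1}S_{\Pcal,\Qcal}S_{\Qcal}^{-1}S_{\Pcal,\Qcal}^T)$, so that $f(\Qcal\mid\Pcal) = \log\det S_{\Qcal} + \log\det(I - S_{\Pcal}^{-1}S_{\Pcal,\Qcal}S_{\Qcal}^{-1}S_{\Pcal,\Qcal}^T)$, and likewise with $\Acal\cup\Pcal$ replacing $\Pcal$. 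Subtracting, the $\log\det S_{\Qcal}$ terms cancel and combining the two logs into one gives the stated ratio of determinants. The main obstacle — the only genuinely non-mechanical choice — is precisely this: selecting the correct side of the Schur complement in the CMI step so the common $\det S_{\Qcal}$ factor is exposed and cancels, together with choosing the right conditional-gain decomposition of CMI; once positive-definiteness secures all the inverses, everything else is bookkeeping. I would close by noting that the general parameterized CMI expression is obtained by carrying the $\eta,\nu$ scalings through the identical computation, deferred (as in the paper) to the extended derivation.
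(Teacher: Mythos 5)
Your proposal is correct and follows essentially the same route as the paper: both derivations reduce each identity to the Schur-complement determinant factorization $\det(S_{\Xcal\cup\Ycal})=\det(S_{\Ycal})\det(S_{\Xcal}-S_{\Xcal,\Ycal}S_{\Ycal}^{-1}S_{\Xcal,\Ycal}^{T})$ applied so that the common block's determinant cancels, with the $\eta^{2},\nu^{2}$ factors entering only through the rescaled cross-blocks of $S^{\eta,\nu}$. Your CMI decomposition $I_f(\Acal;\Qcal\mid\Pcal)=f(\Qcal\mid\Pcal)-f(\Qcal\mid\Acal\cup\Pcal)$ is just a rewriting of the paper's $I_f(\Acal;\Qcal\mid\Pcal)=I_f(\Acal\cup\Pcal;\Qcal)-I_f(\Qcal;\Pcal)$ (they differ by a cancelling $f(\Qcal)$ term), and both expose the same $\det S_{\Qcal}$ cancellation, so the arguments coincide.
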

\begin{proof}
Given a positive semi-definite matrix $S$, the Log-Determinant Function is $f(\Acal) = \log \det(S_{\Acal})$ where $S_{\Acal}$ is a sub-matrix comprising of the rows and columns indexed by $\Acal$. The following expressions follow directly from the definitions. The MI is: $I_f(\Acal; \Qcal) = \log \frac{\det(S_{\Acal}) \det(S_{\Qcal})}{\det(S_{\Acal \cup \Qcal})}$, CG is: $f(\Acal | \Pcal ) = \log \frac{\det(S_{\Acal \cup \Pcal})}{\det(S_{\Pcal})}$ and CMI is $I_f(\Acal; \Qcal |\Pcal ) = \log \frac{\det(S_{\Acal \cup \Pcal}) \det(S_{Q \cup \Pcal})}{\det(S_{\Acal \cup \Qcal \cup \Pcal}) \det(S_{\Pcal})}$.

Next, note that using the Schur's complement, $\det(S_{\Acal \cup \Bcal}) = \det(S_{\Acal}) \det(S_{\Acal \cup \Bcal} \backslash S_{\Acal})$ where,
$$S_{\Acal \cup \Bcal} \backslash S_{\Acal} = S_{\Bcal} - S_{\Acal \Bcal}^T S_{\Acal}^{-1} S_{\Acal \Bcal}$$
where $S_{\Acal \Bcal}$ is a $|\Acal| \times |\Bcal|$ matrix and includes the cross similarities between the items in sets $\Acal$ and $\Bcal$. Similarly, 
$$S_{\Acal \cup \Bcal} \backslash S_{\Bcal} = S_{\Acal} - S_{\Acal \Bcal} S_{\Bcal}^{-1} S_{\Acal \Bcal}^T$$
As a result, the Mutual Information becomes:
\begin{align*}
    I_f(\Acal; \Qcal) &= -\log [\det(S_{\Acal} - S_{\Acal \Qcal} S_{\Qcal}^{-1} S_{\Acal \Qcal}^T) \det(S_{\Acal}^{-1})] \\
            &= -\log \det(I - S_{\Acal \Qcal} S_{\Qcal}^{-1} S_{\Acal \Qcal}^T S_{\Acal}^{-1}) \\
            &= -\log \det(I - S_{\Acal}^{-1} S_{\Acal \Qcal} S_{\Qcal}^{-1} S_{\Acal \Qcal}^T )
\end{align*}
For the CG,  
\begin{align*}
    f(\Acal | \Pcal ) &= f(\Acal) - I_f(\Acal; \Pcal) \\
    &= \log \det(S_{\Acal}) - \log\det(S_{\Acal}) + \log \det(S_{\Acal} - S_{\Acal \Qcal} S_{\Qcal}^{-1} S_{\Acal \Qcal}^T) \\
    &= \log \det(S_{\Acal} - S_{\Acal \Qcal} S_{\Qcal}^{-1} S_{\Acal \Qcal}^T)
\end{align*}

Similarly, the proof of the conditional submodular mutual information follows from the simple observation that:
$$I_f(\Acal; \Qcal |\Pcal ) = I_f(\Acal \cup \Pcal; \Qcal) - I_f(\Qcal; \Pcal)$$
Plugging in the expressions of the mutual information of the log-determinant function, from above, we have,
\begin{align*}
    I_f(\Acal \cup \Pcal; \Qcal) &= -\log \det(I - S_{\Acal \cup \Pcal}^{-1} S_{\Acal \cup \Pcal Q} S_{\Qcal}^{-1} S_{\Acal \cup \Pcal Q}^T ) \\
    I_f(\Qcal; \Pcal) &= -\log \det(I - S_{\Qcal}^{-1} S_{\Qcal \Pcal} S_{\Pcal}^{-1} S_{\Qcal \Pcal}^T ) \\
    \therefore I_f(\Acal; \Qcal |\Pcal ) &= \log \det(I - S_{\Pcal}^{-1} S_{\Pcal, \Qcal} S_{\Qcal}^{-1} S_{\Pcal, \Qcal}^T) - \log \det(I - S_{\Acal \cup \Pcal}^{-1} S_{\Acal \cup \Pcal, Q} S_{\Qcal}^{-1} S_{\Acal \cup \Pcal, Q}^T) \\
    &= \log \frac{\det(I - S_{\Pcal}^{-1} S_{\Pcal, \Qcal} S_{\Qcal}^{-1} S_{\Pcal, \Qcal}^T)}{\det(I - S_{\Acal \cup \Pcal}^{-1} S_{\Acal \cup \Pcal, \Qcal} S_{\Qcal}^{-1} S_{\Acal \cup \Pcal, \Qcal}^T)}
\end{align*}
\end{proof}

The proof for CMI implicitly assumes $\eta = \nu = 1$. A simple way to solve this, is as follows. Denote $S_{\Acal \Pcal \nu}$ as the similarity matrix obtained by multiplying $\nu$ to the cross similarity entries. Similarly, denote $S_{\Acal \Pcal\nu, \Qcal\eta}$ as the cross similarity obtained by multiplying $\nu$ to the cross similarity between $\Acal$ and $\Pcal$ and $\eta$ to the cross similarity between $\Acal$ and $\Qcal$. The CMI function with this choice of a similarity matrix is:
\begin{align}
    I_f(\Acal; \Qcal |\Pcal ) = \log \frac{\det(I - S_{\Pcal}^{-1} S_{\Pcal, \Qcal} S_{\Qcal}^{-1} S_{\Pcal, \Qcal}^T)}{\det(I - S_{\Acal \Pcal \nu}^{-1} S_{\Acal \Pcal\nu, \Qcal \eta} S_{\Qcal}^{-1} S_{\Acal \Pcal \nu, \Qcal \eta}^T)}
\end{align}

\subsection{Facility Location based Information Measures: \textsc{Flvmi}, \textsc{Flqmi}, \textsc{Flcg}}
\label{app:fl}
\begin{numthm}
\label{FLMIGen}
Given a similarity kernel $S$, a set $\Ucal \subseteq \Omega$ and the facility location (FL) function $f(\Acal) = \sum_{i \in \Ucal} \max_{j \in \Acal} S_{ij}, \Acal \subseteq \Omega$ the Mutual Information for FL can be written as $I_f(\Acal;\Qcal)=\sum_{i \in \Ucal}\min(\max_{j \in \Acal}S_{ij}, \max_{j \in \Qcal}S_{ij})$.  Similarly, the CG for facility location can be written as $f(\Acal|P)= \sum_{i \in \Ucal} \max(\max_{j \in \Acal} S_{ij} - \max_{j \in P} S_{ij}, 0)$ and the expression for Conditional Submodular Mutual Information can be written as: $I_f(\Acal; \Qcal |\Pcal ) = \sum_{i \in \Ucal} \max(\min(\max_{j \in \Acal} S_{ij}, \max_{j \in \Qcal} S_{ij}) - \max_{j \in P} S_{ij}, 0)$.
\end{numthm}
\begin{proof}
Here we have the facility location set function, $f(\Acal) = \sum_{i \in \Ucal} \max_{j \in \Acal} S_{ij}$ where $s$ is similarity kernel and $U \subseteq \Omega$. Then,
\begin{align*}
I_f(\Acal;\Qcal) &= f(\Acal) + f(\Qcal) - f(\Acal \cup \Qcal) \\ 
&= \sum_{i \in \Ucal} \max_{j \in \Acal} S_{ij} + \max_{j \in \Qcal} S_{ij} - \max_{j \in \Acal \cup \Qcal} S_{ij} \\
&= \sum_{i \in \Ucal} \max_{j \in \Acal} S_{ij} + \max_{j \in \Qcal} S_{ij} - \max(\max_{j \in \Acal} S_{ij}, \max_{j \in \Qcal} S_{ij}) \\
&= \sum_{i \in \Ucal} \min(\max_{j \in \Acal} S_{ij}, \max_{j \in \Qcal} S_{ij})
\end{align*}
For the Conditional Gain we have
\begin{align*}
    f(\Acal|\Pcal) &= \sum_{i \in \Ucal} \max(\max_{j \in \Acal} S_{ij}, \max_{j \in \Pcal} S_{ij}) -  \max_{j \in \Pcal} S_{ij} \\  
 &= \sum_{i \in \Ucal} \max(0, \max_{j \in \Acal} S_{ij} - \max_{j \in \Pcal} S_{ij})
\end{align*}
Finally, we can get the expression for $I_f(\Acal; \Qcal |\Pcal )$ as
\begin{align*}
    I_f(\Acal; \Qcal |\Pcal ) &= f(\Acal|\Pcal) + f(\Qcal|\Pcal) - f(\Acal \cup \Qcal | \Pcal ) \\
    &= \sum_{i in U}[\max(\max_{j in \Acal}S_{ij} - \max_{j in \Pcal}S_{ij}, 0) + \max(\max_{j \in \Qcal}S_{ij} - \max_{j \in \Pcal}S_{ij}, 0) - \max(\max_{j \in \Acal \cup \Qcal}S_{ij} - \max_{j \in P}S_{ij}, 0)] \\
    &= \sum_{i \in \Ucal} \max(\min(\max_{j \in \Acal} S_{ij}, \max_{j \in \Qcal} S_{ij}) - \max_{j \in P} S_{ij}, 0)
\end{align*}
The last step follows from the observation that in $\max(a-c, 0) + \max(b-c, 0) - \max(\max(a,b)-c,0)$ the last term is either the first term or the second term (hence cancelling that out) depending on whether $a > b$ or not.
\end{proof}
Now, we can obtain the expression of \textsc{Flvmi}, \textsc{Flqmi}, \textsc{Flcg} and \textsc{Flcmi} as special cases.
\begin{numcor}
Setting $U = V$ in the expression of MI, CG and CMI in Theorem~\ref{FLMIGen}, we obtain the expression for \textsc{Flvmi} as $I_f(\Acal;\Qcal)=\sum_{i \in \Vcal}\min(\max_{j \in \Acal}S_{ij}, \max_{j \in \Qcal}S_{ij})$, \textsc{Flcg} as $f(\Acal|\Pcal)= \sum_{i \in \Vcal} \max(\max_{j \in \Acal} S_{ij} - \max_{j \in \Pcal} S_{ij}, 0)$ and \textsc{Flcmi} as $I_f(\Acal; \Qcal |\Pcal ) = \sum_{i \in \Vcal} \max(\min(\max_{j \in \Acal} S_{ij}, \max_{j \in \Qcal} S_{ij}) - \max_{j \in P} S_{ij}, 0)$
\end{numcor}
This corollary follows directly from Theorem~\ref{FLMIGen}. We can similarly, also obtain the expression for \textsc{Flqmi}.
% \begin{numlem}
% With $U = \Omega$, and the similarity matrix $S$ is such that $S_{ij} = I(i == j)$, if both $i, j \in \Vcal$ or both $i, j \in \Vcal^{\prime}$, we obtain the expression for \textsc{Flqmi} as $I_f(\Acal; \Qcal) = \sum_{i \in \Qcal} \max_{j \in \Acal} S_{ij} + \eta \sum_{i \in \Acal} \max_{j \in \Qcal} S_{ij}$
% \end{numlem}
\begin{lemma}
Given a similarity kernel $S$ such that $S_{ij} = I(i == j)$, if both $i, j \in \Vcal$ or both $i, j \in \Vcal^{\prime}$ and the facility location (FL) function $f(\Acal) = \sum_{i \in \Omega} \max_{j \in \Acal} S_{ij}, \Acal \subseteq \Omega$ we obtain the expression for MI (\textsc{Flqmi}) as $I_f(\Acal; \Qcal) = \sum_{i \in \Qcal} \max_{j \in \Acal} S_{ij} + \eta \sum_{i \in \Acal} \max_{j \in \Qcal} S_{ij}$. The CG and CMI expressions are not particularly useful in this case.
\end{lemma}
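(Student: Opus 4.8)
The plan is to reduce everything to the general facility-location mutual-information formula already established in Theorem~\ref{FLMIGen} and then exploit the block structure of $S$. Taking $\Ucal = \Omega$ there gives $I_f(\Acal;\Qcal) = \sum_{i\in\Omega}\min(\max_{j\in\Acal}S_{ij},\max_{j\in\Qcal}S_{ij})$, so the only work left is to evaluate the summand on the two halves of $\Omega = \Vcal\cup\Vcal'$, using $\Acal\subseteq\Vcal$ and $\Qcal\subseteq\Vcal'$. As in the COM lemma I will assume the similarities are normalized, $S_{ij}\in[0,1]$.

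First I would handle $i\in\Vcal$: since $S$ restricted to $\Vcal\times\Vcal$ is the identity, $\max_{j\in\Acal}S_{ij}=\mathbf{1}[i\in\Acal]$, while $\max_{j\in\Qcal}S_{ij}$ is a genuine cross-similarity lying in $[0,1]$. Hence $\min(\mathbf{1}[i\in\Acal],\max_{j\in\Qcal}S_{ij})$ equals $\max_{j\in\Qcal}S_{ij}$ when $i\in\Acal$ and $0$ otherwise, so the $\Vcal$-block of the sum equals $\sum_{i\in\Acal}\max_{j\in\Qcal}S_{ij}$. Symmetrically, for $i\in\Vcal'$ the identity structure on $\Vcal'\times\Vcal'$ gives $\max_{j\in\Qcal}S_{ij}=\mathbf{1}[i\in\Qcal]$, so the $\Vcal'$-block equals $\sum_{i\in\Qcal}\max_{j\in\Acal}S_{ij}$. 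Adding the two blocks yields $I_f(\Acal;\Qcal) = \sum_{i\in\Qcal}\max_{j\in\Acal}S_{ij} + \sum_{i\in\Acal}\max_{j\in\Qcal}S_{ij}$; finally, applying the parameterization described in the FL paragraph of Section~\ref{sec:model} — replacing $S$ by $S^{\eta}$, which scales the $\Acal$-to-$\Qcal$ cross-block by $\eta$ — attaches the factor $\eta$ to the second summand and gives exactly \textsc{Flqmi}.

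For the ``not useful'' claim about CG and CMI I would run the same block decomposition on the corresponding general expressions from Theorem~\ref{FLMIGen}, e.g.\ $f(\Acal\mid\Pcal)=\sum_{i\in\Omega}\max(\max_{j\in\Acal}S_{ij}-\max_{j\in\Pcal}S_{ij},0)$. The $i\in\Vcal$ part again reduces to something depending only on $\Acal$ and $\Pcal$, but the $i\in\Vcal'$ part contributes $\sum_{i\in\Vcal'\setminus\Pcal}\max_{j\in\Acal}S_{ij}$ (the terms with $i\in\Pcal$ vanish because $\max_{j\in\Acal}S_{ij}\le 1$), which cannot be evaluated without knowing the entire auxiliary set $\Vcal'$, whereas only the private set $\Pcal$ is available in practice. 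The same obstruction appears for CMI, matching the remark already made for FL (v2) and COM.

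The main obstacle I anticipate is bookkeeping rather than depth: getting the direction of the $\eta$-scaling right so that $\eta$ lands on the $\sum_{i\in\Acal}\max_{j\in\Qcal}S_{ij}$ term only and not on both, and making sure the $\min$/$\max$ collapses are legitimate, which is where the normalization $S_{ij}\in[0,1]$ (and the implicit assumption that the scaled cross-similarities remain in $[0,1]$) is used; the degenerate cases $\Acal=\emptyset$ or $i\in\Pcal$ should be checked but are immediate.
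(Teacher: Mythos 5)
Your proposal is correct and follows essentially the same route as the paper's own proof: both start from the general expression $I_f(\Acal;\Qcal)=\sum_{i\in\Omega}\min(\max_{j\in\Acal}S_{ij},\max_{j\in\Qcal}S_{ij})$ of Theorem~\ref{FLMIGen} with $\Ucal=\Omega$ and then exploit the block-identity structure of $S$ within $\Vcal$ and within $\Vcal'$ (plus $S_{ij}\le 1$ and $\Acal\cap\Qcal=\emptyset$) to collapse the $\min$'s, with only a cosmetic difference in how the case split is organized ($\Vcal$ vs.\ $\Vcal'$ in yours, membership in $\Acal\cup\Qcal$ in the paper's). Your justification that CG and CMI are ``not useful''--- the $\Vcal'$-block of the sum requires access to all of $\Vcal'$ rather than just $\Pcal$ --- is also exactly the paper's stated reason.
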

\begin{proof}
Assuming $s_{ii} = 1$ is the maximum similarity score in the kernel, for the alternative formulation under the assumption that $U = \Omega$, we can break down the sum over elements in ground set $\Omega$ as follows. For any $i \in \Acal, \max_{j \in \Acal} S_{ij} = 1$ and hence the minimum (over sets $\Acal$ and $\Qcal$) will just be the term corresponding to Q (and a similar argument follows for terms in $\Qcal$). Then we have,

\begin{align}
   I_f(\Acal;\Bcal) &= \sum_{i \in \Omega\setminus (\Acal \cup \Qcal)} \min(\max_{j \in \Acal} S_{ij}, \max_{j \in \Qcal} S_{ij}) 
   + \sum_{i \in \Acal \setminus \Qcal} \max_{j \in \Qcal} S_{ij} 
   + \sum_{i \in \Qcal \setminus \Acal} \max_{j \in \Acal} S_{ij}
   + \sum_{i \in \Acal \cap \Qcal} 1  \\
   &= \sum_{i \in \Vcal \setminus \Acal} \min(\max_{j \in \Acal} S_{ij}, \max_{j \in \Qcal} S_{ij}) + \sum_{i \in \Vcal^{\prime} \setminus \Qcal} \min(\max_{j \in \Acal} S_{ij}, \max_{j \in \Qcal} S_{ij}) + \sum_{i \in \Acal} \max_{j \in \Qcal} S_{ij}
   + \sum_{i \in \Qcal} \max_{j \in \Acal} S_{ij}
\end{align}
This follows because $\Acal \cap \Qcal = \emptyset$. Finally, note that $$\sum_{i \in \Vcal \setminus \Acal} \min(\max_{j \in \Acal} S_{ij}, \max_{j \in \Qcal} S_{ij}) + \sum_{i \in \Vcal^{\prime} \setminus \Qcal} \min(\max_{j \in \Acal} S_{ij}, \max_{j \in \Qcal} S_{ij}) = 0$$ since $\forall i \in \Vcal \setminus \Acal, j \in \Acal, S_{ij} = 0$ and similarly $\forall i \in \Vcal^{\prime} \setminus \Qcal, j \in \Qcal, S_{ij} = 0$. This leaves us with $I_f(\Acal; \Qcal) = \sum_{i \in \Qcal} \max_{j \in \Acal} S_{ij} + \eta \sum_{i \in \Acal} \max_{j \in \Qcal} S_{ij}$

The expressions for CG and CMI don't make sense in this version of FL since they require computing terms over $\Vcal^{\prime}$, which we do not have access to.
\end{proof}

\subsection{Expression for GCCMI is not useful}
\label{app:gccsmi}
\begin{lemma}
When $f$ is the graph-cut function, $I_f(\Acal; \Qcal |\Pcal ) = I_f(\Acal; \Qcal)$. In other words, the CMI function does not depend on the private set $\Pcal$.
\end{lemma}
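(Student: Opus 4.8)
The plan is to exploit the fact that the graph-cut function splits as $f(\Acal) = m(\Acal) - \lambda\, q(\Acal)$, where $m(\Acal) = \sum_{i\in\Acal}\sum_{j\in\Vcal} S_{ij}$ is \emph{modular} and $q(\Acal) = \sum_{i,j\in\Acal} S_{ij} = \mathbf{1}_{\Acal}^{T} S\, \mathbf{1}_{\Acal}$ is the quadratic (super-modular) term, with $S$ symmetric. Since both CMI and MI are fixed linear combinations of function values, it suffices to compute $I_m(\Acal;\Qcal|\Pcal)$ and $I_q(\Acal;\Qcal|\Pcal)$ separately and compare with $I_m(\Acal;\Qcal)$ and $I_q(\Acal;\Qcal)$. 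Throughout we use that in the GMI setting $\Acal \subseteq \Vcal$ while $\Qcal,\Pcal \subseteq \Vcal^{\prime}$, so $\Acal,\Qcal,\Pcal$ are pairwise disjoint (the statement genuinely needs this: for $\Acal=\Qcal=\Pcal=\{i\}$ the two sides differ by $f(\{i\})$).

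First, the modular part contributes nothing. Writing $m(\Xcal)=\sum_i w_i \mathbf{1}_{\Xcal}(i)$, the combination $m(\Acal\cup\Pcal)+m(\Qcal\cup\Pcal)-m(\Acal\cup\Qcal\cup\Pcal)-m(\Pcal)$ equals $\sum_i w_i$ times an integer coefficient that vanishes for every $i$ once $\Acal\cap\Qcal=\emptyset$ (just check $i\notin\Acal\cup\Qcal\cup\Pcal$, $i\in\Pcal$, $i\in\Acal\setminus\Pcal$, $i\in\Qcal\setminus\Pcal$), so $I_m(\Acal;\Qcal|\Pcal)=0$, and the same case analysis gives $I_m(\Acal;\Qcal)=0$.

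Second, expand the quadratic part with indicator vectors $a=\mathbf{1}_{\Acal}$, $r=\mathbf{1}_{\Qcal}$, $p=\mathbf{1}_{\Pcal}$. By pairwise disjointness $\mathbf{1}_{\Acal\cup\Pcal}=a+p$, $\mathbf{1}_{\Qcal\cup\Pcal}=r+p$, $\mathbf{1}_{\Acal\cup\Qcal\cup\Pcal}=a+r+p$; substituting into $q(\Xcal)=\mathbf{1}_{\Xcal}^{T}S\mathbf{1}_{\Xcal}$ and using symmetry of $S$, all of $a^{T}Sa$, $r^{T}Sr$, $p^{T}Sp$, $a^{T}Sp$, $r^{T}Sp$ cancel in $q(\Acal\cup\Pcal)+q(\Qcal\cup\Pcal)-q(\Acal\cup\Qcal\cup\Pcal)-q(\Pcal)$, leaving exactly $-2\,a^{T}Sr = -2\sum_{i\in\Acal,j\in\Qcal}S_{ij}$, which is manifestly independent of $\Pcal$. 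The identical expansion of $q(\Acal)+q(\Qcal)-q(\Acal\cup\Qcal)$ (the $\Pcal=\emptyset$ case) again gives $-2\,a^{T}Sr$. Multiplying by $-\lambda$ and adding the zero modular contributions yields $I_f(\Acal;\Qcal|\Pcal) = 2\lambda\sum_{i\in\Acal,j\in\Qcal}S_{ij} = I_f(\Acal;\Qcal)$.

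A shorter route to the same conclusion, which I would mention as an alternative, bypasses the explicit quadratic expansion: use the algebraic identity $I_f(\Acal;\Qcal|\Pcal) = I_f(\Acal\cup\Pcal;\Qcal) - I_f(\Pcal;\Qcal)$ (a direct consequence of the definitions, of the same kind already used for the log-determinant functions) together with the \textsc{Gcmi} closed form $I_f(\Xcal;\Qcal)=2\lambda\sum_{i\in\Xcal,j\in\Qcal}S_{ij}$; since $\Acal\cap\Pcal=\emptyset$ we get $I_f(\Acal\cup\Pcal;\Qcal) = I_f(\Acal;\Qcal)+I_f(\Pcal;\Qcal)$, and the $\Pcal$ terms cancel. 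The only real ``obstacle'' here is bookkeeping rather than mathematics: one must be careful to state the disjointness hypothesis (inherited from the GMI setup) under which the \textsc{Gcmi}/MI expressions are valid. The conceptual takeaway is that graph cut has a constant mixed second difference, $-2\lambda S_{ij}$, so conditioning on $\Pcal$ only shifts the relevant function values by amounts that cancel in the CMI combination, leaving the $\Acal$–$\Qcal$ interaction untouched.
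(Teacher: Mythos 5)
Your proof is correct, but your primary argument takes a different route from the paper's. The paper proves the lemma by writing the conditional gain $g(\Acal) = f(\Acal|\Pcal) = f(\Acal) - 2\lambda\sum_{i\in\Acal,j\in\Pcal}S_{ij}$ (i.e., graph cut minus a $\Pcal$-dependent \emph{modular} correction), using the identity $I_f(\Acal;\Qcal|\Pcal) = I_g(\Acal;\Qcal)$, and observing that the MI of a modular function equals that function evaluated on $\Acal\cap\Qcal$, which is empty --- so the correction vanishes and $I_g(\Acal;\Qcal)=I_f(\Acal;\Qcal)$. You instead decompose $f$ itself into its modular part $m$ and quadratic part $q$ and expand all four terms of the CMI combination for each, verifying that the modular contribution is identically zero and the quadratic contribution collapses to $-2\,a^{T}Sr$ with every $\Pcal$-dependent cross term cancelling. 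Your computation is more self-contained (it does not presuppose the \textsc{Gccg} or \textsc{Gcmi} closed forms) and it makes explicit \emph{why} the result holds --- the mixed second difference of graph cut is the constant $-2\lambda S_{ij}$, so conditioning only shifts terms that cancel --- whereas the paper's argument is shorter but leans on the previously derived conditional-gain expression. Your ``alternative shorter route'' via $I_f(\Acal;\Qcal|\Pcal) = I_f(\Acal\cup\Pcal;\Qcal) - I_f(\Pcal;\Qcal)$ is essentially the paper's argument in a different guise. One point in your favor: you state explicitly that the disjointness of $\Acal$, $\Qcal$, $\Pcal$ is needed and exhibit a counterexample without it; the paper invokes only $\Acal\cap\Qcal=\emptyset$ and leaves the role of $\Acal\cap\Pcal=\emptyset$ (used implicitly in deriving the conditional-gain formula) unstated.
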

\begin{proof}
For deriving the expression for Conditional Submodular Mutual Information we proceed as follows,

Let $$g(\Acal) = f(\Acal|P) = f(\Acal) - 2\lambda \sum_{i \in \Acal}\sum_{j \in P}S_{ij}$$  
Then, 
\begin{align*}
    I_f(\Acal;\Qcal|P) &= I_g(\Acal;\Qcal) \\
    &= I_f(\Acal;\Qcal) - 2\lambda \sum_{i \in \Acal \cap \Qcal, j \in \Pcal}S_{ij}
\end{align*}

Since $\Acal, \Qcal$ are disjoint, the second term is 0 and the first term doesn't have any effect of $\Pcal$. Thus, the Conditional Submodular Mutual Information for Graph Cut (GCCMI) isn't useful.
\end{proof}

\section{Scalability of \model{}} \label{app:scalability}

Below, we provide a detailed analysis of the complexity of creating and optimizing the different functions in \model{}. Denote $|\Xcal|$ as the size of set $\Xcal$. Also, let $|\Ucal| = n$ (the ground set size, which is the size of the unlabeled set in this case) and $B$ be the selection budget. In the main paper, we provided the high level intuition of the complexity, ignoring the terms of $|\Pcal|$ and $|\Qcal|$ since they would be typically much smaller than the number of unlabeled points $n$. For completeness, we provide the detailed complexity below:

\begin{itemize}%[leftmargin=*]
    \item \textbf{Facility Location: } We start with \textsc{Flvmi}. The complexity of creating the kernel matrix is $O(n^2)$. The complexity of optimizing it is $\tilde{O}(n^2)$ (using memoization~\cite{iyer2019memoization})\footnote{$\tilde{O}$: Ignoring log-factors} if we use the stochastic greedy algorithm~\cite{mirzasoleiman2015lazier} and $O(n^2k)$ with the naive greedy algorithm. The overall complexity is $\tilde{O}(n^2)$.
    For \textsc{Flqmi}, the cost of creating the kernel matrix is $O(n|\Qcal|)$, and the cost of optimization is also $\tilde{O}(n|\Qcal|)$ (with naive greedy, it is $O(nB |\Qcal|)$). The complexity of \textsc{Flcg} is $O([n + |\Pcal|]^2)$ to compute the kernel matrix and $\tilde{O}(n^2)$ for optimizing (using the stochastic greedy algorithm). Finally, for \textsc{Flcmi}, the complexity of computing the kernel matrix is $O([n + |\Qcal| + |\Pcal|]^2)$, and the complexity of optimization is $\tilde{O}(n^2)$.
    \item \textbf{Log-Determinant: } We start with \textsc{Logdetmi}. The complexity of the kernel matrix computation (and storage) is $O(n^2)$. The complexity of optimizing the LogDet function using the stochastic greedy algorithm is $\tilde{O}(B^2 n)$, so the overall complexity is $\tilde{O}(n^2 + B^2n)$. For \textsc{Logdetcg}, the complexity of computing the matrix is $O([n + |\Pcal|]^2$, and the complexity of optimization is $\tilde{O}([B + |\Pcal|]^2 n)$. For the \textsc{Logdetcmi} function, the complexity of computing the matrix is $O([n + |\Pcal| + |\Qcal|]^2$, and the complexity of optimization is $\tilde{O}([B + |\Pcal| + |\Qcal|]^2 n)$. 
    \item \textbf{Graph-Cut: } Finally, we study GC functions. For \textsc{Gcmi}, we require an $O(n|\Qcal|)$ kernel matrix, and the complexity of the stochastic greedy algorithm is also $\tilde{O}(n|\Qcal|)$. Finally, for \textsc{Gccg}, the complexity of creating the kernel matrix is $O(n|^2 + n|\Pcal|)$, and the complexity of the stochastic greedy algorithm is $\tilde{O}(n^2 + n|\Pcal|)$. 
\end{itemize}
We end with a few comments. First, most of the complexity analysis above is with the stochastic greedy algorithm~\cite{mirzasoleiman2015lazier}. If we use the naive or lazy greedy algorithm, the worst-case complexity is a factor $B$ larger. Secondly, we ignore log-factors in the complexity of stochastic greedy since the complexity is actually $O(n\log 1/\epsilon)$, which achieves a $1 - 1/e - \epsilon$ approximation. Finally, the complexity of optimizing and constructing the FL, LogDet, and GC functions can be obtained from the CG versions by setting $\Pcal = \emptyset$.

\subsection{Details on Partitioning Approach}
For larger datasets, we can choose to partition the unlabeled set into chunks in order to meet the scale of the dataset. This is because many of the techniques (specifically LogDet functions, \textsc{Flvmi}, \textsc{Flcg}, \textsc{Flcmi}, \textsc{Gccg}) all have $O(n^2)$ space complexity. For $n$ in the range of a few million to a few billion data points (which is not uncommon in big-data applications today), we need to scale our algorithms to be linear in $n$ and not quadratic. For this, we propose a simple partitioning approach where the unlabeled data is chunked into $p$ partitions. In this strategy, we perform unlabeled instance acquisition on each chunk using a proportional fraction of the full budget. By performing acquisition on the full unlabeled set, almost all strategies will exhaust the available compute resources. Hence, to execute most selection strategies, we can partition the unlabeled set into equally sized chunks, such that each partition has around 10k to 20k instances. As $n$ grows, the number of partitions would also grow, so that $n/p$ is roughly constant. The complexity of most approaches discussed above would then be $O(n^2/p)$ ($O(n^2/p^2)$ for each chunk, repeated $p$ times), and if $n/p = r$ is a constant, then the complexity $O(nr)$ would be linear in $n$. We then acquire a number of unlabeled instances from each chunk whose ratio with the full selection budget is equal to the ratio between the chunk size and the full unlabeled set. The acquired instances from each chunk are then combined to form the full acquired set of unlabeled instances.

%PRISM MODELING A BROAD SPECTRUM OF SEMANTICS
\section{\model{} Modeling a Broad Spectrum of Semantics}
\label{app:synthetic}
Here we provide additional details on our experiments for empirically studying the modeling capabilities of different functions in \model{} and the role of different parameters.

\subsection{Experimental Setup}
We create a synthetic dataset to understand the behaviour of the different functions in \model, and the corresponding control parameters. We generate 10 different collections of points in a 2D space that emulates the space of images and queries/private instances. In each collection, there are 100 points representing images, 8 points representing queries and 2 points representing the private instances. These 110 points in each set are distributed in 18 clusters with different number of points in each cluster. The standard deviation is varied from one set to another and the 8 query points and 2 private instances for each set are randomly sampled without replacement, one each from 10 randomly selected clusters.

For different functions in \model{} and for different settings of the internal parameters we maximize the function to produce a summary and compute the relevant measures averaged across different budgets (5-40 in intervals of 5) and different collections. 

\textbf{Scoring Functions.} To characterize query-focused and privacy-preserving summaries we define the following. \emph{Query-saturation} is a phenomenon where the function doesn't see any gains in picking more query-relevant items after having picked a few. \emph{Query Coverage} is calculated as the fraction of query points covered by the summary and measures if a summary doesn't starve any query by always picking elements matching some other queries. A query point is said to be covered by a summary if there exists a selection in the summary which belongs to the same cluster as the query point. We quantify the \emph{Diversity} of a summary by calculating the fraction of unique clusters covered by the summary. Next, we define \emph{Query-Relevance} to be the fraction of points selected which match some query point and we define \emph{Privacy-Irrelevance} as the fraction of points selected which do not match any private instance.

\begin{figure}[h]
    \centering 
  \includegraphics[width=0.7\textwidth]{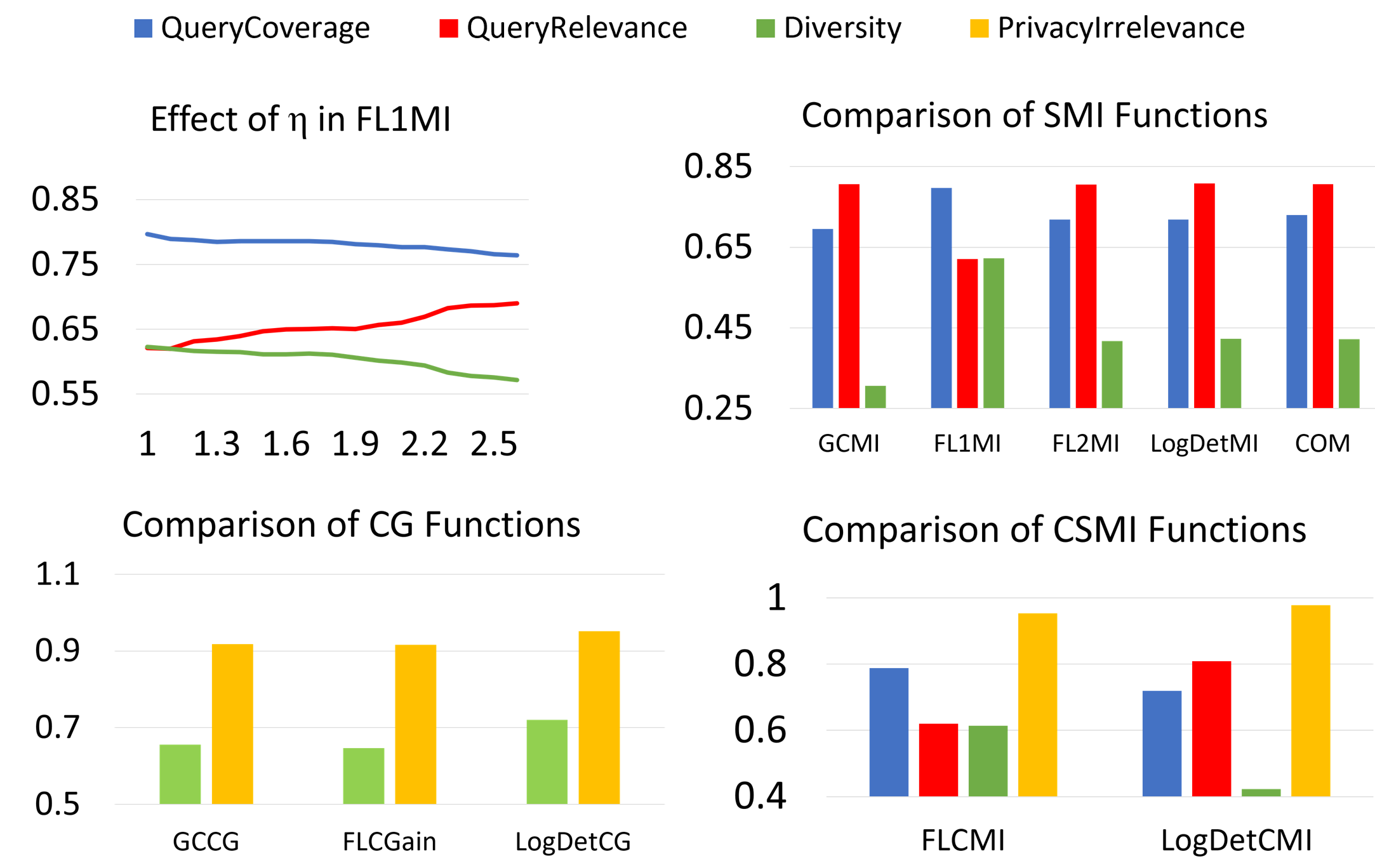}
%\vspace{-2ex}
\caption{Comparison of different functions in \model{} and effect of parameters. All plots share the legend. }
%\vspace{-2ex}
\label{fig:syntheticcombinedapp}
\end{figure}

\subsection{Additional Quantitative Results}
In Fig.~\ref{fig:syntheticcombinedapp} (also in the main text) we presented the behavior of the first variant (v1) of FLMI, as we change the internal parameter $\eta$ (top-left). %In the main paper we have reported the effect of $\eta$ on the query-relevance vs diversity in the case of \textsc{Flvmi}. 
Here we present similar observations for other functions like \textsc{Flqmi} and \textsc{Logdetmi} (Fig.~\ref{fig:fl2mi-logdetmi} a, b).
\begin{figure}[h!]
    \centering 
\begin{subfigure}{0.4\textwidth}
  \includegraphics[width=\linewidth]{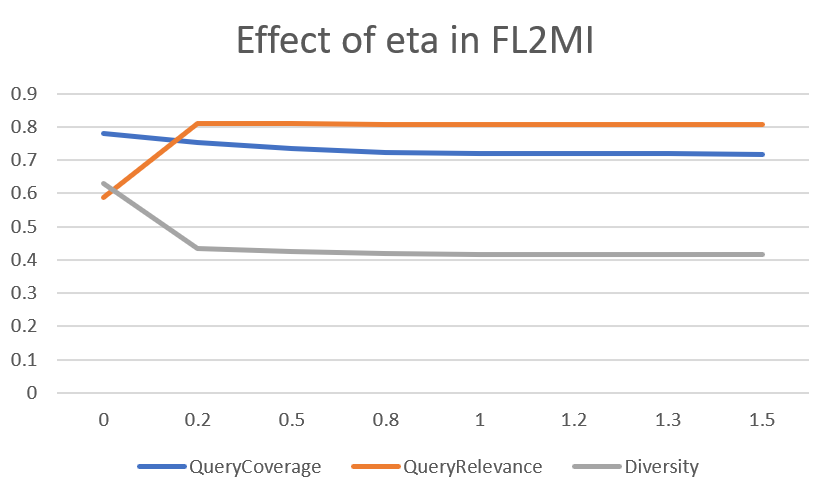}
  \caption{\textsc{Flqmi}}
\end{subfigure} % <-- added
\begin{subfigure}{0.4\textwidth}
  \includegraphics[width=\linewidth]{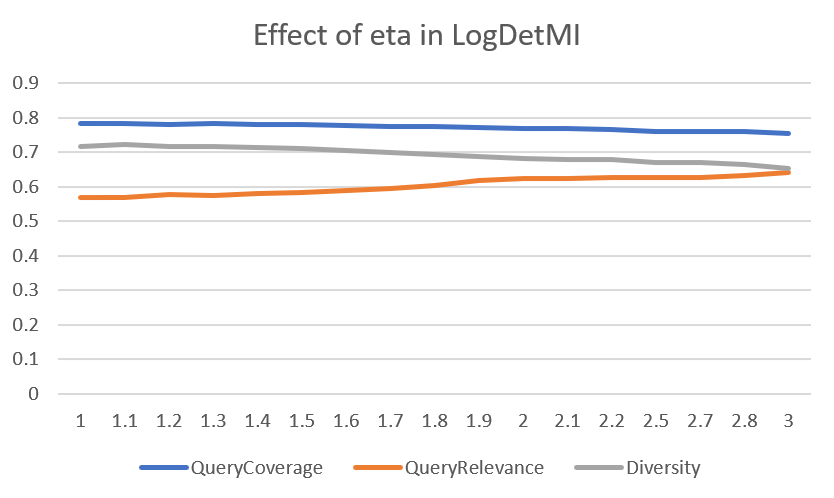}
  \caption{\textsc{Logdetmi}}
\end{subfigure} % <-- added
\begin{subfigure}{0.4\textwidth}
  \includegraphics[width=\linewidth]{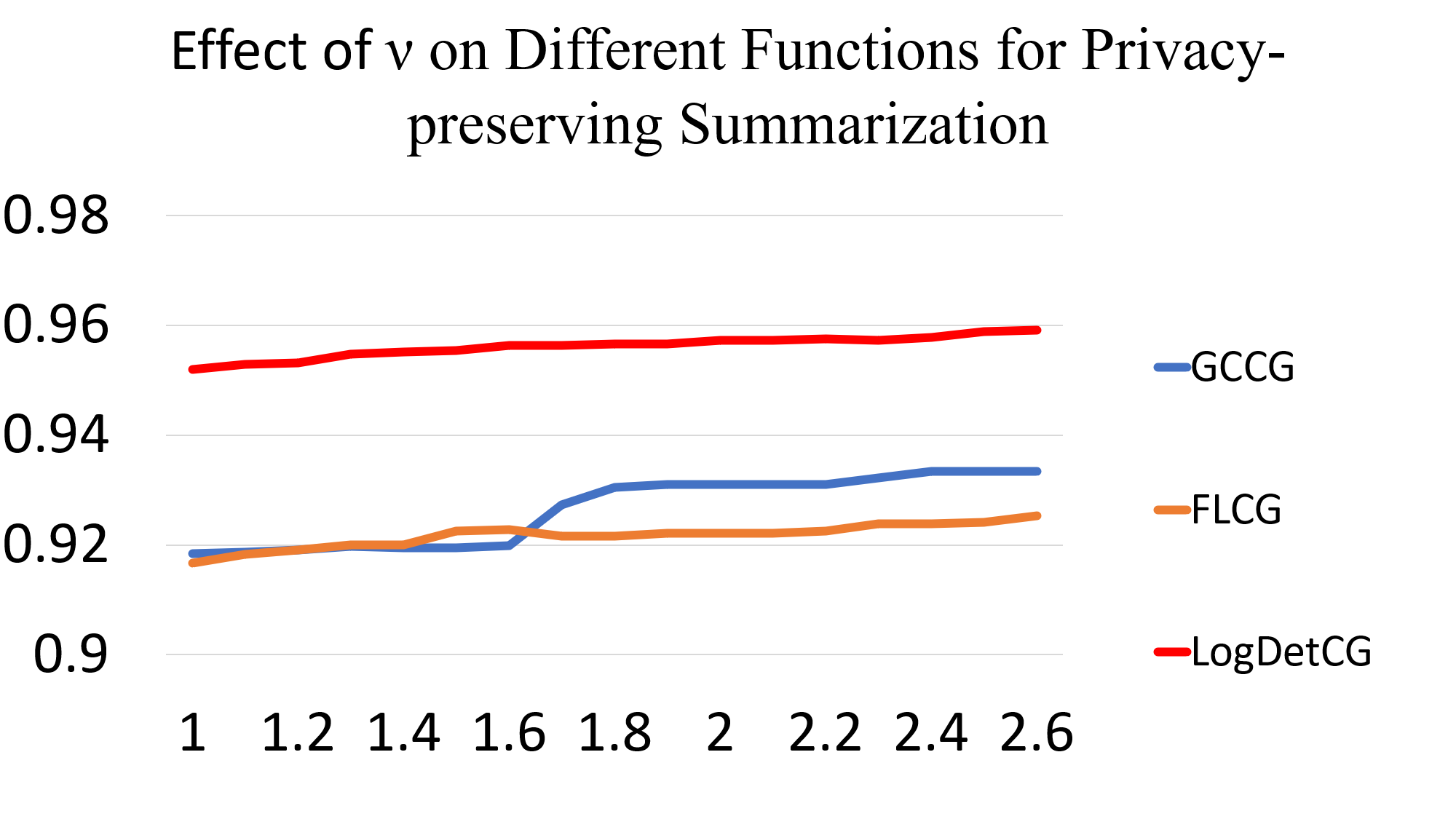}
  \caption{}
  \label{fig:privacy-nu}
  \end{subfigure}
\caption{Effect of $\eta$ on (a)\textsc{Flqmi} and (b)\textsc{Logdetmi} and the effect of $\nu$ on different CG functions (c)}
\label{fig:fl2mi-logdetmi}
\end{figure}

In Fig.~\ref{fig:syntheticcombinedapp} (top-right), we compare query-coverage, diversity and query-relevance for \textsc{Gcmi}, \textsc{Flvmi} and \textsc{Flqmi}, \textsc{Logdetmi} and COM, fixing the value of $\eta = 1$ wherever applicable. In each case, we also compare a version which adds a very small diversity term to these functions (to measure the effect of saturation of the MI functions) (Fig.~\ref{fig:query-comparison}). We make the following observations. \textsc{Gcmi}, \textsc{Flqmi}, \textsc{Logdetmi} and COM favor query-relevance over diversity and query-coverage, while \textsc{Flvmi} favors diversity and query-coverage over query-relevance. Furthermore, we also observe that COM does not change as much with the addition of diversity, which suggests that it is not saturated, while \textsc{Logdetmi} significantly changes its behaviour with the addition of diversity. In almost all cases, we see that adding a small diversity term reduces query-relevance in favor of query-coverage and diversity, which is also something we expect.

\begin{figure}[h!]
    \centering 
    \begin{subfigure}{0.4\textwidth}
  \includegraphics[width=\linewidth]{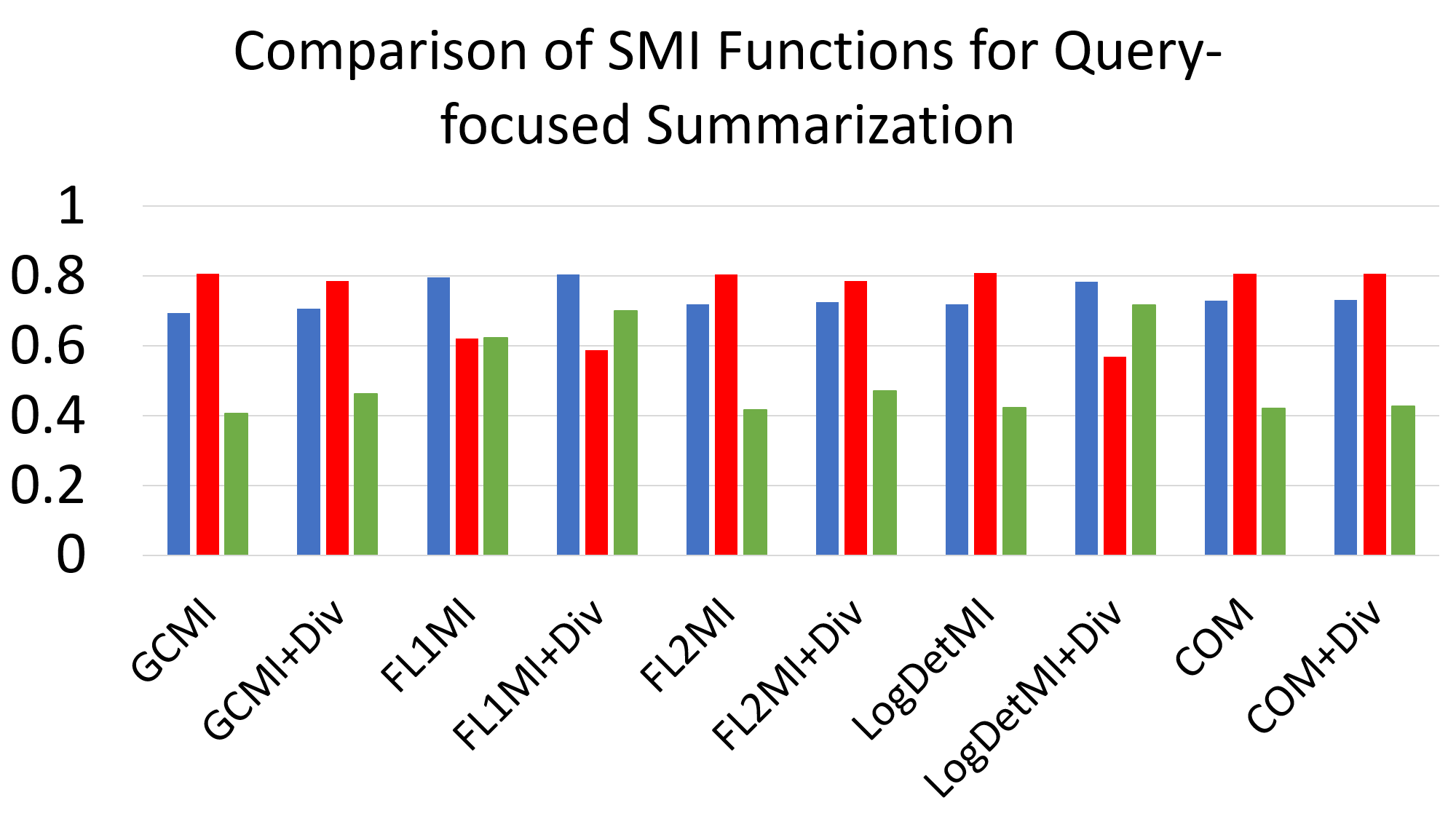}
  \caption{}
  \label{fig:query-comparison}
\end{subfigure}
\begin{subfigure}{0.4\textwidth}
  \includegraphics[width=\linewidth]{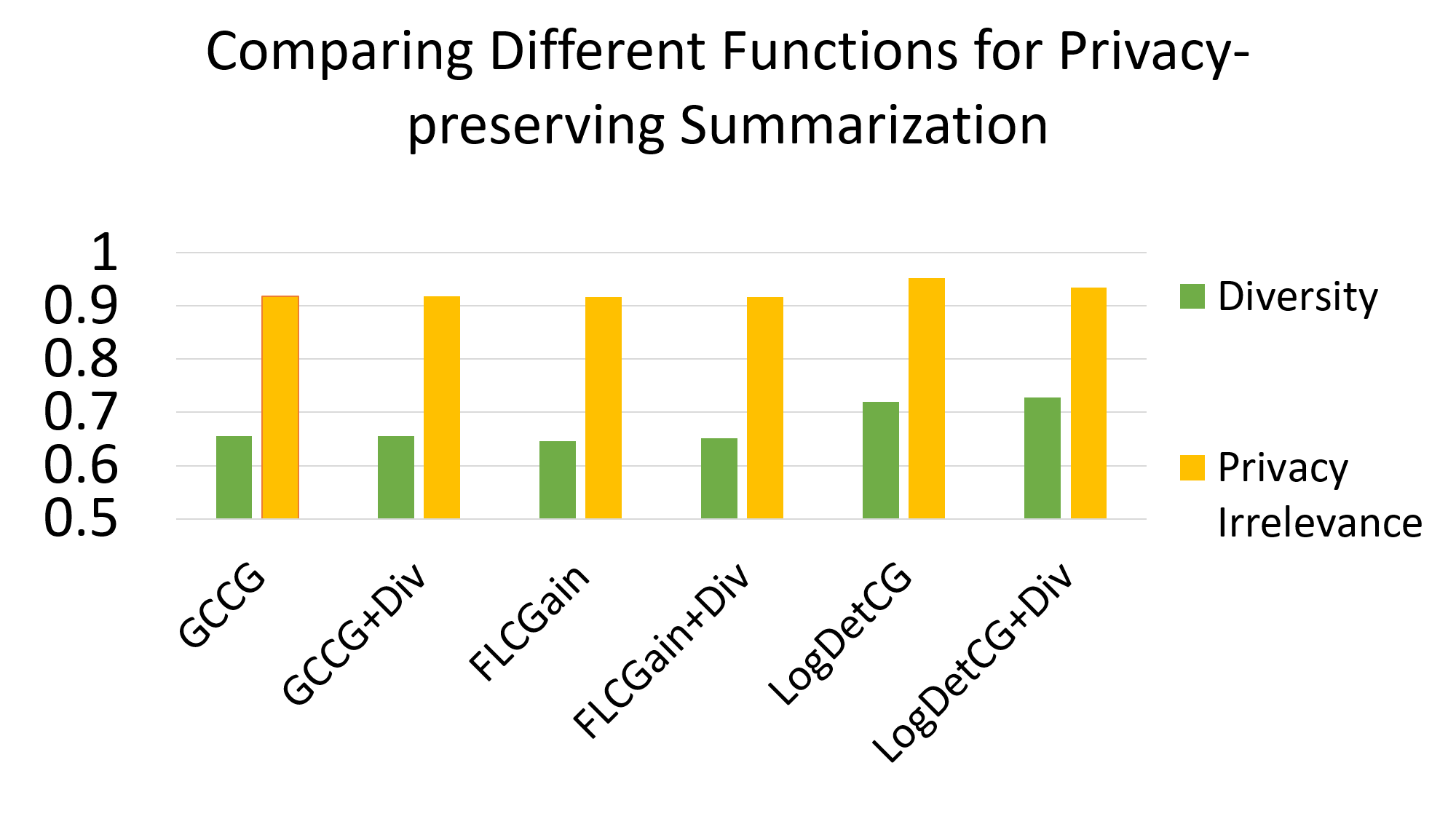}
  \caption{}
  \label{fig:privacy-comp}
\end{subfigure}
\begin{subfigure}{0.4\textwidth}
  \includegraphics[width=\linewidth]{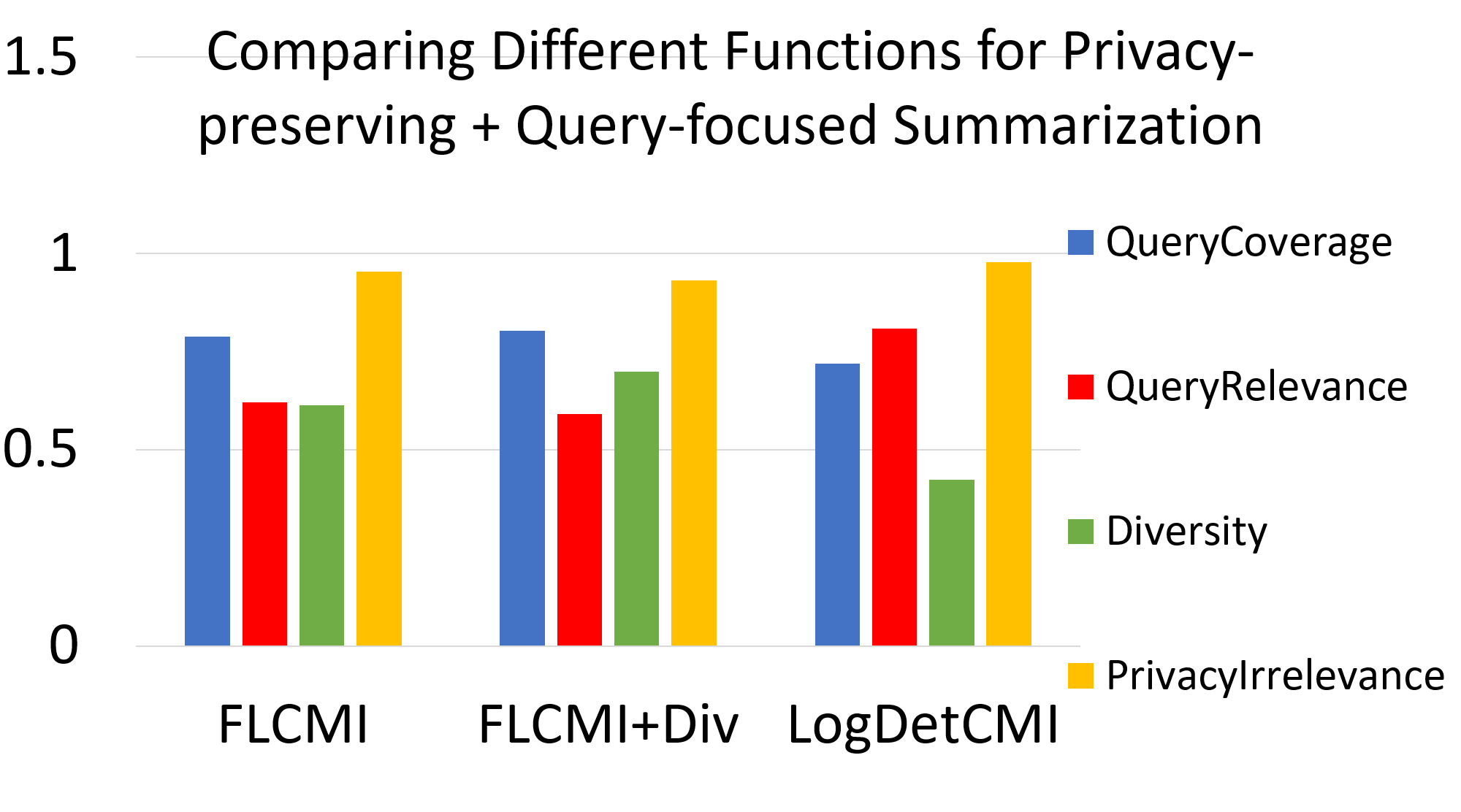}
  \caption{}
  \label{fig:joint-comp}
\end{subfigure} % <-- added
\caption{Behavior of different functions in \model{} and effect of parameters. Different from Fig.~\ref{fig:syntheticcombinedapp}, we also add a version which adds a very small diversity term to these functions (to measure the effect of saturation of the MI functions). See text for details.}
\label{fig:synthetic}
\end{figure}

Next, we report the results for privacy-preserving summarization. Fig.~\ref{fig:privacy-nu} shows the effect of $\nu$ on the \textbf{privacy-irrelevance} term in \textsc{Gccg}, \textsc{Flcg} and \textsc{Logdetcg}. As we expect, increasing $\nu$ increases the privacy-irrelevance score, thereby ensuring a stricter privacy-irrelevance constraint. Fig.~\ref{fig:privacy-comp} compares the Diversity and Privacy-Irrelevance score with different choices of functions (GC, FL and LogDet) for a fixed value $\nu = 1$. Again, we also compare these to their variants where we add a small amount of diversity, and unlike the MI case, we see that the CG functions do not saturate and adding a small diversity does not change the selection. Finally, we see the trend here that Log-Det outperforms FL and GC both in terms of diversity and privacy irrelevance. In Fig.~\ref{fig:joint-comp} we report the results for joint-summarization. We show the comparison for different functions (\textsc{Flcmi}, \textsc{Flcmi} + Div and \textsc{Logdetcmi}). Similar to the private and query versions, we observe that \textsc{Flcmi} tends to favor query-coverage and diversity in contrast to query-relevance and privacy-irrelevance, while \textsc{Logdetcmi} favors query-relevance and privacy-irrelevance over query-coverage and diversity.  

\subsection{Qualitative Analysis}

In Fig.~\ref{fig:visualization} we show the visualization of the 100 image points (black) and 8 query points (green) of collection number 3 in our synthetic dataset along with the 10 selected summary points (blue) selected by \textsc{Flqmi} at $\eta=0.0$ and $\eta=0.2$, labeled as per the order of their selection. F, R, D, I stand for Query-coverage, Query-relevance, Diversity and Privacy-irrelevance respectively. As discussed above, we can see that as soon as $\eta$ is increased, the summary produced by \textsc{Flqmi} becomes more query-relevant and less diverse.

\begin{figure}[h!]
    \centering 
\begin{subfigure}{0.4\textwidth}
  \includegraphics[width=\linewidth]{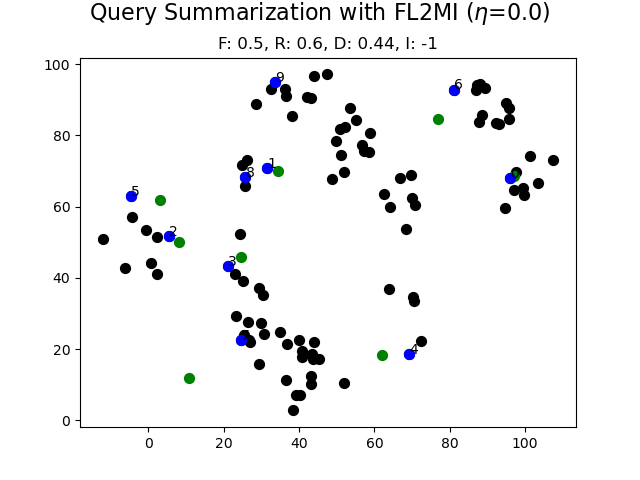}
  \caption{$\eta=0.0$}
\end{subfigure} % <-- added
\begin{subfigure}{0.4\textwidth}
  \includegraphics[width=\linewidth]{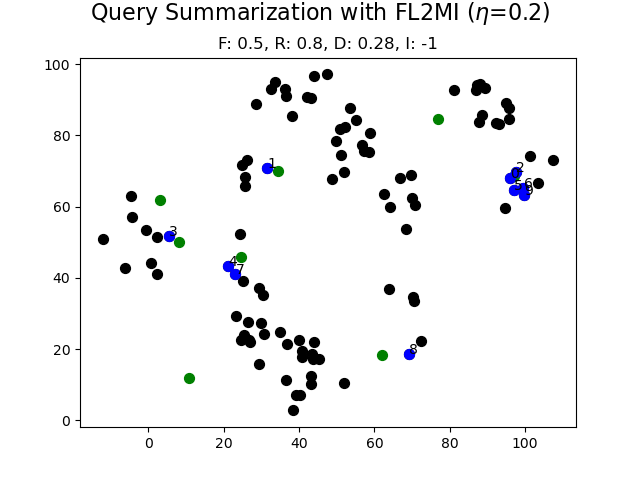}
  \caption{$\eta=0.2$}
\end{subfigure} % <-- added
\caption{Visualization of \textsc{Flqmi} behavior with varying $\eta$ on collection number 3 of the synthetic dataset}
\label{fig:visualization}
\end{figure}

\section{Unified Learning Framework for Guided Summarization using \model{}}
\label{app:learning}

Below we present the specific forms of the mixture model and the objective function and computation of gradients in the different cases of Generic, Query-Focused, Privacy-Preserving and Joint Summarization

\subsubsection{Generic Summarization}
We denote our dataset of $N$ training examples as $(\Ycal^{(n)}, \Vcal^{(n)}, x^{(n)})$ where $n=1 \dots N$, $\Ycal^{(n)}$ is a human summary for the $n^{th}$ ground set (image collection) $\Vcal^{(n)}$ with features $x^{(n)}$.

We denote our mixture model in case of generic summarization as $$F(Y, x^{(n)}, w, \lambda) = \sum_{i=1}^M w_i f_i(Y, x^{(n)}, \lambda_i) $$ where $f_1 \dots f_M$ are the instantiations of different submodular functions, $w_i$ their weights and $\lambda_1 \dots \lambda_M$ are their internal parameters respectively, for example the $\lambda$ in case of Graph Cut function defined above.

So the parameters vector in case of generic summarization becomes $\Theta = (w_1 \dots w_m, \lambda_1 \dots \lambda_M)$

Then, $\Lcal_n(\Theta) = \max_{Y \subset \Vcal^{(n)}, |\Ycal| \leq k} [\sum_{i=1}^M w_i f_i(Y, x^{(n)}, \lambda_i) + l_n(\Ycal)] - \sum_{i=1}^M w_i f_i(\Ycal^{(n)}, x^{(n)}, \lambda_i) $

For the purpose of learning the parameters $w_i$ and $\lambda_i$, we compute the gradients as, $$\frac{\partial L_n}{\partial w_i} = f_i(\hat{Y_n}, x^{(n)}, \lambda_i) - f_i(\Ycal^{(n)}, x^{(n)}, \lambda_i)$$ and $$\frac{\partial L_n}{\partial \lambda_i} = w_i \frac{\partial f_i(\hat{Y_n}, x^{(n)}, \lambda_i)}{\partial \lambda_i}  - w_i \frac{\partial f_i(\Ycal^{(n)}, x^{(n)}, \lambda_i)}{\partial \lambda_i} $$ where $$\hat{Y_n} = \argmax_{Y \subset \Vcal^{(n)}, |\Ycal| \leq k} F(Y, x^{(n)}, w, \lambda) + l_n(\Ycal) $$

For the gradients with respect to the respective internal parameters of individual function components $\frac{\partial f_i}{\partial \lambda_i}$, consider the generalized graphcut  $f(Y, x^{(n)}, \lambda) =   \sum_{i \in \Vcal, j \in Y} S_{ij}^{(n)} - \lambda \sum_{i, j \in Y} S_{ij}^{(n)}$ as an example. We compute its gradient as,\\
$$\frac{\partial f (Y, x^{(n)}, \lambda)}{\partial \lambda} = -\sum_{i, j \in Y} S_{ij}^{(n)} $$

\subsubsection{Query-Focused Summarization}

We denote our dataset of $N$ training examples as $(\Ycal^{(n)}, \Vcal^{(n)}, x^{(n)},  \Qcal^{(n)})$ where $n=1 \dots N$, $\Ycal^{(n)}$ is a human query-summary for the query $ \Qcal^{(n)}$ on the $n^{th}$ ground set (image collection) $\Vcal^{(n)}$ with features $x^{(n)}$.

We denote our mixture model in case of query summarization as $$F(Y,  \Qcal^{(n)}, x^{(n)}, w, \lambda, \eta) = \sum_{i=1}^M w_i I_{f_i}(Y,  \Qcal^{(n)}, x^{(n)}, \lambda_i, \eta_i) $$ where $f_1 \dots f_M$ are the instantiations of different submodular mutual information functions, $w_i$ their weights, $\lambda_1 \dots \lambda_M$ are their internal parameters respectively and $\eta_1 \dots \eta_M$ are their query-relevance-diversity tradeoff parameters. 

So the parameters vector in case of query-focused summarization becomes $\Theta = (w_1 \dots w_m, \lambda_1 \dots \lambda_M, \eta_1 \dots \eta_M)$

Then, $$\Lcal_n(\Theta) = \max_{Y \subset \Vcal^{(n)}, |\Ycal| \leq k} [\sum_{i=1}^M w_i I_{f_i}(Y, \Qcal^{(n)}, x^{(n)}, \lambda_i, \eta_i) + l_n(\Ycal)] - \sum_{i=1}^M w_i I_{f_i}(\Ycal^{(n)}, \Qcal^{(n)}, x^{(n)}, \lambda_i, \eta_i) $$

For the purpose of learning the parameters $w_i$, $\lambda_i$ and $\eta_i$ we compute the gradients as, $$\frac{\partial L_n}{\partial w_i} = I_{f_i}(\hat{Y_n}, \Qcal^{(n)}, x^{(n)}, \lambda_i, \eta_i) - I_{f_i}(\Ycal^{(n)}, \Qcal^{(n)}, x^{(n)}, \lambda_i, \eta_i)$$ $$\frac{\partial L_n}{\partial \lambda_i} = w_i \frac{\partial I_{f_i}(\hat{Y_n}, \Qcal^{(n)}, x^{(n)}, \lambda_i, \eta_i)}{\partial \lambda_i}  - w_i \frac{\partial I_{f_i}(\Ycal^{(n)}, \Qcal^{(n)}, x^{(n)}, \lambda_i, \eta_i)}{\partial \lambda_i} $$ and $$\frac{\partial L_n}{\partial \eta_i} = w_i \frac{\partial I_{f_i}(\hat{Y_n}, \Qcal^{(n)}, x^{(n)}, \lambda_i, \eta_i)}{\partial \eta_i}  - w_i \frac{\partial I_{f_i}(\Ycal^{(n)}, \Qcal^{(n)}, x^{(n)}, \lambda_i, \eta_i)}{\partial \eta_i} $$ where $$\hat{Y_n} = \argmax_{Y \subset \Vcal^{(n)}, |\Ycal| \leq k} F(Y, \Qcal^{(n)}, x^{(n)}, w, \lambda, \eta) + l_n(\Ycal) $$\\

For the gradients of individual function components $\frac{\partial I_{f_i}}{\partial \eta_i}$ with respect to the respective query-relevance-diversity trade-off parameters $\eta_i$, we show computation for some functions as follows:

\textbf{\textsc{Flvmi}:} $I_f(Y, \Qcal^{(n)}, x^{(n)}, \eta)=\sum_{i \in \Vcal}\min(\max_{j \in Y}S_{ij}^{(n)}, \eta \max_{j \in \Qcal^{(n)}}S_{ij}^{(n)})$\\
    $$ \frac{\partial I_f (Y, \Qcal^{(n)}, x^{(n)}, \eta)}{\partial \eta} = \sum_{i \in \Vcal} (\max_{j \in \Qcal^{(n)}}S_{ij}^{(n)}*1_{\max_{j \in \Qcal^{(n)}}S_{ij}^{(n)} \leq \max_{j \in Y}S_{ij}^{(n)}})$$
    
    \textbf{\textsc{Flqmi}:} $I_f(Y, \Qcal^{(n)}, x^{(n)}, \eta) = \sum_{i \in \Qcal^{(n)}} \max_{j \in Y} S_{ij}^{(n)} + \eta \sum_{i \in Y} \max_{j \in \Qcal^{(n)}} S_{ij}^{(n)}$\\
    $$ \frac{\partial I_f (Y, \Qcal^{(n)}, x^{(n)}, \eta)}{\partial \eta} = \sum_{i \in Y} \max_{j \in \Qcal^{(n)}} S_{ij}^{(n)}$$
    
    \textbf{\textsc{Logdetmi}:} $I_f(Y, \Qcal^{(n)}, x^{(n)}, \eta) = -\log \det(I - \eta^2 S_{Y}^{-1}S_{Y \Qcal^{(n)}}S_{ \Qcal^{(n)}}^{-1}S_{Y \Qcal^{(n)}}^T)$
    
    We have, 
    $$\frac{-\partial\log\det(X)}{\partial \eta} = \frac{-1}{\det(X)} \frac{\det (X)}{\partial \eta} $$ and $\frac{\partial \det (X)}{\partial \eta} = \det(X) \mathrm{Tr}[ X^{-1} \frac{\partial X}{\partial \eta}]$. Hence, with $X = I - \eta^2 S_{Y}^{-1}S_{Y \Qcal^{(n)}}S_{ \Qcal^{(n)}}^{-1}S_{Y \Qcal^{(n)}}^T$ we have,
    $$ \frac{\partial I_f (Y,  \Qcal^{(n)}, x^{(n)}, \eta)}{\partial \eta} = \mathrm{Tr}[((I - \eta^2 S_{Y}^{-1}S_{Y \Qcal^{(n)}}S_{ \Qcal^{(n)}}^{-1}S_{Y \Qcal^{(n)}}^T))^{-1}*2\eta(S_{Y}^{-1}S_{Y \Qcal^{(n)}}S_{ \Qcal^{(n)}}^{-1}S_{Y \Qcal^{(n)}}^T)]$$

\subsubsection{Privacy-Preserving Summarization}

We denote our dataset of $N$ training examples as $(\Ycal^{(n)}, \Vcal^{(n)}, x^{(n)},  \Pcal^{(n)})$ where $n=1 \dots N$, $\Ycal^{(n)}$ is a human privacy-summary for the privacy set $ \Pcal^{(n)}$ on the $n^{th}$ ground set (image collection) $\Vcal^{(n)}$ with features $x^{(n)}$.

We denote our mixture model in case of privacy-preserving summarization as $$F(Y,  \Pcal^{(n)}, x^{(n)}, w, \lambda, \nu) = \sum_{i=1}^M w_i f_i(Y,  \Pcal^{(n)}, x^{(n)}, \lambda_i, \nu_i) $$ where $f_1 \dots f_M$ are the instantiations of different conditional gain functions, $w_i$ their weights, $\lambda_1 \dots \lambda_M$ are their internal parameters respectively and $\nu_1 \dots \nu_M$ are their privacy-sensitivity parameters. 

So the parameters vector in case of privacy-preserving summarization becomes $\Theta = (w_1 \dots w_m, \lambda_1 \dots \lambda_M, \nu_1 \dots \eta_M)$

Then, $$\Lcal_n(\Theta) = \max_{Y \subset \Vcal^{(n)}, |\Ycal| \leq k} [\sum_{i=1}^M w_i f_i(Y,  \Pcal^{(n)}, x^{(n)}, \lambda_i, \nu_i) + l_n(\Ycal)] - \sum_{i=1}^M w_i f_i(\Ycal^{(n)},  \Pcal^{(n)}, x^{(n)}, \lambda_i, \nu_i) $$

For the purpose of learning the parameters $w_i$, $\lambda_i$ and $\nu_i$ we compute the gradients as, $$\frac{\partial L_n}{\partial w_i} = f_i(\hat{Y_n},  \Pcal^{(n)}, x^{(n)}, \lambda_i, \nu_i) - f_i(\Ycal^{(n)},  \Pcal^{(n)}, x^{(n)}, \lambda_i, \nu_i)$$ $$\frac{\partial L_n}{\partial \lambda_i} = w_i \frac{\partial f_i(\hat{Y_n},  \Pcal^{(n)}, x^{(n)}, \lambda_i, \nu_i)}{\partial \lambda_i}  - w_i \frac{\partial f_i(\Ycal^{(n)},  \Pcal^{(n)}, x^{(n)}, \lambda_i, \nu_i)}{\partial \lambda_i} $$ and $$\frac{\partial L_n}{\partial \nu_i} = w_i \frac{\partial f_i(\hat{Y_n},  \Pcal^{(n)}, x^{(n)}, \lambda_i, \nu_i)}{\partial \nu_i}  - w_i \frac{\partial f_i(\Ycal^{(n)},  \Pcal^{(n)}, x^{(n)}, \lambda_i, \nu_i)}{\partial \nu_i} $$ where $$\hat{Y_n} = \argmax_{Y \subset \Vcal^{(n)}, |\Ycal| \leq k} F(Y,  \Pcal^{(n)}, x^{(n)}, w, \lambda, \nu) + l_n(\Ycal) $$

For the gradients of individual function components $\frac{\partial f_i}{\partial \nu_i}$ with respect to the respective privacy sensitivity parameters $\nu_i$, we show computation for some functions as follows:

    \noindent \textbf{FLCondGain: } $f(Y,  \Pcal^{(n)}, x^{(n)}, \nu)= \sum_{i \in \Vcal} \max(\max_{j \in Y} S_{ij}^{(n)} - \nu \max_{j \in  \Pcal^{(n)}} S_{ij}^{(n)}, 0)$\\
    $$ \frac{\partial f(Y,  \Pcal^{(n)}, x^{(n)}, \nu)}{\partial \nu}  = \sum_{i \in \Vcal} (-\max_{j \in  \Pcal^{(n)}}S_{ij}^{(n)})*1_{(\max_{j \in Y} S_{ij}^{(n)} - \nu \max_{j \in  \Pcal^{(n)}} S_{ij}^{(n)}) \geq 0}$$
    \textbf{LogDetCondGain: } $f(Y,  \Pcal^{(n)}, x^{(n)}, \nu) = \log\det(S_Y - \nu^2 S_{Y \Pcal^{(n)}}S_{ \Pcal^{(n)}}^{-1}S_{Y \Pcal^{(n)}}^T)$
    We have, 
    $$\frac{\partial\log\det(X)}{\partial \nu} = \frac{1}{\det(X)} \frac{\det (X)}{\partial \nu} $$ and $\frac{\partial \det (X)}{\partial \nu} = \det(X) \mathrm{Tr}[ X^{-1} \frac{\partial X}{\partial \nu}]$
    Hence, with $X = I - \nu^2 S_{Y}^{-1}S_{Y \Pcal^{(n)}}S_{ \Pcal^{(n)}}^{-1}S_{Y \Pcal^{(n)}}^T$\\
    we have, 
    $$\frac{\partial f(Y,  \Pcal^{(n)}, x^{(n)}, \nu)}{\partial \nu}  = -\mathrm{Tr}[(S_Y - \nu^2 S_{Y \Pcal^{(n)}}S_{ \Pcal^{(n)}}^{-1}S_{Y \Pcal^{(n)}}^T)^{-1}*2\nu(S_{Y \Pcal^{(n)}}S_{ \Pcal^{(n)}}^{-1}S_{Y \Pcal^{(n)}}^T))] $$

\subsubsection{Joint Summarization}

We denote our dataset of $N$ training examples as $(\Ycal^{(n)}, \Vcal^{(n)}, x^{(n)},  \Qcal^{(n)},  \Pcal^{(n)})$ where $n=1 \dots N$, $\Ycal^{(n)}$ is a human query-privacy-summary for the query set $ \Qcal^{(n)}$ and privacy set $ \Pcal^{(n)}$ on the $n^{th}$ ground set (image collection) $\Vcal^{(n)}$ with features $x^{(n)}$.

We denote our mixture model in case of joint query-focused and privacy-preserving summarization as $$F(Y,  \Qcal^{(n)},  \Pcal^{(n)}, x^{(n)}, w, \lambda, \eta, \nu) = \sum_{i=1}^M w_i f_i(Y,  \Qcal^{(n)},  \Pcal^{(n)}, x^{(n)}, \lambda_i, \eta_i, \nu_i) $$ where $f_1 \dots f_M$ are the instantiations of different conditional submodular mutual information functions, $w_i$ their weights, $\lambda_1 \dots \lambda_M$ are their internal parameters respectively, $\eta_i$ are their query-relevance vs diversity trade-off parameters and $\nu_1 \dots \nu_M$ are their privacy-sensitivity parameters. 

So the parameters vector in case of joint summarization becomes $\Theta = (w_1 \dots w_m, \lambda_1 \dots \lambda_M, \eta_1 \dots \eta_M, \nu_1 \dots \eta_M)$

Then, $$\Lcal_n(\Theta) = \max_{Y \subset \Vcal^{(n)}, |\Ycal| \leq k} [\sum_{i=1}^M w_i f_i(Y,  \Qcal^{(n)},  \Pcal^{(n)}, x^{(n)}, \lambda_i, \eta_i, \nu_i) + l_n(\Ycal)] - \sum_{i=1}^M w_i f_i(\Ycal^{(n)},  \Qcal^{(n)},  \Pcal^{(n)}, x^{(n)}, \lambda_i, \eta_i, \nu_i) $$

For the purpose of learning the parameters in $\Theta$ we compute the gradients as, $$\frac{\partial L_n}{\partial w_i} = f_i(\hat{Y_n},  \Qcal^{(n)},  \Pcal^{(n)}, x^{(n)}, \lambda_i, \eta_i, \nu_i) - f_i(\Ycal^{(n)},  \Qcal^{(n)},  \Pcal^{(n)}, x^{(n)}, \lambda_i, \eta_i, \nu_i)$$ $$\frac{\partial L_n}{\partial \lambda_i} = w_i \frac{\partial f_i(\hat{Y_n},  \Qcal^{(n)},  \Pcal^{(n)}, x^{(n)}, \lambda_i, \eta_i, \nu_i)}{\partial \lambda_i}  - w_i \frac{\partial f_i(\Ycal^{(n)},  \Qcal^{(n)},  \Pcal^{(n)}, x^{(n)}, \lambda_i, \eta_i, \nu_i)}{\partial \lambda_i} $$ $$\frac{\partial L_n}{\partial \eta_i} = w_i \frac{\partial f_i(\hat{Y_n},  \Qcal^{(n)},  \Pcal^{(n)}, x^{(n)}, \lambda_i, \eta_i, \nu_i)}{\partial \eta_i}  - w_i \frac{\partial f_i(\Ycal^{(n)},  \Qcal^{(n)},  \Pcal^{(n)}, x^{(n)}, \lambda_i, \eta_i, \nu_i)}{\partial \eta_i} $$ $$\frac{\partial L_n}{\partial \nu_i} = w_i \frac{\partial f_i(\hat{Y_n},  \Qcal^{(n)},  \Pcal^{(n)}, x^{(n)}, \lambda_i, \eta_i, \nu_i)}{\partial \nu_i}  - w_i \frac{\partial f_i(\Ycal^{(n)},  \Qcal^{(n)},  \Pcal^{(n)}, x^{(n)}, \lambda_i, \eta_i, \nu_i)}{\partial \nu_i} $$ where $$\hat{Y_n} = \argmax_{Y \subset \Vcal^{(n)}, |\Ycal| \leq k} F(Y,  \Qcal^{(n)},  \Pcal^{(n)}, x^{(n)}, w, \lambda, \eta, \nu) + l_n(\Ycal) $$

\section{Connections to Past Work} \label{app:connections}

\subsection{Connections related to Targeted Learning} \label{app:connectionsTL}

\subsubsection{\textsc{Glister} is a special case of Algorithm ~\ref{algo:tss} in certain cases}
\label{app:glister-lemma}

%\subsection{Applying \textsc{Glister} to Targeted Subset Selection}\label{app:glister-targeted}
In this subsection, we first study the application of \textsc{Glister} to targeted data selection. In particular, we can formulate \textsc{Glister}~\cite{killamsetty2020glister} as:
\begin{align}\label{eq:glisterexp}
    &\min_{\Acal \subseteq \Vcal, |\Acal| = k} \Lcal(\theta_{\Acal}, \Tcal) \nonumber \\
    &\mbox{where } \theta_{\Acal} = \min_{\theta} \Lcal(\theta, \Acal)
\end{align}
Recall that given a set $\Acal$, the loss $\Lcal(\theta, \Acal) = \sum_{i \in \Acal} \Lcal(x_i, y_i, \theta)$. Furthermore, if the set $\Acal$ consists of unlabeled examples, we use hypothesized labeles similar to \textsc{Glister-Active}~\cite{killamsetty2020glister}.

In a manner similar to \textsc{Glister-Active}, we can apply the targeted setting as follows. Given the current model parameters $\theta$ (obtained by training the model on the labeled set), we can apply a one step gradient approximation to~\eqref{eq:glisterexp} and we obtain:
\begin{align}\label{eq:glisteronestep}
    &\min_{\Acal \subseteq \Vcal, |\Acal| = k} \Lcal(\theta - \eta \sum_{i \in \Acal} \nabla_{\theta} \Lcal_i(\theta), \Tcal)
\end{align}
We can then directly adapt Theorem 1 from~\cite{killamsetty2020glister} and obtain the following.
\begin{lemma}
When the loss function $\Lcal$ is either the Hinge Loss, Logistic Loss, Square Loss of the Perceptron Loss, Eq.~\eqref{eq:glisteronestep} can be written as a constrained submodular maximization problem.
\end{lemma}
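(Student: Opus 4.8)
The plan is to observe that Eq.~\eqref{eq:glisteronestep} is, structurally, the one-step (``online'') \textsc{Glister} objective with the target set $\Tcal$ playing the role of the held-out validation set, so that Theorem~1 of \cite{killamsetty2020glister} applies essentially verbatim. Concretely, I would first expand the outer loss over its examples, $\Lcal(\theta - \eta\sum_{i\in\Acal}\nabla_{\theta}\Lcal_i(\theta),\Tcal)=\sum_{j\in\Tcal}\Lcal(x_j,y_j,\theta-\eta\sum_{i\in\Acal}\nabla_{\theta}\Lcal_i(\theta))$, and then pass to the equivalent maximization problem $\max_{\Acal\subseteq\Vcal,\,|\Acal|=k} G(\Acal)$ where $G(\Acal)=c-\Lcal(\theta-\eta\sum_{i\in\Acal}\nabla_{\theta}\Lcal_i(\theta),\Tcal)$ for a constant $c$ chosen large enough that $G$ is nonnegative. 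It then suffices to show that $G$ is submodular.

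The key structural fact is that the one-step update $\theta\mapsto\theta-\eta\sum_{i\in\Acal}\nabla_{\theta}\Lcal_i(\theta)$ is an \emph{affine} (hence modular) function of $\Acal$, and that for a linear model each per-example loss $\Lcal(x_j,y_j,\cdot)$ depends on the parameters only through the scalar margin $y_j\langle\theta',x_j\rangle$. Substituting the update gives $y_j\langle\theta',x_j\rangle=y_j\langle\theta,x_j\rangle-\eta\sum_{i\in\Acal}y_j\langle\nabla_{\theta}\Lcal_i(\theta),x_j\rangle$, i.e.\ a constant plus a modular set function of $\Acal$. Thus each summand of $G$ is a one-dimensional loss profile composed with an affine/modular set function, and I would invoke the per-loss analysis of \cite{killamsetty2020glister}: for the hinge, logistic, perceptron, and square losses the resulting composition, after the global negation, is a submodular function of $\Acal$ (using, as there, the convexity / piecewise-linearity of the loss profile and the boundedness assumptions on the gradient inner products). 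Summing over $j\in\Tcal$ preserves submodularity, so $G$ is submodular and Eq.~\eqref{eq:glisteronestep} is a cardinality-constrained submodular maximization; monotonicity of $G$, when it holds, additionally yields the $1-1/e$ greedy guarantee.

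The main obstacle is not the reduction itself --- which uses only that $\Tcal$ is a fixed set, exactly as \textsc{Glister} uses a fixed validation set --- but the loss-by-loss verification that the composition of the loss profile with the modular gradient-step map is genuinely submodular (rather than supermodular) on the feasible region; this is where sign/boundedness conditions on $\langle\nabla_{\theta}\Lcal_i(\theta),x_j\rangle$ enter, and the square-loss case is the most delicate of the four. Rather than re-derive this, I would cite the corresponding case analysis in \cite{killamsetty2020glister} and only spell out the substitution $\Tcal\leftarrow$ validation set together with the modularity of the update step.
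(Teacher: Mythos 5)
Your proposal is correct and follows essentially the same route as the paper, whose entire proof of this lemma is the single remark that Theorem~1 of \cite{killamsetty2020glister} adapts directly once the target set is substituted for the validation set; the extra structure you supply (modularity of the one-step gradient update in $\Acal$, the margin reducing to a constant plus a modular set function, and the concave-over-modular form of the negated per-example losses) is exactly the mechanism the paper relies on implicitly and only spells out in the proof of the subsequent lemma. Your caveat about the square loss is well taken: the paper's follow-up COM result quietly restricts to hinge, logistic, and perceptron losses, consistent with your observation that the square-loss case is the delicate one.
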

This means that we can obtain the solution using a simple greedy algorithm.

We now see that under certain conditions, \textsc{Glister} is a special case of Algorithm ~\ref{algo:tss}.

\begin{lemma}
\textsc{Glister} with either of Hinge Loss, Logistic Loss or the Perceptron Loss is a special case of Algorithm~\ref{algo:tss} when $I_f$ is COM ($\eta = 0$ and $\lambda = 0$).
\end{lemma}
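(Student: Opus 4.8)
The plan is to recognize \textsc{Glister}'s one-gradient-step surrogate, Eq.~\eqref{eq:glisteronestep}, as the negative of a modular function in the gradient-inner-product kernel used by \AlgRef{algo:tss}, and then to identify that modular function with the COM mutual information at $\eta = 0$. To keep notation disjoint I would write $\alpha$ for the step size appearing in~\eqref{eq:glisteronestep} (the symbol written ``$\eta$'' there) and reserve $\eta$ for the COM trade-off parameter. Recall that \AlgRef{algo:tss} builds the kernel $S_{ij} = \langle \nabla_{\theta_\Ecal}\Lcal_i(\theta_\Ecal),\nabla_{\theta_\Ecal}\Lcal_j(\theta_\Ecal)\rangle$ with hypothesized labels on $\Ucal$, uses $\Tcal$ in place of the query set $\Qcal$, and --- since here $\Pcal = \emptyset$ --- its CMI objective $I_f(\Acal;\Tcal|\Pcal)$ degenerates to $I_f(\Acal;\Tcal)$.

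\textbf{Unfolding \textsc{Glister}.} Since $\Lcal(\theta,\Tcal) = \sum_{j\in\Tcal}\Lcal_j(\theta)$, we get $\nabla_\theta\Lcal(\theta,\Tcal) = \sum_{j\in\Tcal}\nabla_\theta\Lcal_j(\theta)$, and the one-step computations behind Theorem~1 of~\cite{killamsetty2020glister} show that, for the Hinge, Logistic, and Perceptron losses, the objective of~\eqref{eq:glisteronestep} equals $\Lcal(\theta,\Tcal) - \alpha\sum_{j\in\Tcal}\sum_{i\in\Acal}\langle\nabla_\theta\Lcal_j(\theta),\nabla_\theta\Lcal_i(\theta)\rangle = \Lcal(\theta,\Tcal) - \alpha\sum_{j\in\Tcal}\sum_{i\in\Acal}S_{ij}$. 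The first term does not depend on $\Acal$, so the \textsc{Glister} selection problem $\min_{\Acal\subseteq\Ucal,\,|\Acal|=k}$ of~\eqref{eq:glisteronestep} is equivalent to $\max_{\Acal\subseteq\Ucal,\,|\Acal|\le k}\sum_{i\in\Acal}\bigl(\sum_{j\in\Tcal}S_{ij}\bigr)$, a modular maximization under a cardinality constraint.

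\textbf{Matching COM.} From \tabref{tab:SIM_inst}, the COM mutual information is $I_f(\Acal;\Tcal) = \eta\sum_{i\in\Acal}\psi\bigl(\sum_{j\in\Tcal}S_{ij}\bigr) + \sum_{j\in\Tcal}\psi\bigl(\sum_{i\in\Acal}S_{ij}\bigr)$. Setting $\eta = 0$ removes the first term, and taking the admissible (if degenerate) concave choice $\psi = \mathrm{id}$ collapses the second to $\sum_{j\in\Tcal}\sum_{i\in\Acal}S_{ij}$ --- exactly the objective from the previous step; the hypothesis $\lambda = 0$ (equivalently, no $\gamma g(\Acal)$ diversity term is appended) ensures \AlgRef{algo:tss} maximizes this and nothing else. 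Because $\Pcal = \emptyset$ forces $I_f(\Acal;\Tcal|\Pcal) = I_f(\Acal;\Tcal)$, Step~4 of \AlgRef{algo:tss} solves precisely the \textsc{Glister} subset-selection problem, while Steps~1--3 and 5--6 (train on $\Ecal$, form the gradient kernel, label $\hat\Acal$, re-train) replicate the \textsc{Glister-Active} loop. That is the desired specialization.

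\textbf{Main obstacle.} The real work is the reduction in the ``Unfolding'' step: showing that for exactly the Hinge, Logistic, and Perceptron losses the one-step objective collapses to the clean modular form $\sum_{j\in\Tcal}\sum_{i\in\Acal}S_{ij}$, and why Square loss does not fit (its gradient carries an extra data-dependent scalar, so the one-step objective picks up a quadratic-in-$\Acal$ term that is not a linear combination of the $S_{ij}$, which is also why the earlier lemma only guarantees \emph{submodularity}, not this modular form, for Square loss). This step is carried out by invoking the loss-specific calculations of~\cite{killamsetty2020glister}; the rest --- discarding the constant $\Lcal(\theta,\Tcal)$, flipping $\min$ to $\max$, and noting $\Pcal = \emptyset$ makes CMI collapse to MI --- is routine bookkeeping.
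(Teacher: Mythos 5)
Your reduction goes wrong at the ``Unfolding'' step, and the error is not cosmetic. You assert that for the Hinge, Logistic, and Perceptron losses the one-step objective in Eq.~\eqref{eq:glisteronestep} \emph{equals} $\Lcal(\theta,\Tcal) - \alpha\sum_{j\in\Tcal}\sum_{i\in\Acal}S_{ij}$, i.e.\ that it is exactly modular in $\Acal$. That identity would require an additional first-order Taylor linearization of $\Lcal(\cdot,\Tcal)$ around $\theta$, which neither \textsc{Glister} nor this paper performs. What the loss-specific computations actually give is the loss evaluated \emph{exactly} at the shifted parameter $\theta - \alpha\sum_{i\in\Acal}\nabla_\theta\Lcal_i(\theta)$: since each of these losses is a function of $\langle\theta,x_j\rangle$, the shift makes the \emph{argument} of the loss modular in $\Acal$, and the loss itself then wraps that modular function in a nonlinearity. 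Concretely, the paper's proof shows the resulting objective has the form $\sum_{j\in\Tcal}\min(c_j(\Acal),\alpha)$ for Hinge/Perceptron and $\sum_{j\in\Tcal}\bigl(C-\log(1+\exp(c_j(\Acal)))\bigr)$ for Logistic, with $c_j$ modular. The nontrivial concave $\psi$ of the COM function ($\psi(x)=\min(x,\alpha)$, resp.\ $\psi(x)=C-\log(1+e^{x})$) is therefore supplied by the loss itself; it is the entire content of the identification with COM at $\eta=0$, namely $\sum_{j\in\Tcal}\psi(\sum_{i\in\Acal}S_{ij})$.

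By forcing $\psi=\mathrm{id}$ you collapse COM to a plain modular function, which (up to the constant $2\lambda$) is \textsc{Gcmi}, not COM in any meaningful sense --- if the claim were really that \textsc{Glister}'s one-step objective is modular, Theorem~1 of \textsc{Glister} (submodularity of the selection objective) would be vacuous and the lemma would be stated with \textsc{Gcmi} rather than COM. Your diagnosis of why the Square loss is excluded is also off for the same reason: in the paper's framing the Square loss fails not because of a ``quadratic-in-$\Acal$ term on top of a linear one'' but because the exact one-step objective for it is not concave-over-modular (the quadratic dependence on $\langle\theta,x\rangle$ breaks the $\psi(c_j(\Acal))$ structure), even though it can still be shown submodular under conditions. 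The surrounding bookkeeping in your write-up (kernel definition, $\Pcal=\emptyset$ collapsing CMI to MI, dropping the constant $\Lcal(\theta,\Tcal)$, flipping $\min$ to $\max$) is fine, but the core step --- exhibiting the one-step objective as $\sum_{j\in\Tcal}\psi(c_j(\Acal))$ with the correct, loss-induced concave $\psi$ --- is missing and is replaced by a false equality.
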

\begin{proof}
The proof of this follows almost directly from~\cite{killamsetty2020glister}. In particular, Eq.~\eqref{eq:glisteronestep} is of the form $f(\Acal) = \sum_{j \in \Tcal} \min(c_i(\Acal), \alpha)$ when $\Lcal$ is the Hinge Loss or Perceptron Loss. Similarly, Eq.~\eqref{eq:glisteronestep} is of the form $f(\Acal) = \sum_{j \in \Tcal} C - \log(1 + \exp(c_i(\Acal))$ when $\Lcal$ is the Logistic Loss. Both these functions are concave over modular functions and hence \textsc{Glister} is a special case of Algorithm~\ref{algo:tss} when using COM as the GMI function. 
\end{proof}

\subsubsection{\textsc{TargetedCraig} is a special case of Algorithm ~\ref{algo:tss}}
\label{app:targeted-craig}
\begin{lemma}
\textsc{TargetedCraig} is a special case of Algorithm~\ref{algo:tss} when $I_f$ is \textsc{Flqmi} and $\eta = 0$.
\end{lemma}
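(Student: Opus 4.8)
The plan is to show that, under the stated parameter choices, the objective maximized in Step~4 of \AlgRef{algo:tss} reduces term-by-term to the \textsc{TargetedCraig} objective, and that every other ingredient of the two procedures (ground set, similarity kernel, cardinality constraint, and the greedy maximization) already coincides. So the argument is essentially a substitution into the \textsc{Flqmi} expression of \tabref{tab:SIM_inst}, followed by a short check that nothing else differs.

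First I would take $\Pcal = \emptyset$ in \AlgRef{algo:tss}; as noted in \secref{sec:tss}, when $\Pcal = \emptyset$ the CMI objective $I_f(\Acal; \Tcal \mid \Pcal)$ degenerates to the MI objective $I_f(\Acal; \Tcal)$. Next I instantiate $I_f$ as \textsc{Flqmi} with query set $\Qcal = \Tcal$, which by \tabref{tab:SIM_inst} gives
\begin{align}
I_f(\Acal; \Tcal) = \sum_{i \in \Tcal} \max_{j \in \Acal} S_{ij} + \eta \sum_{i \in \Acal} \max_{j \in \Tcal} S_{ij}.
\end{align}
Setting $\eta = 0$ annihilates the second term and leaves $I_f(\Acal; \Tcal) = \sum_{i \in \Tcal} \max_{j \in \Acal} S_{ij}$, which is exactly the facility-location objective that \textsc{TargetedCraig} maximizes. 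It then remains to match the remaining pieces: \AlgRef{algo:tss} builds its kernel from last-layer gradients, $S_{ij} = \langle \nabla_{\theta_{\Ecal}} \Lcal_i(\theta_{\Ecal}), \nabla_{\theta_{\Ecal}} \Lcal_j(\theta_{\Ecal}) \rangle$, using hypothesized labels on the unlabeled points $i \in \Ucal$ and true labels on $i \in \Tcal$, which is precisely the gradient similarity \textsc{TargetedCraig} uses; and the discrete problem $\hat{\Acal} = \argmax_{\Acal \subseteq \Ucal,\, |\Acal| \le k} I_f(\Acal; \Tcal)$ is the same cardinality-constrained maximization of a monotone submodular facility-location function that \textsc{TargetedCraig} solves greedily. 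Hence the two procedures return the same $\hat{\Acal}$ and therefore the same augmented training set $\Ecal \cup L(\hat{\Acal})$.

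I expect no genuine obstacle here: the content is a substitution into the \textsc{Flqmi} formula together with an identification of kernels. The one point worth a sentence of care is that the kernel entering \textsc{Flqmi} may be taken to be the $\Tcal \times \Ucal$ gradient inner-product kernel, so that the inner $\max_{j \in \Acal}$ ranges over unlabeled candidates while the outer sum over $i$ ranges over the target set, and that hypothesized labels on $\Ucal$ are used consistently in both formulations. Once this is noted, equality of the objectives, and hence of the selected subsets, is immediate.
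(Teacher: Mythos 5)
Your proof is correct and follows essentially the same route as the paper's: substitute $\eta = 0$ into the \textsc{Flqmi} expression $I_f(\Acal;\Tcal) = \sum_{i \in \Tcal} \max_{j \in \Acal} S_{ij} + \eta \sum_{i \in \Acal} \max_{j \in \Tcal} S_{ij}$ to recover the \textsc{TargetedCraig} facility-location objective $\sum_{i \in \Tcal} \max_{j \in \Acal} S_{ij}$. Your additional checks (that $\Pcal = \emptyset$ reduces CMI to MI, and that the gradient kernel, cardinality constraint, and greedy maximization coincide) are a welcome bit of extra care the paper leaves implicit, but the core argument is the same observation.
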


\begin{proof}
This result follows from a simple observation that the expression for \textsc{TargetedCraig} is:
\begin{align}
f(\Acal) = \sum_{i \in \Tcal} \max_{j \in \Acal} S_{ij}
\end{align}
Note that this is exactly the same as $I_g(\Acal; \Tcal)$ where $g$ is the facility location function, $I_g$ is \textsc{Flqmi} and $\eta = 0$ since the generic expression of \textsc{Flqmi} is $I_g(\Acal; \Tcal) = \sum_{i \in \Tcal} \max_{j \in \Acal} S_{ij} + \eta \sum_{i \in \Acal} \max_{j \in \Tcal} S_{ij}$.
\end{proof}

\subsubsection{\textsc{Grad-Match} as a special case of Algorithm ~\ref{algo:tss}}
\label{app:grad-diff}

\begin{lemma}
Minimizing the gradient difference ($h(\Acal)$)%Eq.~\eqref{eqn:tdss-grad-diff}) 
can be rewritten as a special case of Algorithm~\ref{algo:tss} when $I_f(\Acal; \Tcal) = \sum_{i \in \Acal, j \in \Tcal} \langle \nabla \Lcal^U_i(\theta_E), \nabla \Lcal^T_j(\theta_E) \rangle$ is \textsc{Gcmi} and $g(\Acal) = -\sum_{i, j \in \Acal} \langle \nabla \Lcal^U_i(\theta_E), \nabla \Lcal^U_j(\theta_E) \rangle + \sum_{i \in \Acal} \lvert \nabla \Lcal^U_i(\theta_E) \rVert^2 $ is a diversity function and $\gamma = |\Tcal|/k$.
\end{lemma}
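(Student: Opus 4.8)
The plan is to expand the squared gradient–matching objective $h(\Acal)$ by bilinearity of the inner product, strip off the term that does not depend on $\Acal$, and recognize the two surviving $\Acal$-dependent terms as a rescaled \textsc{Gcmi} term and the diversity function $g$. Since an affine rescaling by a positive constant does not move the arg-min, this exhibits $\min_{\Acal} h(\Acal)$ as an instance of the maximization solved in \AlgRef{algo:tss} with $I_f$ equal to \textsc{Gcmi} and a diversity regularizer $\gamma g$.

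Concretely, writing $\nabla\Lcal(\Xcal,\theta_E) = \sum_{i\in\Xcal}\nabla\Lcal_i(\theta_E)$ and putting $a = \tfrac{1}{|\Tcal|}\nabla\Lcal(\Tcal,\theta_E)$ and $b = \tfrac{1}{k}\nabla\Lcal(\Acal,\theta_E)$, the first step is
\begin{align*}
h(\Acal) \;=\; \|a\|^2 \;-\; 2\langle a,b\rangle \;+\; \|b\|^2 .
\end{align*}
The term $\|a\|^2$ depends only on $\Tcal$, so it is constant for the optimization over $\Acal\subseteq\Ucal$ of size $k$. The second step expands the inner products over the summands: $\langle a,b\rangle = \tfrac{1}{|\Tcal|k}\sum_{i\in\Acal,\,j\in\Tcal}\langle\nabla\Lcal^U_i(\theta_E),\nabla\Lcal^T_j(\theta_E)\rangle = \tfrac{1}{|\Tcal|k}\,I_f(\Acal;\Tcal)$, which is precisely the \textsc{Gcmi} instantiation with the gradient inner-product kernel (for the matching value of the \textsc{Gcmi} scale $\lambda$), while $\|b\|^2 = \tfrac{1}{k^2}\sum_{i,j\in\Acal}\langle\nabla\Lcal^U_i(\theta_E),\nabla\Lcal^U_j(\theta_E)\rangle$. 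The third step separates the diagonal: $\sum_{i,j\in\Acal}\langle\cdot,\cdot\rangle = \sum_{i\in\Acal}\|\nabla\Lcal^U_i(\theta_E)\|^2 - g(\Acal)$, which is exactly how $g$ is defined in the statement. Substituting, discarding $\|a\|^2$, multiplying through by the positive constant $\tfrac{|\Tcal|k}{2}$, and negating to turn minimization into maximization reduces $\min_\Acal h(\Acal)$ to
\begin{align*}
\max_{\Acal\subseteq\Ucal,\,|\Acal|=k}\; I_f(\Acal;\Tcal) \;+\; \gamma\, g(\Acal) \;-\; \gamma\sum_{i\in\Acal}\|\nabla\Lcal^U_i(\theta_E)\|^2 ,
\end{align*}
with $\gamma$ a fixed multiple of $|\Tcal|/k$ (the stray factor of $2$ absorbed into the \textsc{Gcmi} normalization $2\lambda$). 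Up to the last, modular term this is exactly the objective optimized in \AlgRef{algo:tss} with $I_f$ the \textsc{Gcmi} function and $g$ the diversity component, giving the claimed special case.

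The one step that needs care — and the main obstacle — is the leftover modular term $\sum_{i\in\Acal}\|\nabla\Lcal^U_i(\theta_E)\|^2$, which is in general \emph{not} constant over size-$k$ subsets. I would dispose of it exactly as \textsc{Craig} and \textsc{Grad-Match} do, by working with last-layer gradients normalized to (approximately) unit length, so that this sum equals $k$ and is an additive constant; equivalently, one may fold this modular shift into $g$, since \AlgRef{algo:tss} is structurally unchanged by adding a modular term to its diversity component. The remaining work — pinning the numerical value of $\gamma$ to $|\Tcal|/k$ against the chosen \textsc{Gcmi} constant, and tracking signs — is routine bookkeeping.
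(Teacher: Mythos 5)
Your proposal is correct and follows essentially the same route as the paper's proof: expand the squared gradient difference, discard the $\Acal$-independent term $\|a\|^2$, identify the cross term with \textsc{Gcmi} under the gradient inner-product kernel, and split $\|b\|^2$ into a diagonal modular part plus the diversity function $g$. If anything you are more careful than the paper, which ends with ``expanding this out, we get\dots'' and never addresses the leftover modular term $\sum_{i\in\Acal}\lVert\nabla\Lcal^U_i(\theta_E)\rVert^2$ that you correctly flag and dispose of via gradient normalization or by folding it into the diversity component.
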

\begin{proof}
To prove this result, we expand the gradient difference expression:
\begin{align}
    h(\Acal) = \lVert \frac{1}{|\Tcal|} \nabla \Lcal(\Tcal, \theta_E) - \frac{1}{k} \nabla \Lcal(\Acal, \theta_E) \rVert^2
\end{align}
Note that since we are minimizing the gradient difference and hence $h$, define $g(\Acal) = -h(\Acal)$. Then,
\begin{align}
    g(\Acal) = -\frac{1}{|\Tcal|^2} \lVert \nabla \Lcal(\Tcal, \theta_E) \rVert^2 - \frac{1}{k^2} \lVert \nabla \Lcal(\Acal, \theta_E) \rVert^2 + 2 \frac{1}{|\Tcal|k} \langle \nabla \Lcal(\Tcal, \theta_E), \nabla \Lcal(\Acal, \theta_E) \rangle
\end{align}
We immediately see that the first term is independent of $\Acal$ and is a constant. Similarly, the third term is an instance of \textsc{Gcmi}. We now expand the second term. 
\begin{align}
    - \frac{1}{k^2} \lVert \nabla \Lcal(\Acal, \theta_E) \rVert^2 = - \frac{2}{k^2} [\sum_{i \in \Acal} \lVert \nabla \Lcal_i(\theta_E) \rVert^2 - \sum_{i, j \in \Acal} \langle \nabla \Lcal_i(\theta_E), \nabla \Lcal_j(\theta_E) \rangle
\end{align}
Expanding this out, we get that minimizing the gradient difference can be rewritten as maximizing the sum of \textsc{Gcmi} and a diversity term. 
\end{proof}

\subsection{Connections Related to Guided Summarization}\label{app:connectionsGuidedSumm}

\label{app:sum-gen}
In this section, we discuss how several past works have (unknowingly) used instances of various MI functions.
\begin{itemize}
    \item \textbf{\textsc{Gcmi}: } Several query-focused summarization works for document summarization~\cite{lin2012submodularity, li2012multi} and video summarization~\cite{vasudevan2017query} use \textsc{Gcmi}. All these papers study a simple graph-cut based query relevance term (which is a special case of our submodular mutual information framework) with a single query point: $f(\Acal) = \sum_{i \in \Acal} s_{iq}$. \textsc{Gcmi} seamlessly extends this to consider a query set.
    \item \textbf{\textsc{Gccg} and GCCMI: } The Graph Cut Conditional Gain function was used in update-summarization~\cite{li2012multi} (See Table 1 in their paper). Furthermore, the authors also consider query-focused-update summarization, in which case they use a GCCMI expression: $f(\Acal) = I_f(\Acal; Q | \Acal_0)$ where $\Acal_0$ is an existing summary and the goal is to select a summary relevant to a query $Q$ and yet different from $\Acal_0$. The same authors also study graph cut for query-focused summarization, and in both cases observe the utility of this class of functions. 
    \item \textbf{\textsc{Logdetmi}: } The query-focused summarization model used in ~\cite{sharghi2016query} is very similar to our \textsc{Logdetmi}, if we do not consider the sequential DPP model structure. In particular, if we assume $S_{\Acal} = I, S_{\Qcal} = I$ (i.e. the elements within $\cal V$ and $v$ are independent), then $I_f(\Acal; \Qcal) = \log \det(I - S_{\Acal \Qcal} S_{\Acal \Qcal}^T$), which is then similar to the query term in ~\cite{sharghi2016query} (e.g. Equation (6) in their paper with $W = I$). This shows that \textsc{Logdetmi} as a model makes sense for query-focused summarization.
    \item \textbf{COM: } \cite{lin2011class} propose a combination of query relevance and diversity term for document summarization. The expression they propose is very similar to COM if we ignore the diversity term. This has achieved state of the art results for query-focused document summarization.
    % \item \textbf{ROUGE: } ROUGE is a very common evaluation metric for document summarization~\cite{lin2004rouge,lin2012learning,lin2011class}. As shown in~\cite{lin2011class}, ROUGE metric is actually submodular. We actually observe that ROUGE is in fact exactly the query-saturation (Q-Sat) function and hence is also subsumed in our framework through GMI.
\end{itemize}
Our proposed framework significantly extends these and also provides a rich class of functions for query-focused, privacy-preserving and update summarization. 

\section{Additional Details on Targeted Learning using \model{} (Section ~\ref{sec:tss})} \label{app:tss-exp}

\subsection{Additional Details for Experimental Setup and Further Discussion of Results}% (Section ~\ref{sec:exp-tss})}

We use a common training process and hyperparameters for all datasets. Below, we present the hyperparameters used during training: \begin{itemize}
    \item \textbf{Optimization Algorithm:} SGD with Momentum
    \item\textbf{ Learning Rate:} 0.01 with Cosine Annealing
    \item \textbf{Momentum:} 0.9, Weight Decay: 5e-04
    \item \textbf{Number of Epochs:} 100. We got these numbers by taking the stopping condition to be the training accuracy (more than 99\%).
\end{itemize}

% \begin{figure}[h]
%     \centering 
%   \includegraphics[width=0.49\textwidth]{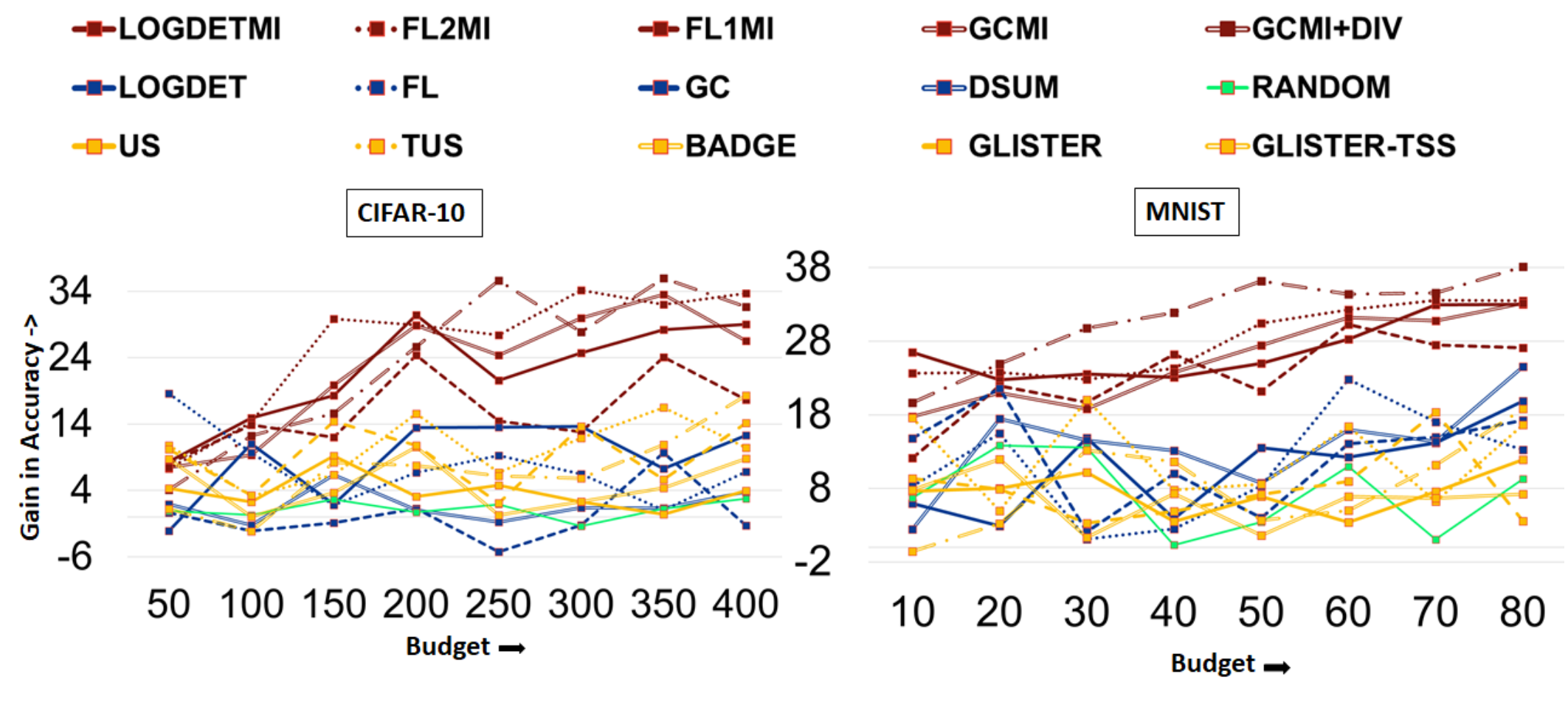}
% %\vspace{-2ex}
% \caption{Comparison of different methods for targeted subset selection for different budgets on CIFAR-10 and MNIST. X-axis: budgets, Y-axis: gain in model accuracy for target classes. MI based approaches (lines in {\color{BrickRed}{red})} significantly outperform others across all subset sizes. (Section~\ref{subsec:exp-tss})}
% %\vspace{-2ex}
% \label{fig:gain-size-large}
% \end{figure}

% \textcolor{red} exact splits in case of MNIST and CIFAR, initial labeled set size, how we create the imbalance, the learning rate, number of epochs. Also do not forget to mention that we create the imbalance every time we randomly pick up two new target classes.}

% We report a better resolution image presenting the effect of budget size on the performance of various methods (Fig.~\ref{fig:gain-size-large}). 

We also make following additional observations about the results on targeted learning:
\begin{enumerate}
    \item Pure retrieval function (\textsc{Gcmi}) works better than uncertainty based functions. This is as expected because for the task at hand, i.e. targeted subset selection, relevance with target plays an important role.
    \item \textsc{Flvmi}, which tends to model more of diversity than query-relevance, performs worse than \textsc{Gcmi}.
    \item It appears counter-intuitive that the targeted version of \textsc{glister}(\textsc{glister-tss}) performs better than \textsc{glister} on MNIST, as expected, but worse than \textsc{glister} on SVHN. We think this is the case because \textsc{glister-tss} depends heavily on the target set (optimizing its performance) and thus tends to overfit when the target set has very few instances. In contrast, our MI functions work well even when the target set is very small.
\end{enumerate}
We also note that \textsc{Glister-Tss} used in this setting is not a special case of Algorithm~\ref{algo:tss} since we used the cross entropy loss.

\subsection{Details and Results for Computing Pairwise Penalty Matrix}

\figref{pen-matrix1} shows the penalty matrix results on the average rare class accuracy for MNIST and SVHN, while \figref{pen-matrix2} shows the penaltry matrix for CIFAR-10 and Pneumonia-MNIST. We see that \textsc{Logdetmi}, \textsc{Flqmi}, \textsc{Flvmi} and \textsc{Gcmi} have the smallest column sum, which indicates that most other baselines are not statistically significantly better than them. Furthermore, they also have the highest row sum (followed by some of the other MI functions), which indicates that they are statistically significantly better than other approaches.

The penalty matrices computed in this paper follow the strategy used in~\cite{ash2020deep}. In their strategy, a penalty matrix is constructed for each dataset-model pair. Each cell $(i,j)$ of the matrix reflects the fraction of training rounds that AL with selection algorithm $i$ has higher test accuracy than AL with selection algorithm, $j$ with statistical significance. As such, the average difference between the test accuracies of $i$ and $j$ and the standard error of that difference are computed for each training round. A two-tailed $t$-test is then performed for each training round: If $t>t_\alpha$, then $\frac{1}{N_{train}}$ is added to cell $(i,j)$. If $t<-t_\alpha$, then $\frac{1}{N_{train}}$ is added to cell $(j,i)$. Hence, the full penalty matrix gives a holistic understanding of how each selection algorithm compares against the others: A row with mostly high values signals that the associated selection algorithm performs better than the others; however, a column with mostly high values signals that the associated selection algorithm performs worse than the others. As a final note, \cite{ash2020deep} takes an additional step where they consolidate the matrices for each dataset-model pair into one matrix by taking the sum across these matrices, giving a summary of the AL performance for their entire paper that is fairly weighted to each experiment. We present the penalty matrices for each of the settings in the sections below.

\begin{figure}[!ht]
\centering
\includegraphics[width =0.7\textwidth]{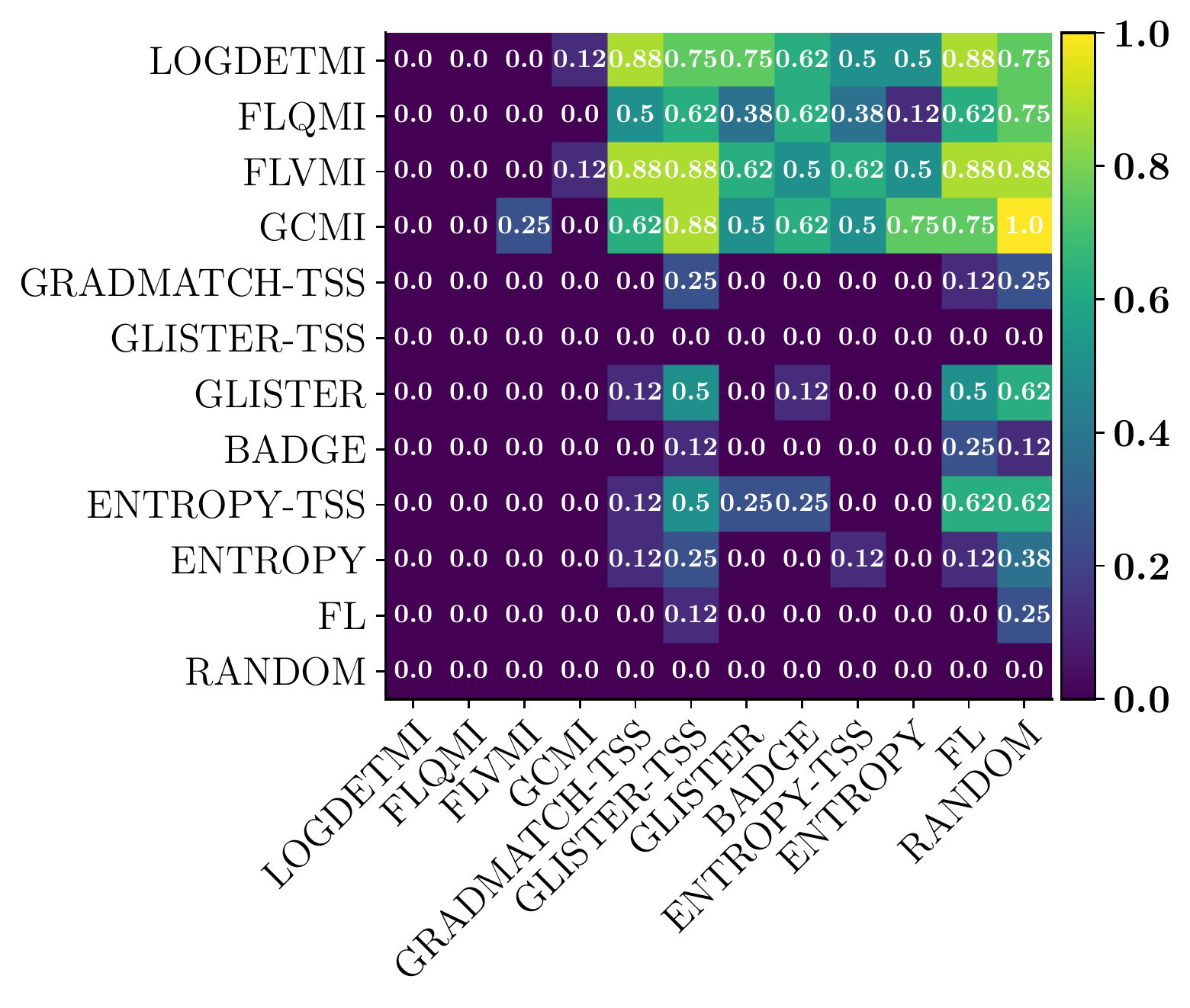}
\includegraphics[width = 0.7\textwidth]{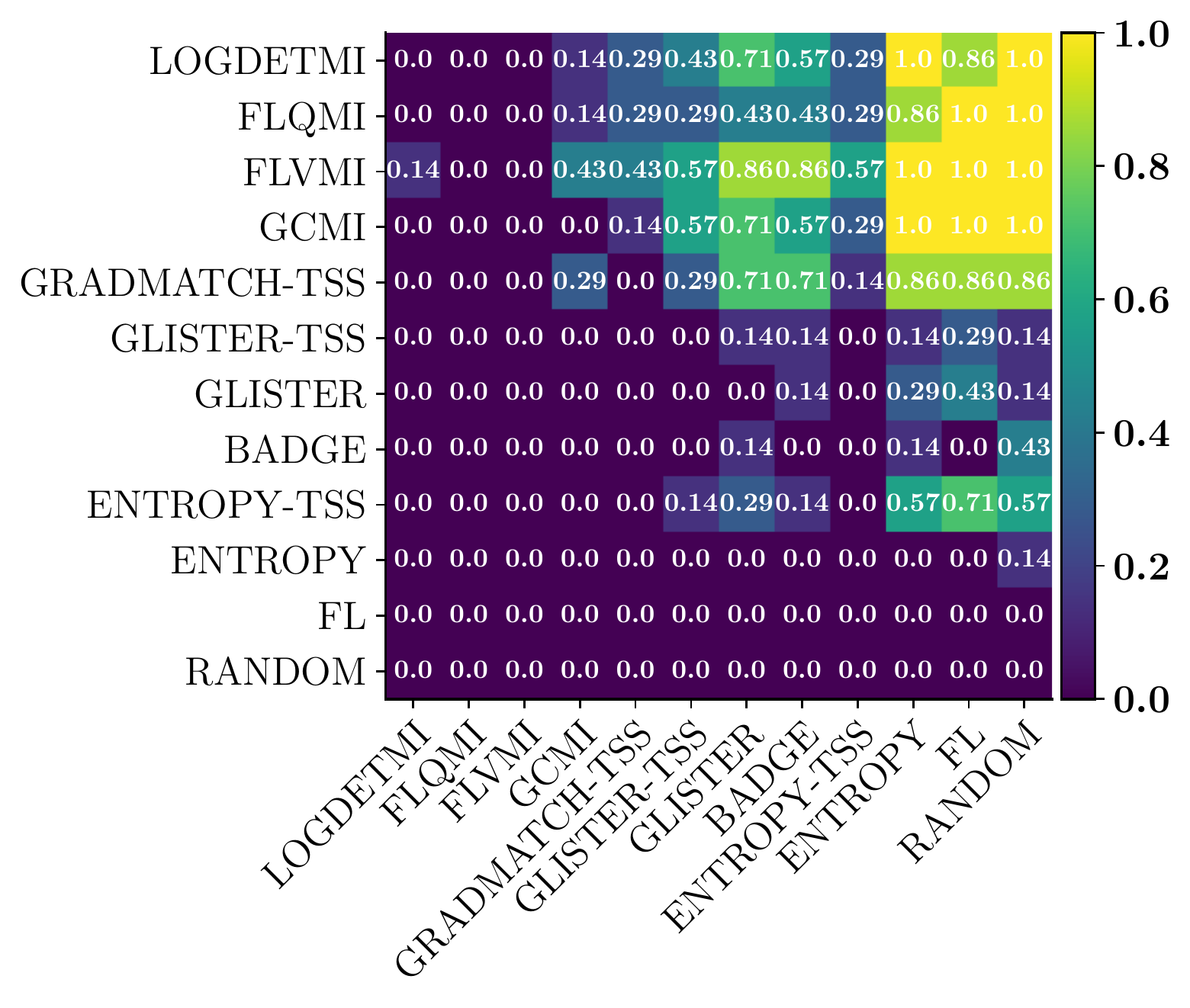}
\caption{Penalty Matrix comparing the average accuracy of rare classes for MNIST (\textbf{top}) and SVHN (\textbf{bottom}) for different datasets using targeted learning across multiple budgets. We observe that the SMI functions have a much lower column sum compared to other approaches.
}
\label{pen-matrix1}
\end{figure}

\begin{figure}[!ht]
\centering
\includegraphics[width =0.7\textwidth]{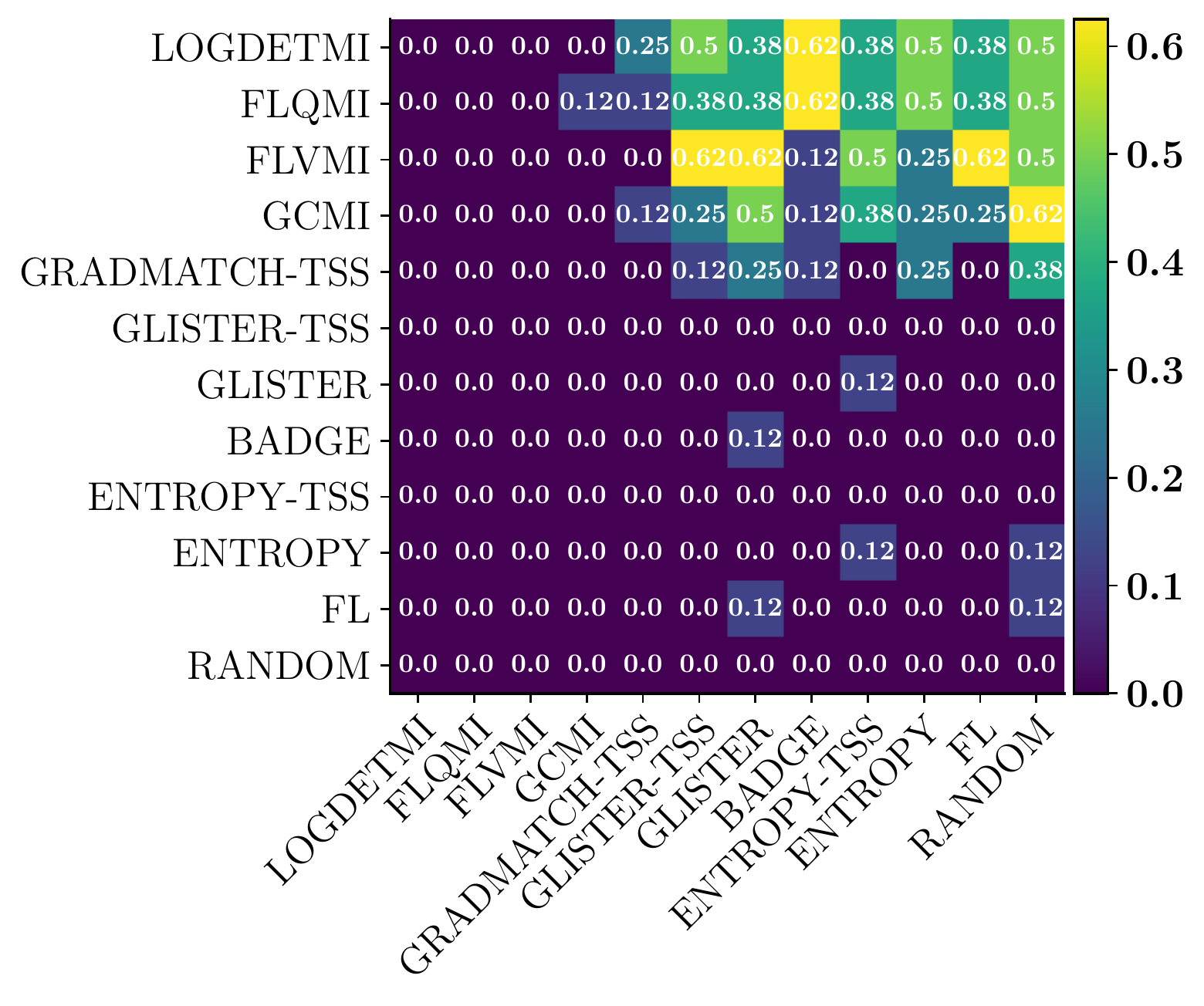}
\includegraphics[width = 0.7\textwidth]{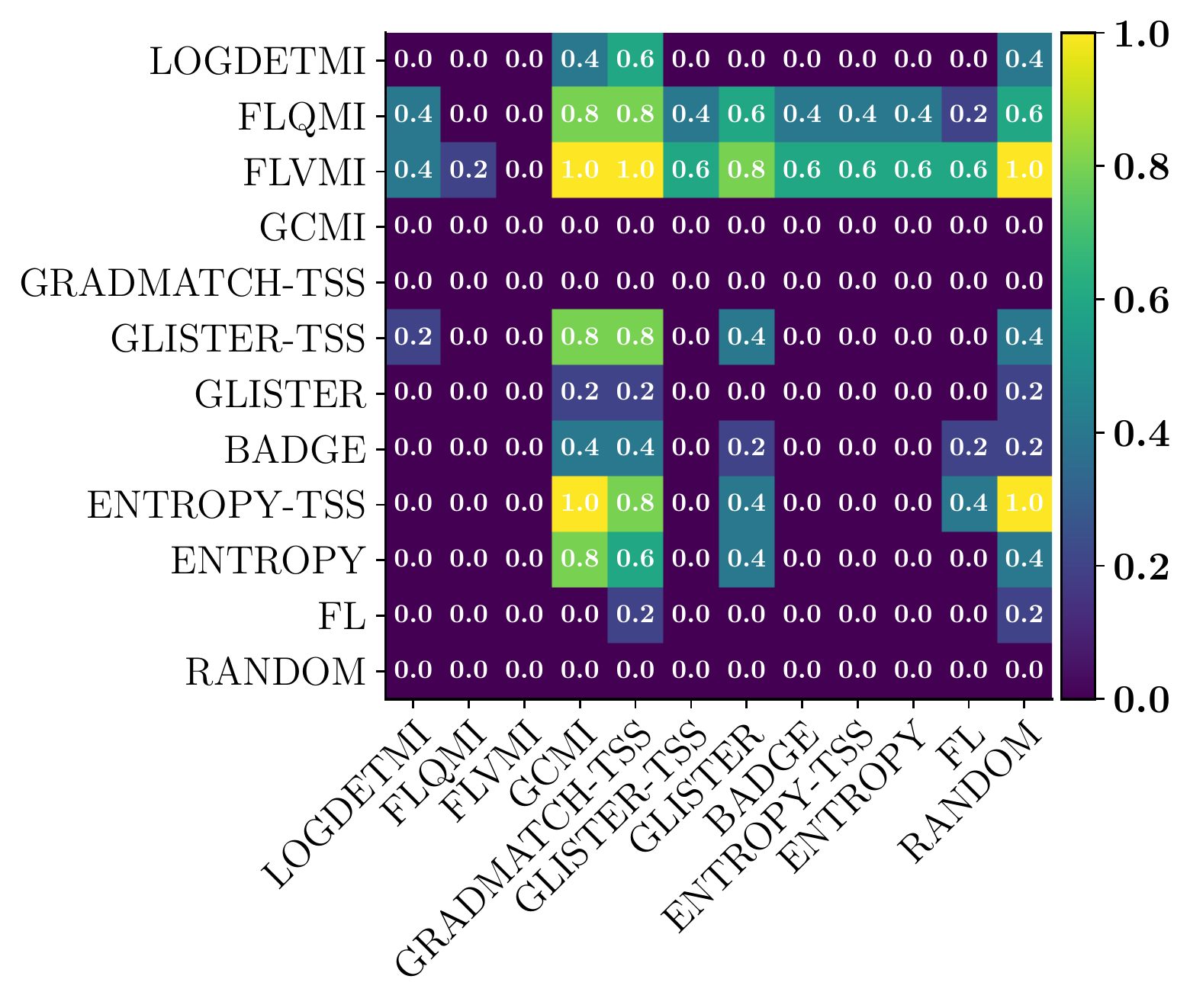}
\caption{Penalty Matrix comparing the average accuracy of rare classes for CIFAR-10 (\textbf{top}) and Pneumonia-MNIST (\textbf{bottom}) for different datasets using targeted learning across multiple budgets. We observe that the SMI functions have a much lower column sum compared to other approaches.
}
\label{pen-matrix2}
\end{figure}

\subsection{Details on Computing Labeling Efficiency}
The labeling efficiency is computed as the ratio of the minimum number of labeled training instances needed for an AL algorithm to reach a particular average rare class accuracy to the minimum number of labeled training instances needed by random sampling to reach that average rare class accuracy. Since most of the baselines are unable to reach each, the average rare class accuracy obtained by the MI functions at low budgets, we run targeted learning at the same intervals as shown in \figref{fig:TL} until the full unlabeled dataset is selected. Across multiple test accuracies for all datasets, we observe $20 \times - 50 \times$ labeling efficiency compared to random and $2 \times - 4 \times$ compared to the existing approaches.

\section{Additional Details on Guided Summarization using \model{} (Section ~\ref{sec:tsum})}\label{app:real-sum}

We use the image collection dataset of~\cite{tschiatschek2014learning}. The dataset has 14 image collections with 100 images each and provides many (~50-250) human summaries per collection. We extend it by creating dense noun concept annotations for every image to make it suitable for our task. We start by designing the universe of concepts based on the 600 object classes in OpenImagesv6~\cite{kuznetsova2018open} and 365 scenes in Places365~\cite{zhou2014learning}. We eliminate concepts common to both (for example, \emph{closet}) to get a unified list of 959 concepts. To ease the annotation process we adopt pseudo-labelling followed by human correction. Specifically for every image we get the concept labels from a Yolov3 model pre-trained on OpenImagesv6 (for object concepts) and a ResNet50 model pre-trained on Place365 (for scene concepts). We then ask 5 human annotators to separately and individually correct the automatically generated labels (pseudo-labels). This is followed by finding a consensus over the set of concepts for each image to arrive at the final annotation concept vectors for each image. We have developed a Python GUI tool to ease this pseudo-label correction process, which we plan to release.

In addition to the already available generic human summaries, we augment the dataset with query-focused, privacy-preserving and joint query-focused and privacy-preserving summaries for each image collection. Specifically, we design 2 uni-concept and 2 bi-concept queries / private sets for each image collection to cover different cases like a) both concepts belonging to same image b) both concepts belonging to different images c) only one concept in the image collection. This is similar in spirit to~\cite{sharghi2017query}. We ask a group of 10 human annotators (different from those who annotated for concepts) to create a human summary (of 5 images) for each image collection and query and/or private pair. To ensure gold standard summaries, we followed this by a verification round. Specifically, we asked at least three annotators to accept/reject the summaries thus produced and we discarded those human summaries which were rejected by two or more such human annotators.

\paragraph{Instructions to participants for object and scene annotations and screen shot of the tool}: these are available in Annotation\_Guidelines.docx attached with the supplementary material. 
% \href{https://docs.google.com/document/d/1VBrKSXl_JNLHXioTsOxP1FO1DS0C5-QVwuyYyC4R_O4/edit?usp=sharing}{a Google doc here}.

\paragraph{Hourly wage paid to participants and the total amount spent on participant compensation:} The entire effort of creating object and scene annotations along with producing the human summaries required about 170 man hours paid at the rate of Indian Rupees (INR) 700 per man hour and hence the total compensation provided was about INR 1,20,000. That is, approximately 1715 USD.

To instantiate the mixture model components, we represent images using the probabilistic feature vector taken from the output layer of YOLOv3 model~\cite{redmon2018yolov3} pre-trained on the open images dataset \cite{kuznetsova2018open} and concatenate it with the probability vector of scenes from the output layer of ~\cite{zhou2014learning} trained on the Places365 dataset \cite{zhou2017places}. The queries which are sets of concepts are mapped to a similar feature space as $k$-hot vectors ($k$ being the number of concepts in a query) to facilitate image-query similarity. Thus, both images and queries and/or elements in private set are represented using a $|\Ccal|$-dimensional vector where $\Ccal$ is the universe of concepts.

While more complex queries and methods of learning joint embedding between text and images could be employed, we chose simpler alternatives to stick to the main focus area of this work. 

We initialize the parameters randomly and train the mixture model for 20 epochs. As in \cite{tschiatschek2014learning}, we use 1 - V-ROUGE in the max-margin learning ( discussed in Section 5.2) and update parameters using Nesterov's accelerated gradient descent.

\end{document}